\newtheorem{definition}{Definition}[section]
\newtheorem{theorem}{Theorem}[section]
\newtheorem{corollary}{Corollary}[section]
\newtheorem{lemma}{Lemma}[section]
\newtheorem{remark}{Remark}[section]
\newtheorem{assumption}{Assumption}[section]
\newcommand{\E}{\mathbb{E}}
\newcommand{\R}{\mathbb{R}}
\newcommand{\citep}{\cite}
\newcommand{\citet}{\cite}
\newcommand{\citealt}{\cite}
\title{Bayesian Inference Forgetting}
\author{Shaopeng~Fu\thanks{S. Fu and F. He contributed equally.}~\thanks{S. Fu, F. He, and D. Tao are with the University of Sydney. Email: 
\href{mailto:shaopengfu15@gmail.com}{shaopengfu15@gmail.com}, 
\href{mailto:fengxiang.f.he@gmail.com}{fengxiang.f.he@gmail.com}, and 
\href{mailto:dacheng.tao@gmail.com}{dacheng.tao@gmail.com}.}
 \and Fengxiang~He\footnotemark[1]~\footnotemark[2] \and Yue~Xu\thanks{Y. Xu is with University of Science and Technology of China. This work was completed when she was intern at the University of Sydney. Email: \href{mailto:xuyue502@mail.ustc.edu.cn}{xuyue502@mail.ustc.edu.cn}.}
  \and Dacheng~Tao\footnotemark[2]\\
}
\date{}
\begin{document}

\maketitle

\begin{abstract}
The right to be forgotten has been legislated in many countries but the enforcement in machine learning would cause unbearable costs: companies may need to delete whole models learned from massive resources due to single individual requests. Existing works propose to remove the knowledge learned from the requested data via its influence function which is no longer naturally well-defined in Bayesian inference. This paper proposes a {\it Bayesian inference forgetting} (BIF) framework to realize the right to be forgotten in Bayesian inference. In the BIF framework, we develop forgetting algorithms for variational inference and Markov chain Monte Carlo. We show that our algorithms can provably remove the influence of single datums on the learned models. Theoretical analysis demonstrates that our algorithms have guaranteed generalizability. Experiments of Gaussian mixture models on the synthetic dataset and Bayesian neural networks on the real-world data verify the feasibility of our methods. The source code package is available at \url{https://github.com/fshp971/BIF}.
\\
\\
\textbf{Keywords:} Certified knowledge removal, Bayesian inference, variational inference, and Markov chain Monte Carlo.
\end{abstract}

\section{Introduction}

The right to be forgotten refers to the individuals' right to ask data controllers to delete the data collected from them. It has been recognized in many countries through legislation, including the European Union's General Data Protection Regulation (2016) and the California Consumer Privacy Act (2018). However, enforcement may result in unbearable costs for AI industries. When a single data deletion request comes, the data controller needs to delete the whole machine learning model which might have cost massive amounts of resources, including data, energy, and time.

To address this issue, recent works \citep{guo2019certified, golatkar2020eternal} propose to {\it forget} the influence of the requested data on the learned models via the influence function \citep{cook1982residuals,huber2004robust,koh2017understanding}: 
\begin{gather*}
\mathcal{I}(x) = -\nabla^{-2}_\theta L(\theta,S) \nabla^T_\theta \ell(\theta, x),
\end{gather*}
where $L(\theta,S)=\frac{1}{n} \sum_{i=1}^n \ell(\theta, x_i)$ is the loss function in the optimization. This approach can provably remove knowledge learned from the requested data for optimization-based machine learning subject to some conditions on convexity and smoothness of the loss function.

However, the loss function $L$ is not always naturally well-defined.
Bayesian inference approximates the posterior of probabilistic models, where a loss function may be in a different form or even do not exist.
In this case, the influence function is no longer well-defined, and therefore, existing forgetting methods become invalid.
Bayesian inference enables a wide spectrum of machine learning algorithms, such as probabilistic matrix factorization \citep{ma2014variational,salakhutdinov2008bayesian,yong2017robust}, topic models \citep{hoffman2010online,blei2003latent,zhang2020deep}, probabilistic graphic models \citep{koller2009probabilistic,beal2002infinite,ye2020optimizing} and Bayesian neural networks \citep{blundell2015weight,kingma2015variational,neal1992bayesian,titterington2004bayesian,pearce2020uncertainty,roth2018bayesian,zhang2020cyclical}.
Overlooking Bayesian inference in developing forgetting algorithms leaves a considerable proportion of machine learning algorithms costly to protect the right to be forgotten.

In this paper, we design an energy-based {\it Bayesian inference forgetting} (BIF) framework to realize the right to be forgotten in Bayesian inference. We define {\it Bayesian inference influence functions} based on the pre-defined energy functions, which help provably characterize the influence of some specific datums on the learned models. We prove that these characterizations are rigorous in first-order approximation subject to some mild assumptions on the local convexity and smoothness of the energy functions around the convergence points. Specifically, the approximation error is not larger than the order $O(1/n^2)$, where $n$ is the training sample size, which is negligible when the training sample set is sufficiently large. The BIF framework proposes to delete the Bayesian inference influence function of the requested data. We prove that BIF realizes {\it $\varepsilon$-certified knowledge removal}, a new notion defined to evaluate the performance of knowledge removal.

In the BIF framework, we develop {\it certified knowledge removal} algorithms for two canonical Bayesian inference algorithms, variational inference \citep{jordan1999introduction, blei2017variational, JMLR:v14:hoffman13a, blei2003latent} and Markov chain Monte Carlo (MCMC) \citep{geman1984stochastic,ma2015complete,welling2011bayesian,ding2014bayesian,chen2014stochastic,patterson2013stochastic}.

\begin{itemize}
\item
\textbf{Variational inference forgetting.} We show that the evidence lower bound function (ELBO; \citealt{blei2017variational}) plays the role of energy function. Based on the ELBO, a {\it variational inference influence function} is defined, which helps deliver the {\it variational inference forgetting} algorithm. We prove that the $\varepsilon$-certified knowledge removal is guaranteed on the mean-field Gaussian variational family.

\item 
\textbf{MCMC forgetting.} We formulate a new energy function for MCMC, based on which an {\it MCMC influence function} is designed. The new influence function then delivers our the {\it MCMC forgetting} algorithm. We prove that the $\varepsilon$-certified knowledge removal is achieved when the training sample set is sufficiently large.
The MCMC forgetting algorithm is also applicable in the stochastic gradient MCMC.
\end{itemize}
 
We theoretically investigated whether the BIF algorithms affect the generalizabilities of the learned Bayesian models. Based on the PAC-Bayesian theory \citep{mcallester1998some, mcallester1999pac, mcallester2003pac}, we prove that the generalization bounds are of the same order $O(\sqrt{(\log n)/n})$ ($n$ is the training sample size), no matter whether the Bayesian models are processed by BIF. Moreover, the introduced difference of the generalization bounds is not larger than the order $O(1/ n)$.

From the empirical aspect, we conduct systematic experiments to verify the forgetting algorithms.
We applied the BIF algorithms to two scenarios:
(1) Gaussian mixture models for clustering on synthetic data;
and (2) Bayesian neural networks for classification on the real-world data.
Every scenario involves variational inference, SGLD, and SGHMC.
The results suggest that the BIF algorithms effectively remove the knowledge learned from the requested data while protecting other information intact, which are in full agreement with our theoretical results.
To secure reproducibility, the source code package is available at \url{https://github.com/fshp971/BIF}.

The rest of this paper is organized as follows. Section \ref{sec:related} reviews the related works. Section \ref{sec:preliminary} provides the necessary preliminaries of Bayesian inference. 
Section \ref{sec:main_bif} presents our main result, the energy-based BIF framework.
Section \ref{sec:bif_algorithm} provides the forgetting algorithms dveloped under the BIF framework for variational inference and MCMC.
Section \ref{sec:generalization} studies the generalization abilities of the proposed algorithms.
Section \ref{sec:experiments} presents the experiment results, and Section \ref{sec:conclusion} concludes the paper.

\section{Related Works}
\label{sec:related}

\textbf{Forgetting.}
The concept ``make AI forget you" is first proposed by {Ginart \it et al.} \citet{ginart2019making}, which also designs forgetting algorithms for $k$-means. {Bourtoule \it et al.} \citet{bourtoule2019machine} then propose a ``sharded, isolated, sliced, and aggregated" (SISA) framework to approach low-cost knowledge removal
via the following three steps: (1) divide the whole training sample set into multiple disjoint shards; (2) train models in isolation on each of these shards; (3) retain the affected model when a request to unlearn a training point arrives.
{Izzo \it et al.} \citet{izzo2020approximate} develop a projective residual update (PRU) method that replaces the update vector in data removal with its projection on a low-dimensional subspace, which reduces the time complexity of data deletion to scale only linearly in the dimension of data.
{Guo \it et al.} \citet{guo2019certified} and {Golatkar \it et al.} \citet{golatkar2020eternal} propose a {\it certified} removal based on the influence function \citep{cook1982residuals,huber2004robust,koh2017understanding} of the requested data. Specifically, influence function characterizes the influence of a single datum on the learned model in the empirical risk minimization (ERM) regime. The definition of influence function relies on the gradient and Hessian of the objective function which does not naturally exist in Bayesian inference. Other advances include that {Garg \it et al.} \citet{garg2020formalizing}, {Baumhauer \it et al.} \citet{baumhauer2020machine},
and {Sommer \it et al.} \citet{sommer2020towards}.

We acknowledge that a concurrent work \citep{nguyen2020variational} also studied the knowledge removal in Bayesian models. The authors employed variational inference to minimize the KL-divergence between the approximate posterior after unlearning and the posterior of the model retrained on the remaining data. However, this approach only applies to parametric models, such as those obtained by variational inference, while many non-parametric models are un-touched, including MCMC. In contrast, our BIF covers all Bayesian inference methods. Moreover, no theoretical guarantee on the knowledge removal was established, which would be necessary to meet the legal requirements.

\textbf{Markov chain Monte Carlo (MCMC).}
{Hastings} \citet{hastings1970monte} introduces a two-step sampling method named MCMC, which first construct a Markov chain and then draw samples according to the state of the Markov chain. It is proved that the drawn samples will converge to that from the desired distribution. 
Since that, many improvements have been made on MCMC, including the Gibbs sampling method \citep{geman1984stochastic} and hybrid Monte Carlo \citep{duane1987hybrid}.
{Welling and Teh} \citet{welling2011bayesian} propose a new framework named stochastic gradient Langevin dynamics (SGLD) that by adding a proper noise to a standard stochastic gradient optimization algorithm \citep{robbins1951stochastic}, the iterates will converge to samples from the true posterior distribution as the stepsize annealed.
{Ahn \it et al.} \citet{ahn2012bayesian} extend the algorithm based on the SGLD by leveraging the Bayesian Central Limit Theorem, to improve the slow mixing rate of SGLD. Inspired by the idea of a thermostat in statistical physics, {Ding \it et al.} \citet{ding2014bayesian} leverage a small number of additional variables to stabilize momentum fluctuations caused by the unknown noise.
{Chen \it et al.} \citet{chen2014stochastic} extend the Hamiltonian Monte Carlo (HMC) to Stochastic gradient HMC (SGHMC) by adding a friction term, which enables SGHMC sample from the desired distributions without applying the MH rule.
Based on Langevin Monte Carlo methods, {Patterson and Teh} \citet{patterson2013stochastic} propose Stochastic gradient Riemannian Langevin dynamics by updating parameters according to both the stochastic gradients and additional noise which forces it to explore the full posterior. {Ma \it et al.} \citet{ma2015complete} introduce a general recipe for creating stochastic gradient MCMC samplers (SGMCMC), which is based on continuous Markov processes specified via two matrices and proved to be complete.

\textbf{Variational inference.}
{Jordan \it et al.} \citet{jordan1999introduction} introduce the use of variational inference in graphical models \citep{jordan1998learning} such as Bayesian networks and Markov random fields, in which variational inference is employed to approximate the target posterior with families of parameterized distributions.
{Blei \it et al.} \citet{blei2003latent} apply variational inference to local Dirichlet allocation (LDA), which is used to modeling the text corpora.
{Sung \it et al.} \citet{sung2008latent} propose a general variational inference framework for conjugate-exponential family models, in which the model parameters except latent variables are integrated in an exact manner, while the latent variables are approximated by a first-order algorithm.
By using stochastic optimization \citep{robbins1951stochastic}, a technique that follows noisy estimates of the gradient of the objective, {Hoffman \it et al.} \citet{JMLR:v14:hoffman13a} derive stochastic variational inference which iterates between subsampling the data and adjusting the hidden structure based only on the subsample.
{Paisley \it et al.} \citet{paisley2012variational} propose an algorithm that reduces the variance of the stochastic search gradient by using control variates based on stochastic optimization and allows for direct optimization of variational lower bound. {Titsias and L\'{a}zaro-Gredilla} \citet{aueb2015local} propose local expectation gradients, in which the stochastic gradient estimation problem over multiple variational parameters is decomposed into smaller subtasks, and each sub-task focus on the most relevant part of the variational distribution. {Zhang \it et al.} \citet{zhang2018advances} reviews recent advances of variational inference, from four aspects, scalable, generic, accurate, and amortized variational inference.

\textbf{Bayesian Neural Networks (BNNs).}
Bayesian inference is first applied to neural networks by {Neal} \citet{neal1992bayesian}, in which Hybrid Monte Carlo is employed to inference the posteriors of neural networks' parameters.
{Blundell \it et al.} \citet{blundell2015weight} propose an efficient backpropagation-compatible algorithm to calculate the gradient of the variational neural networks, which expands the applicable domain of variational inference to deep learning. Based on that, {Kingma \it et al.} \citet{kingma2015variational} further propose a local reparameterization technique to reduce the variance of stochastic gradients of variational neural networks. They also develop the variational dropout that can automatically learn the dropout rates.
{Pearce \it et al.} \citet{pearce2020uncertainty} introduce a procedures family termed randomized MAP sampling (RMS), which includes randomize-then optimize and ensemble sampling, and then realize Bayesian inference for neural networks via ensembling under the RMS family.
{Roth and Pernkopf} \citet{roth2018bayesian} successfully apply the Dirichlet process prior to BNNs, which enforces the sharing of the network weights and reduces the number of parameters.
{Zhang \it et al.} \citet{zhang2020cyclical} propose a cyclical step-size schedule for SGMCMC to learn the multimodal posterior of a Bayesian neural network.

\section{Preliminaries}
\label{sec:preliminary}


Suppose a data set is $S = \{z_1, z_2, \cdots, z_n\}$, where $z_i \in \mathcal{Z}$ is a datum and $n$ is the sample size. One may hope to fit the dataset $S$ by a parametric probabilistic model $p(z|\theta)$ where $\theta \in \Theta$ is the parameter.

\textbf{Bayesian inference} {\citep{welling2011bayesian, blei2017variational, geman1984stochastic}}
infers the posterior of the probabilistic model
\begin{gather*}
    p(\theta|S) = \frac{p(\theta) \prod_{i=1}^n p(z_i|\theta)}{p(S)},
    \label{equ:posterior}
\end{gather*}
where $p(\theta)$ is the prior of model parameter $\theta$ and 
$p(S) = \int p(\theta) \prod_{i=1}^n p(z_i|\theta) \mathrm{d}\theta$
is the normalization factor. In most cases, we do not know the closed form of the normalization factor $p(S)$. Two canonical solutions are variational inference and MCMC.

\textbf{Variational inference} {\citep{JMLR:v14:hoffman13a, blei2017variational}}
employs two steps to infer the posterior:
\begin{enumerate}
    \item Define a family of distributions, $\mathcal{Q} = \{q_\lambda | \lambda \in \Lambda\}$,
    termed variational family,
    where $\lambda$ is the variational distribution parameter.
    
    \item Search in the variational family $\mathcal{Q}$ for the distribution closest to the posterior $p(\theta|S)$ measured by KL divergence, {\it i.e.},
	$\min_\lambda \mathrm{KL}(q_\lambda(\theta) \| p(\theta|S))$.
\end{enumerate}

Minimizing the KL divergence is equivalent to maximizing the evidence lower bound (ELBO  \citealt{blei2017variational}):
\begin{equation*}
\label{equ:elbo}
\text{ELBO}(\lambda,S) = \E_{q_\lambda} \log p(\theta, S) - \E_{q_\lambda} \log q_\lambda(\theta).
\end{equation*}

A popular variational family is the mean-field Gaussian family \citep{blei2017variational,kingma2013auto,blundell2015weight,kingma2015variational}.
A mean-field Gaussian variational distribution $q_\lambda$ has the following structure,
\begin{gather*}
    q_\lambda = \mathcal{N}(\mu, \sigma^2 I),
\end{gather*}
where $\lambda = (\mu_1,\cdots,\mu_d,\sigma_1,\cdots,\sigma_d)$.

\textbf{MCMC} {\citep{ma2015complete, hastings1970monte}}
constructs a Markov chain whose stationary distribution is the targeted posterior.
The Markov chain performs a Monte Carlo procedure to sample the posterior. However, MCMC can be prohibitively time-consuming in large-scale models and large-scale data. To address the issue, stochastic gradient MCMC (SGMCMC  \citealt{ma2015complete}) introduces stochastic gradient estimation on mini-batches \citep{robbins1951stochastic} to MCMC.
In this paper, we study two major SGMCMC methods, stochastic gradient Langevin dynamics (SGLD) \citep{welling2011bayesian} and stochastic gradient Hamiltonian Monte Carlo (SGHMC) \citep{chen2014stochastic}.

SGLD updates weight $\theta_t$ as follows,
\begin{gather*}
\theta_{t+1} = \theta_t - \varepsilon_t \nabla_\theta \widetilde{U}(\theta_t) + \mathcal{N}(0,2 \varepsilon_t), \\
\widetilde{U}(\theta)=-\frac{|S|}{|\widetilde{S}|}\sum_{z\in \widetilde{S}} \log p(z|\theta) - \log p(\theta),
\end{gather*}
where $\varepsilon_t \in \R^+$ is the learning rate and $\widetilde{S}\subset S$ is the mini-batch.
Usually, one needs to anneal $\varepsilon_t$ to small values.

SGHMC introduces a momentum $v_t$ to the weight update:
\begin{align*}
&\theta_{t+1} = \theta_{t} + v_{t}, \\
&v_{t+1} = (1-\alpha_t)v_t -\eta_t \nabla\widetilde{U}(\theta_t) + \mathcal{N}(0, 2\alpha_t\eta_t),
\end{align*}
where $\alpha_t \propto \sqrt{\eta_t}$ (see eqs. (14), (15) in \citet{chen2014stochastic}),
and the momentum is initialized as $v_0 \sim \mathcal{N}(0,\eta_0)$.
Moreover, when $\eta_t$ is decay by a factor of $c$, $\alpha_t$ needs to be decayed by a factor of $\sqrt{c}$.

\section{Bayesian Inference Forgetting}
\label{sec:main_bif}
This section presents our Bayesian inference forgetting framework.
We first define $\varepsilon$-certified knowledge removal in Bayesian inference, which quantitates the knowledge removal performance. Then, we study the energy functions in Bayesian inference.
The forgetting algorithm for Bayesian inference is designed based on the energy functions.

\subsection{$\varepsilon$-Certified Knowledge Removal}
\label{sec:prob_def}

Suppose a Bayesian inference method learns a probabilistic model $\hat p_S$ on the sample set $S$. A client requests to remove her/his data $S' \subset S$. A trivial and costly approach is re-training the model on data $S - S'$. Suppose the re-learned probabilistic model is $\hat p_{S - S'}$.

A forgetting algorithm $\mathcal{A}$ is designed to process the distribution $\hat p_{S}$ as follows to estimate the distribution $\hat p_{S- S'}$,
\begin{gather*}
    \hat p^{-S'}_{S} = \mathcal{A}(\hat p_{S},S').
\end{gather*}
We term the distribution $\hat p^{-S'}_{S}$ as the {\it processed model}.

In order to meet the regulation requirements, the algorithm needs to achieve {\it $\varepsilon$-certified knowledge removal} as follows.


\begin{definition}[$\varepsilon$-certified knowledge removal in Bayesian inference]
For any subset $S' \subset S$ and $\varepsilon > 0$, we call $\mathcal{A}$ performs $\varepsilon$-certified knowledge removal, if
\begin{gather*}
    \mathrm{KL}(\hat p^{-S'}_S,\hat p_{S-S^\prime}) \leq \varepsilon.
\end{gather*}
\end{definition}

\subsection{Energy Functions in Bayesian Inference}

Bayesian inference can be formulated as minimizing an energy function, echos the free-energy principle in physics. This section studies the energy functions in Bayesian inference.

Suppose $F(\gamma,S)$ is the energy function of a probabilistic distribution $\chi(\gamma)$ parametrized by $\gamma \in \Gamma \subset \R^K$ over the model parameter space $\Theta$.
A typical energy function $F(\gamma,S)$ has the following structure,
\begin{gather}
    F(\gamma, S) = \sum_{i=1}^n h(\gamma,z_i) + f(\gamma),
    \label{equ:prob_obj_S}
\end{gather}
where $h(\gamma,z)$ is a function defined on $\Gamma\times\mathcal{Z}$ that characterizes the influence from individual datums, 
$f(\gamma)$ is a function defined on $\Gamma$ that characterizes the influence from the prior of $\gamma$. Usually, a lower energy corresponds to a smaller distance between the two distributions.
In variational inference, ELBO plays as an energy function. In MCMC, we construct an energy function.
Please see Sections \ref{sec:variational_if} and \ref{sec:mcmc_if} for more details.

Similarly, the energy function for $S-S^\prime$ is $F(\gamma,S-S^\prime)$.
It can be re-arranged as follows,
\begin{gather}
    F(\gamma, S-S^\prime)
    = F(\gamma,S) - \sum_{z\in S^\prime} h(\gamma,z). \label{equ:prob_obj_rm_set}
\end{gather}

\subsection{Forgetting Algorithm for Bayesian inference}

Based on the energy function, the forgetting algorithm for Bayesian inference is then designed and shown as fig. \ref{fig:bif_flowchart}.

\begin{figure}[h]
\centering
\tikzstyle{arrow} = [->,>=stealth]
\tikzstyle{mycircle} = [circle, minimum width=1.2cm]
\newcommand{\mydist}{1.6cm}
\begin{tikzpicture}[node distance=\mydist]

    \node[draw, rectangle,
          minimum width=4.8cm,
          minimum height=3.4cm,
          xshift=3.4cm,
          yshift=-0.65cm
        ] (name) {};

    \node[draw, mycircle] (z_j) {$z_j$};
    \node[coordinate, right of=z_j, xshift=0.2cm] (point1) {};
    \node[draw, mycircle, right of=point1] (derivative) {$\frac{\partial \hat\gamma^{-z_j}_S(0)}{\partial \tau}$};
    
    \node[draw, mycircle, above of=point1, yshift=0.2cm] (S) {$S$};

    \node[draw, mycircle, below of=z_j] (p_0) {$\hat p_S$};
    \node[draw, mycircle, right of=p_0, xshift=0.2cm] (gamma_0) {$\gamma_S$};
    \node[coordinate, right of=gamma_0] (point2) {};

    \node[draw, mycircle, right of=point2] (gamma_1) {$\gamma^{-z_j}_S$};
    \node[draw, mycircle, right of=gamma_1, xshift=0.2cm] (p_1) {$\hat p^{-z_j}_S$};

    \draw [arrow] (z_j) -- (derivative);
    \draw [arrow] (p_0) -- (gamma_0);
    \draw [arrow] (gamma_0) -- (derivative);
    \draw [arrow] (S) -- (derivative);
    \draw [arrow] (gamma_0) -- (gamma_1);
    \draw [arrow] (derivative) -- (gamma_1);
    \draw [arrow] (gamma_1) -- (p_1);
\end{tikzpicture}
\caption{The workflow of Bayesian inference forgetting framework that removes the influence of datum $z_j \in S$ from the distribution $\hat p_S$.
$\hat p_S$ is the distribution learned on the training sample set $S$ and is parameterized by $\gamma_S$.
$\hat p^{-z_j}_S$ is the processed distribution and is parameterized by $\gamma^{-z_j}_S$.
$\gamma^{-z_j}_S = \gamma_S - \frac{\partial \hat\gamma^{-z_j}_S(0)}{\partial \tau}$.}
\label{fig:bif_flowchart}
\end{figure}
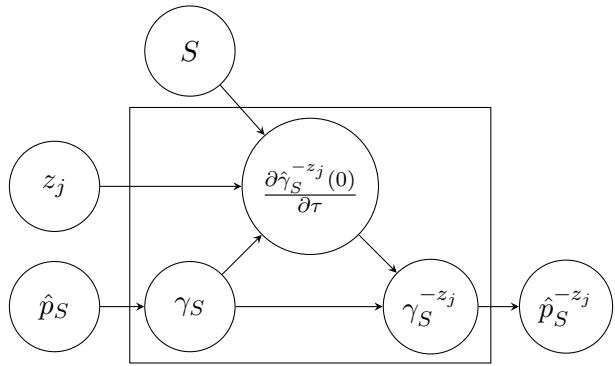

Suppose that the probabilistic model $\hat p_S$ learned on the training sample set $S$ is parameterized by $\gamma_S$.
Then, $\gamma_S$ is a local minimizer of the energy function $F(\gamma,S)$.
Similarly, suppose that the model $\hat p_{S-S'}$ learned on $S-S'$ is parameterized by $\gamma_{S-S'}$.
Then $\gamma_{S-S'}$ is a local minimizer of the energy function $F(\gamma,S-S')$.
Also, we assume the processed model $\hat p^{-S'}_S$ is parameterized by $\gamma^{-S'}_S$.

Mathematically, the forgetting in Bayesian inference can be realized by approaching the local minimizer $\gamma_S$ of the energy function $F(\gamma,S)$ to the local minimizer $\gamma_{S-S^\prime}$ of the energy function $F(\gamma,S - S')$.

We start with 
a simple case that remove the influence learned from a single datum $z_j \in S$. The energy function for the posterior $p(\theta|S-\{z_j\})$ is as follows,
\begin{align}
    F(\gamma, S-\{z_j\})
    &= \sum_{i=1}^n h(\gamma,z_i) + f(\gamma) - h(\gamma,z_j).
    \label{equ:prob_obj_rm_z0}
\end{align}
Let $\gamma_{S-\{z_j\}}$ be a local minimizer of $F(\gamma,S-\{z_j\})$. Then, 
\begin{gather*}
    \nabla_{\gamma}F(\gamma_S, S) = \sum_{i=1}^n\nabla_{\gamma}h(\gamma_S,z_i) + \nabla_{\gamma}f(\gamma_S) = 0, \\
    \nabla_{\gamma}F(\gamma_{S-\{z_j\}},S-\{z_j\}) = \nabla_{\gamma}F(\gamma_{S-\{z_j\}},S) - \nabla_{\gamma}h(\gamma_{S-\{z_j\}},z_j) = 0.
\end{gather*}
Notice that the structures of the above two equations only differ in the term $\nabla_{\gamma}h(\gamma,z_j)$. They are two special cases of the following equation,
\begin{gather}
    \nabla_{\gamma}F(\gamma,S) + \tau \cdot \nabla_{\gamma}h(\gamma,z_j) = 0,
    \label{equ:prob_diff}
\end{gather}
where $\tau \in [-1,0]$.
Eq. (\ref{equ:prob_diff}) induces an implicit mapping $\hat\gamma^{-z_j}_S:[-1,0]\to\Gamma$ such that $\hat\gamma^{-z_j}_S(0)=\gamma_S$.
For any $\tau \in [-1,0]$, $\hat\gamma^{-z_j}_S(\tau)$ is a solution of eq. (\ref{equ:prob_diff}).
Thus, $\hat\gamma^{-z_j}_S(\tau)$ is also a critical point of the following function,
\begin{gather}
    F_{-z_j,\tau}(\gamma,S) = F(\gamma,S) + \tau \cdot h(\gamma,z_j).
    \label{equ:prob_obj_eps}
\end{gather}
When $\tau=-1$, $F_{-z_j,-1}(\gamma,S)$ is exactly $F(\gamma,S-\{z_j\})$, which indicates that $\hat\gamma^{-z_j}_S(-1)$ is a critical point of $F(\gamma,S-\{z_j\})$, and is possible to be a local minimizer. We assume that $\hat\gamma^{-z_j}_S(-1)$ is really a local minimizer of $F(\gamma,S-\{z_j\})$.
We also let $\gamma_{S-\{z_j\}} = \hat\gamma^{-z_j}_S(-1)$. Then, $\gamma_{S-\{z_j\}}$ can be approached based on $\gamma_S$ in a first-order approximation manner,
\begin{gather*}
    \gamma_{S-\{z_j\}} = \hat\gamma^{-z_j}_S(-1)
    \approx \hat\gamma^{-z_j}_S(0) - \frac{\partial\hat\gamma^{-z_j}_S(0)}{\partial\tau}
    = \gamma_S - \frac{\partial\hat\gamma^{-z_j}_S(0)}{\partial\tau}.
\end{gather*}
Thus, when a request of removing a datum $z_j$ comes, our forgetting algorithm will process the request by replacing $\gamma_S$ with $\gamma^{-z_j}_S$, in which
\begin{gather*}
\gamma^{-z_j}_S := \gamma_S - \frac{\partial\hat\gamma^{-z_j}_S(0)}{\partial\tau}.
\end{gather*}

\subsection{Theoretical Guarantee}
Under some mild assumptions, we prove that {(1) the mapping $\hat\gamma^{-z_j}_S$ uniquely exists;} (2) $\gamma_{S-\{z_j\}}=\hat\gamma_S(-1)$ is the global minimizer of $F(\gamma,S-\{z_j\})$; and (3) the approximation error between $\gamma^{-z_j}_S$ and $\gamma_{S-\{z_j\}}$ is not larger than order $O\left(1/n^2\right)$, where $n$ is the training set size.

We first introduce two definitions.

\begin{definition}[compact set]
A set is called compact if and only if it is closed and bounded.
\end{definition}

\begin{definition}[strong convexity]
\label{def:strong_convexity}
A differentiable function $f$ is called $c$-strongly convex, if and only if there exists a constant real $c>0$, such that for any two points $z$ and $z^\prime$ in the domain of the function $f$, the following inequality holds,
\begin{gather*}
(\nabla f(z) - \nabla f(z^\prime))^T (z-z^\prime) \geq \frac{c}{2} \|z-z^\prime\|^2_2.
\end{gather*}
When $f$ is second-order continuously differentiable, for any $c>0$, the following three propositions are equivalent:
\begin{enumerate}
    \item $f$ is $c$-strongly convex;
    \item for any $z$ from the domain of the function $f$, the smallest eigenvalue of the Hessian matrix $\nabla^2 f(z)$ is at least $c$, {\it i.e.}, $\lambda_{\min} \left( \nabla^2 f(z) \right) \geq c$;
    \item for any $z$ from the domain of $f$, the matrix $(\nabla^2 f(z) - cI)$ is positive semi-definite, {\it i.e.}, $\nabla^2 f(z) \succeq cI$.
\end{enumerate}
\end{definition}

The theoretical guarantees rely on the following two assumptions.

\begin{assumption}
Both of the supports $\Gamma$ and $\mathcal{Z}$ are compact.
\label{ass:compact}
\end{assumption}

Usually, the energy functions $F(\gamma,S)$ and $F(\gamma,S-\{z_j\})$ are close to each other.
Thus, it is reasonable to assume that the two local minimizers $\gamma$ and $\gamma_{S-\{z_j\}}$ fall in the same local region.
Hence, we only need to analyze on a limited local region, which justifies Assumption \ref{ass:compact}.

\begin{assumption}
\label{ass:cont_diff_scox}
Suppose $f(\gamma)$ and $h(\gamma,z)$ are the two inputs of the energy functions $F(\gamma,S)$. We assume that
\begin{enumerate}
    \item $f(\gamma)$ is $3$rd-order continuously differentiable and $c_f$-strongly convex on $\Gamma$,
    \item $h(\gamma,z)$ is $3$rd-order continuously differentiable with respect to $\gamma$ in $\Gamma\times\mathcal{Z}$. Besides, $\forall z\in\mathcal{Z}$, $h(\gamma,z)$ is $c_h(z)$-strongly convex with respect to $\gamma$, where $c_h:\mathcal{Z}\to\R$ is a continuous function.
\end{enumerate}
\end{assumption}

Assumption \ref{ass:cont_diff_scox} is mild in gradient-based optimization.
Together with Assumption \ref{ass:compact}, they assume the local strong convexity and smoothness of the energy functions.

We then prove that the energy function $F_{-z_j,\tau}$ is strongly convex, as the following lemma.

\begin{lemma}
\label{lem:scox_eps}
Suppose Assumption \ref{ass:cont_diff_scox} holds. For any $\tau \in [-1,0]$, $F_{-z_j,\tau}(\gamma,S)$ (eq. (\ref{equ:prob_obj_eps})) is strongly convex with respect to {the parameter} $\gamma$.
\end{lemma}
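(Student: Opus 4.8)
The plan is to reduce the claim to the Hessian characterization of strong convexity recorded in Definition \ref{def:strong_convexity}, after rewriting $F_{-z_j,\tau}$ so that the perturbation by $\tau\cdot h(\gamma,z_j)$ is absorbed into a \emph{non-negative} reweighting of the $z_j$ term. Concretely, I would substitute the structural form eq.~(\ref{equ:prob_obj_S}) of $F(\gamma,S)$ into the definition eq.~(\ref{equ:prob_obj_eps}) of $F_{-z_j,\tau}$ and regroup to obtain $F_{-z_j,\tau}(\gamma,S)=\sum_{i\neq j}h(\gamma,z_i)+(1+\tau)\,h(\gamma,z_j)+f(\gamma)$. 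The single observation that drives everything is that $\tau\in[-1,0]$ forces $1+\tau\in[0,1]$, so the coefficient of the $z_j$ term is non-negative.

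Next I would invoke the smoothness part of Assumption \ref{ass:cont_diff_scox}: since $f$ and each $h(\cdot,z)$ are third-order (hence, in particular, twice) continuously differentiable in $\gamma$, the function $F_{-z_j,\tau}$ is twice continuously differentiable in $\gamma$, so I may pass to the Hessian and use the equivalence in Definition \ref{def:strong_convexity} between $c$-strong convexity and $\nabla^2\succeq cI$. Differentiating twice in $\gamma$ gives $\nabla^2_\gamma F_{-z_j,\tau}(\gamma,S)=\sum_{i\neq j}\nabla^2_\gamma h(\gamma,z_i)+(1+\tau)\,\nabla^2_\gamma h(\gamma,z_j)+\nabla^2_\gamma f(\gamma)$.

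Now I would apply the strong-convexity part of Assumption \ref{ass:cont_diff_scox} termwise: $\nabla^2_\gamma f(\gamma)\succeq c_f I$ and $\nabla^2_\gamma h(\gamma,z_i)\succeq c_h(z_i)I\succeq 0$ for every $i$ (recall $c_h(z)>0$ since $h(\cdot,z)$ is $c_h(z)$-strongly convex). Because $1+\tau\geq 0$, multiplying the positive semidefinite matrix $\nabla^2_\gamma h(\gamma,z_j)$ by $(1+\tau)$ keeps it positive semidefinite, and summing positive semidefinite matrices with the positive definite $\nabla^2_\gamma f(\gamma)$ yields $\nabla^2_\gamma F_{-z_j,\tau}(\gamma,S)\succeq c_f I$. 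By the equivalence in Definition \ref{def:strong_convexity}, $F_{-z_j,\tau}(\cdot,S)$ is $c_f$-strongly convex, uniformly in $\tau\in[-1,0]$, which is the claim. If one wants the sharpest constant, the same computation gives the lower bound $\bigl(c_f+\sum_{i\neq j}c_h(z_i)+(1+\tau)c_h(z_j)\bigr)I$.

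The computation itself is routine; the one place that requires care -- and the only real obstacle -- is the sign bookkeeping for $\tau$. A naive treatment that keeps the perturbation as $\tau\,\nabla^2_\gamma h(\gamma,z_j)$ with $\tau<0$ would contribute a negative-definite piece, and without an \emph{upper} bound on $\nabla^2_\gamma h$ (which Assumption \ref{ass:cont_diff_scox} does not supply) one could not conclude convexity. The regrouping into the $(1+\tau)$ coefficient is exactly what avoids this: it exploits the fact that removing at most one full copy of $h(\gamma,z_j)$ can never drive the surviving weight below zero, so the perturbed energy stays a non-negative combination of convex functions plus the strongly convex prior term $f$.
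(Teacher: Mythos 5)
Your proof is correct and follows essentially the same route as the paper's: the paper likewise rewrites $F_{-z_j,\tau}(\gamma,S)=F(\gamma,S-\{z_j\})+(1+\tau)h(\gamma,z_j)$ and concludes from $1+\tau\geq 0$ that the sum of strongly convex (or zero) terms is strongly convex. Your only addition is making the final step explicit through the Hessian characterization in Definition \ref{def:strong_convexity}, which yields the uniform modulus $c_f$ (indeed the sharper bound $c_f+\sum_{i\neq j}c_h(z_i)+(1+\tau)c_h(z_j)$) that the paper leaves implicit.
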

\begin{proof}
We rewrite the energy function $F_{-z_j,\tau}$ as follows,
\begin{gather*}
F_{-z_j,\tau}(\gamma,S)=F(\gamma,S-\{z_j\}) + (1+\tau)h(\gamma,z_j).
\end{gather*}
Apparently, $F(\gamma,S-\{z_j\})$ is strongly convex. Since $\tau\in[-1,0]$, we have that $1+\tau\geq 0$. Thus, $(1+\tau)h(\gamma,z_j)$ is either strongly convex or equal to zero. Hence, we have that $F_{-z_j,\tau}(\gamma,S)$ is strongly convex with respect to $\gamma$.

The proof is completed.
\end{proof}

We are now able to prove that eq. (\ref{equ:prob_diff}) really induces a continuous mapping $\hat\gamma^{-z_j}_S$, and $\hat\gamma^{-z_j}_S(-1)$ is the global minimizer of $F(\gamma,S-\{z_j\})$.

\begin{theorem}
\label{thm:imp_fun}
Suppose assumption \ref{ass:cont_diff_scox} holds. Then, eq. (\ref{equ:prob_diff}) induces a unique continuous function $\hat\gamma^{-z_j}_S:[-1,0]\to\Gamma$ such that for any $\tau\in[-1,0]$, $\hat\gamma^{-z_j}_S(\tau)$ is the solution of eq. (\ref{equ:prob_diff}) and the global minimizer of eq. (\ref{equ:prob_obj_eps}) with respect to $\gamma$.
\end{theorem}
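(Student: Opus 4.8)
The plan is to read eq.~(\ref{equ:prob_diff}) as the first-order stationarity condition $\nabla_\gamma F_{-z_j,\tau}(\gamma,S)=0$ for the family of energies in eq.~(\ref{equ:prob_obj_eps}), and then to combine the Implicit Function Theorem (to produce a continuous solution curve in $\tau$) with the strong convexity from Lemma~\ref{lem:scox_eps} (to force uniqueness and to upgrade ``stationary point'' to ``global minimizer''). Concretely, I would set $G(\gamma,\tau):=\nabla_\gamma F(\gamma,S)+\tau\,\nabla_\gamma h(\gamma,z_j)$, so that $G=\nabla_\gamma F_{-z_j,\tau}(\cdot,S)$ and eq.~(\ref{equ:prob_diff}) becomes $G(\gamma,\tau)=0$. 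By Assumption~\ref{ass:cont_diff_scox} both $f$ and $h$ are $3$rd-order continuously differentiable in $\gamma$, so $G$ is $C^2$ in $\gamma$ and affine (hence smooth) in $\tau$, and its Jacobian in $\gamma$ is the Hessian $\partial_\gamma G=\nabla^2_\gamma F_{-z_j,\tau}(\gamma,S)$.

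First I would establish uniqueness for each fixed $\tau$ directly from strong convexity, independently of any topology. If $G(\gamma_1,\tau)=G(\gamma_2,\tau)=0$, then taking the inner product of the gradient difference with $\gamma_1-\gamma_2$ and invoking the monotonicity form of Definition~\ref{def:strong_convexity} together with Lemma~\ref{lem:scox_eps} gives $0\geq \frac{c}{2}\|\gamma_1-\gamma_2\|_2^2$, whence $\gamma_1=\gamma_2$. Moreover, because $F_{-z_j,\tau}$ is strongly convex, any such stationary point is automatically its unique global minimizer, which settles the ``global minimizer of eq.~(\ref{equ:prob_obj_eps})'' claim the moment existence is known.

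Next I would produce the curve. The base point is $(\gamma_S,0)$: since $F_{-z_j,0}(\cdot,S)=F(\cdot,S)$ and $\gamma_S$ is its interior local minimizer, $G(\gamma_S,0)=0$. Strong convexity guarantees $\partial_\gamma G=\nabla^2_\gamma F_{-z_j,\tau}\succeq cI\succ 0$ at every $(\gamma,\tau)$, so the Jacobian is everywhere nonsingular and the Implicit Function Theorem yields a unique $C^1$ map solving $G(\hat\gamma^{-z_j}_S(\tau),\tau)=0$ near $\tau=0$. To extend it to all of $[-1,0]$ I would run a continuation argument: let $T\subset[-1,0]$ be the set of $\tau$ reachable by a continuous solution branch from $\tau=0$. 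Then $T$ is relatively open (apply the local IFT at any branch point), nonempty (the base case), and relatively closed (at any limit point, compactness of $\Gamma$ from Assumption~\ref{ass:compact} lets the branch subconverge, while continuity of $G$ and the uniqueness just proved identify the limit as the solution there). Since $[-1,0]$ is connected, $T=[-1,0]$, and the per-$\tau$ uniqueness makes $\hat\gamma^{-z_j}_S$ the \emph{unique} continuous solution function; each $\hat\gamma^{-z_j}_S(\tau)$ then solves eq.~(\ref{equ:prob_diff}) and is the global minimizer of eq.~(\ref{equ:prob_obj_eps}).

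I expect the main obstacle to be the continuation step, specifically ensuring the branch never escapes the interior of $\Gamma$ on the way from $\tau=0$ to $\tau=-1$: the IFT only supplies a local branch, and if the iterates approached $\partial\Gamma$ the stationarity characterization $G=0$ could break down, since a boundary minimizer need not have vanishing gradient. This is exactly where the ``limited local region'' reading of Assumption~\ref{ass:compact} enters — take $\Gamma$ to be a compact neighborhood in which both $\gamma_S$ and $\gamma_{S-\{z_j\}}$ lie in the interior — so that uniform positive-definiteness of the Hessian prevents the branch from turning back or stalling, and compactness supplies the limit point. Everything else (verifying the IFT hypotheses and the one-line monotonicity computation) is routine given Lemma~\ref{lem:scox_eps} and Assumption~\ref{ass:cont_diff_scox}.
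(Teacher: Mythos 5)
Your proof is correct, but it takes a genuinely different route from the paper's. The paper proceeds in the opposite direction: it \emph{defines} $\hat\gamma^{-z_j}_S(\tau) := \arg\min_\gamma F_{-z_j,\tau}(\gamma,S)$, getting existence and per-$\tau$ uniqueness immediately from Lemma \ref{lem:scox_eps} (a strongly convex function has a unique global minimizer), identifies this minimizer with the unique root of eq.~(\ref{equ:prob_diff}) via the stationarity characterization, and then invokes the implicit function theorem \emph{only once, at the end, to deduce continuity}, using the fact that $\nabla^2_\gamma F_{-z_j,\tau}$ is invertible. You instead make the IFT carry the existence burden through an open--closed continuation argument on $[-1,0]$. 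Both arguments are sound, but your continuation machinery is strictly unnecessary: existence for each fixed $\tau$ does not need to be propagated from $\tau=0$, since strong convexity (which implies coercivity on $\R^K$, or existence of a minimizer on compact $\Gamma$) hands you a global minimizer for every $\tau$ outright, and your own monotonicity computation already gives uniqueness; once the map is globally defined, the IFT upgrades it to continuous exactly as in the paper. What your route buys in exchange for the extra work is an honest treatment of the one point the paper glosses over: on a compact $\Gamma$ the global minimizer could sit on $\partial\Gamma$ with nonvanishing gradient, so the equivalence ``stationary point $\Leftrightarrow$ global minimizer'' and the IFT step both tacitly require the solution branch to stay interior. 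You flag this explicitly and resolve it via the ``limited local region'' reading of Assumption \ref{ass:compact}, whereas the paper's proof (which formally cites only Assumption \ref{ass:cont_diff_scox}) assumes interiority silently. So: same key ingredients (Lemma \ref{lem:scox_eps}, stationarity characterization, IFT), but a different architecture --- definition-first versus continuation --- with yours being longer yet more scrupulous about the boundary issue.
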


\begin{proof}
According to Lemma \ref{lem:scox_eps}, for any $\tau \in [-1,0]$, the energy function $F_{-z_j, \tau}(\gamma,S)$ is strongly convex with respect to $\gamma$.
Thus, the global minimizer of the energy function $F_{-z_j,\tau}(\gamma,S)$ is unique with respect to $\gamma$.
Therefore, we can define the mapping $\hat\gamma^{-z_j}_S$ as follows,
\begin{gather*}
    \hat\gamma^{-z_j}_S(\tau) = \arg\min_{\gamma} F_{-z_j,\tau}(\gamma,S),
\end{gather*}
where $\tau \in [-1,0]$.

Besides, The strong convexity of $F_{-z_j, \tau}(\gamma,S)$ indicates that when $\tau$ is fixed, a parameter $\gamma^* \in \Gamma$ satisfies $\nabla_\gamma F_{-z_j,\tau}(\gamma^*,S)=0$ (eq. (\ref{equ:prob_diff})), if and only if $\gamma^*$ is the global minimizer of $F_{-z_j,\tau}(\gamma,S)$.
Thus, $\forall \tau \in [-1,0]$, $\hat\gamma^{-z_j}_S(\tau)$ is the only solution of eq. (\ref{equ:prob_diff}).

Eventually, according to Definition \ref{def:strong_convexity}, the strong convexity of $F_{-z_j,\tau}(\gamma,S)$ implies that the Hessian matrix $\nabla^2_\gamma F_{-z_j,\tau}(\hat\gamma^{-z_j}_S(\tau),S)$ is invertible. Combining with the implicit function theorem, we have that $\hat\gamma^{-z_j}_S(\tau)$ is continuous everywhere on $[-1,0]$.

The proof is completed.
\end{proof}

Theorem \ref{thm:imp_fun} demonstrates that our algorithm can obtain the $\hat\gamma^{-z_j}_S(-1)$. This secures that our algorithm can remove the learned influence from the probabilistic model $\hat p_S$.

We then analyze the error introduced by our algorithm. Assume that $\hat\gamma^{-z_j}_S$ is second-order continuously differentiable, then $\hat\gamma^{-z_j}_S(-1)$ can be expanded by the {Taylor's series}:
\begin{align*}
    \hat\gamma^{-z_j}_S(-1) \nonumber
    &= \hat\gamma^{-z_j}_S(0) + (-1) \cdot \frac{\partial\hat\gamma^{-z_j}_S(0)}{\partial\tau} + \frac{(-1)^2}{2} \cdot \frac{\partial^2\hat\gamma^{-z_j}_S(\xi)}{\partial\tau^2} \nonumber \\
    &= \hat\gamma^{-z_j}_S(0) - \frac{\partial\hat\gamma^{-z_j}_S(0)}{\partial\tau} + \frac{1}{2} \cdot \frac{\partial^2\hat\gamma^{-z_j}_S(\xi)}{\partial\tau^2},
\end{align*}
where $\xi\in[-1,0]$, $\frac{1}{2} \frac{\partial^2\hat\gamma^{-z_j}_S(\xi)}{\partial\tau^2}$ is the Cauchy form of the remainder. We prove that the Cauchy remainder term becomes negligible as the training set size $n$ goes to infinity.

\begin{theorem}
\label{thm:1st_norm}
Suppose Assumptions \ref{ass:compact} and \ref{ass:cont_diff_scox} hold. The induced mapping $\hat\gamma^{-z_j}_S$ is as defined in Theorem \ref{thm:imp_fun}. Then, for any $\tau \in [-1,0]$, we have that
\begin{gather*}
    \frac{\partial\hat\gamma^{-z_j}_S(\tau)}{\partial\tau}
    =-\left(\nabla^2_\gamma F(\hat\gamma^{-z_j}_S(\tau),S) + \tau\cdot\nabla^2_\gamma h(\hat\gamma^{-z_j}_S(\tau),z_j)\right)^{-1}
     \cdot \nabla_\gamma h(\hat\gamma^{-z_j}_S(\tau),z_j)^T,
\end{gather*}
and
\begin{gather*}
    \left\|\frac{\partial\hat\gamma^{-z_j}_S(\tau)}{\partial\tau}\right\|_2
    \leq O\left(\frac{1}{n}\right).
\end{gather*}
\end{theorem}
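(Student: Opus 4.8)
The plan is to treat the two assertions separately: first derive the closed form for $\partial\hat\gamma^{-z_j}_S/\partial\tau$ by implicit differentiation of the defining equation, and then bound its norm by exploiting the additive structure of the energy $F$. Before differentiating I would note that $\hat\gamma^{-z_j}_S$ is not merely continuous but continuously differentiable: since $f$ and $h(\cdot,z)$ are $3$rd-order continuously differentiable (Assumption \ref{ass:cont_diff_scox}), the map $(\gamma,\tau)\mapsto\nabla_\gamma F(\gamma,S)+\tau\nabla_\gamma h(\gamma,z_j)$ is jointly $C^1$, and Lemma \ref{lem:scox_eps} shows its $\gamma$-Jacobian is invertible, so the implicit function theorem upgrades the continuity from Theorem \ref{thm:imp_fun} to $C^1$-smoothness. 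Starting from the identity $\nabla_\gamma F(\hat\gamma^{-z_j}_S(\tau),S)+\tau\cdot\nabla_\gamma h(\hat\gamma^{-z_j}_S(\tau),z_j)=0$, which holds for all $\tau\in[-1,0]$ by Theorem \ref{thm:imp_fun}, I would differentiate both sides in $\tau$ and apply the chain rule to the $\gamma(\tau)$-dependence, obtaining
\[
\left(\nabla^2_\gamma F(\hat\gamma^{-z_j}_S(\tau),S)+\tau\,\nabla^2_\gamma h(\hat\gamma^{-z_j}_S(\tau),z_j)\right)\frac{\partial\hat\gamma^{-z_j}_S(\tau)}{\partial\tau}+\nabla_\gamma h(\hat\gamma^{-z_j}_S(\tau),z_j)^T=0.
\]
The bracketed matrix is exactly $\nabla^2_\gamma F_{-z_j,\tau}(\hat\gamma^{-z_j}_S(\tau),S)$, which Lemma \ref{lem:scox_eps} certifies to be strongly convex and therefore invertible; solving for the velocity yields the first claimed formula.

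For the norm bound I would apply submultiplicativity of the operator norm to the formula just derived, giving
\[
\left\|\frac{\partial\hat\gamma^{-z_j}_S(\tau)}{\partial\tau}\right\|_2\leq\left\|\left(\nabla^2_\gamma F_{-z_j,\tau}(\hat\gamma^{-z_j}_S(\tau),S)\right)^{-1}\right\|_2\cdot\left\|\nabla_\gamma h(\hat\gamma^{-z_j}_S(\tau),z_j)\right\|_2,
\]
and then bound each factor. The gradient factor is easy: because $\nabla_\gamma h$ is continuous on the compact product set $\Gamma\times\mathcal{Z}$ (Assumptions \ref{ass:compact} and \ref{ass:cont_diff_scox}), it is uniformly bounded by some constant $M$, and since $\hat\gamma^{-z_j}_S(\tau)\in\Gamma$ this gives $\|\nabla_\gamma h(\hat\gamma^{-z_j}_S(\tau),z_j)\|_2\leq M$, a constant independent of $n$.

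The decisive factor is the inverse-Hessian norm, and here I would reuse the decomposition from Lemma \ref{lem:scox_eps}, namely $\nabla^2_\gamma F_{-z_j,\tau}(\gamma,S)=\nabla^2_\gamma F(\gamma,S-\{z_j\})+(1+\tau)\nabla^2_\gamma h(\gamma,z_j)$. Writing $\nabla^2_\gamma F(\gamma,S-\{z_j\})=\sum_{i\neq j}\nabla^2_\gamma h(\gamma,z_i)+\nabla^2_\gamma f(\gamma)$ and using $\nabla^2_\gamma h(\gamma,z_i)\succeq c_h(z_i)I$ together with $\nabla^2_\gamma f(\gamma)\succeq c_f I$ (Definition \ref{def:strong_convexity} applied to Assumption \ref{ass:cont_diff_scox}), I sum the $n-1$ contributions. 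Since $c_h$ is continuous on the compact set $\mathcal{Z}$, it attains a positive minimum $c_h^{\min}:=\min_{z\in\mathcal{Z}}c_h(z)>0$, so $\nabla^2_\gamma F(\gamma,S-\{z_j\})\succeq((n-1)c_h^{\min}+c_f)I$; the extra term $(1+\tau)\nabla^2_\gamma h(\gamma,z_j)$ is positive semi-definite because $1+\tau\geq0$. Hence $\lambda_{\min}(\nabla^2_\gamma F_{-z_j,\tau})\geq(n-1)c_h^{\min}+c_f$ uniformly in $\tau$, the inverse-Hessian norm is at most $1/((n-1)c_h^{\min}+c_f)$, and combining with the gradient bound gives $\|\partial\hat\gamma^{-z_j}_S(\tau)/\partial\tau\|_2\leq M/((n-1)c_h^{\min}+c_f)=O(1/n)$.

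The main obstacle is the linear-in-$n$ lower bound on the smallest eigenvalue; everything else is routine. The essential observation is that the Hessian of $F$ is a sum of $n$ positive-definite pieces so its spectrum scales with $n$, while the single removed datum contributes only a bounded gradient that does not grow. The subtle point requiring care is securing the \emph{uniform} positivity $c_h^{\min}>0$, which rests on continuity of $c_h$ over the compact $\mathcal{Z}$ (a Weierstrass argument) rather than on mere pointwise strong convexity; one must also confirm the eigenvalue estimate holds uniformly over $\tau\in[-1,0]$, which it does since the $\tau$-dependent term was discarded as nonnegative. The only other bookkeeping point is tracking the transpose convention in the derivative formula, which is purely notational.
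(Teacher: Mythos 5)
Your proposal is correct and follows essentially the same route as the paper's proof: implicit differentiation of $\nabla_\gamma F(\gamma,S)+\tau\cdot\nabla_\gamma h(\gamma,z_j)=0$, invertibility of the Hessian from Lemma \ref{lem:scox_eps}, a linear-in-$n$ lower bound on the smallest eigenvalue obtained by summing the strong-convexity constants and taking the positive minimum of $c_h$ on the compact set $\mathcal{Z}$, and a compactness bound on $\|\nabla_\gamma h\|_2$. The only cosmetic differences are that the paper rescales everything by $1/n$ before bounding (so its inverse-Hessian factor is $O(1)$ and its gradient factor $O(1/n)$, rather than your $O(1/n)$ and $O(1)$ split of the same product), and your explicit $C^1$ upgrade via the implicit function theorem spells out a differentiability step the paper leaves implicit.
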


The proof of Theorem \ref{thm:1st_norm} is presented in Appendix \ref{app:proof_bif_framework}.
We first calculate
$\frac{\partial\hat\gamma^{-z_j}_S(\tau)}{\partial\tau}$
based on eq. (\ref{equ:prob_diff}).
We then upper bound the norm of
$\frac{\partial\hat\gamma^{-z_j}_S(\tau)}{\partial\tau}$
by upper bounding the norm of an inverse matrix
\begin{gather*}
\left(\nabla^2_\gamma F(\hat\gamma^{-z_j}_S(\tau),S) + \tau\cdot\nabla^2_\gamma h(\hat\gamma^{-z_j}_S(\tau),z_j)\right)^{-1}
\end{gather*}
and a vector
$\nabla_\gamma h(\hat\gamma^{-z_j}_S(\tau),z_j)$.
The norm of the inverse matrix is bounded based on the strong convexity of $\hat\gamma^{-z_j}_S(\tau)$,
while the norm of the vector is bounded based on its continuity in a compact domain.

Based on Theorem \ref{thm:1st_norm}, we then prove that the norm of $\frac{\partial^2\gamma^{-z_j}_S(\tau)}{\partial\tau^2}$ is no larger than order $O(1/n^2)$.
Thus, a sufficiently large training sample size ensures that, the second and high order terms in the Taylor's series are negligible.

\begin{theorem}
\label{thm:2nd_norm}
Suppose Assumptions \ref{ass:compact} and \ref{ass:cont_diff_scox} hold. The induced mapping $\hat\gamma^{-z_j}_S$ is as defined in Theorem \ref{thm:imp_fun}. Then, for any $\tau\in[-1,0]$, we have that
\begin{gather*}
    \left\|\frac{\partial^2\hat\gamma^{-z_j}_S(\tau)}{\partial\tau^2}\right\|_2 \leq O\left(\frac{1}{n^2}\right).
\end{gather*}
\end{theorem}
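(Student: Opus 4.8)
The plan is to differentiate the closed-form expression for $\frac{\partial\hat\gamma^{-z_j}_S(\tau)}{\partial\tau}$ established in Theorem \ref{thm:1st_norm} one more time in $\tau$, and then bound each resulting factor by its order in $n$. To lighten notation write $\gamma(\tau)=\hat\gamma^{-z_j}_S(\tau)$ and set
\[
H(\tau)=\nabla^2_\gamma F(\gamma(\tau),S)+\tau\cdot\nabla^2_\gamma h(\gamma(\tau),z_j),\qquad g(\tau)=\nabla_\gamma h(\gamma(\tau),z_j)^T,
\]
so that Theorem \ref{thm:1st_norm} reads $\dot\gamma(\tau)=-H(\tau)^{-1}g(\tau)$. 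Since Assumption \ref{ass:cont_diff_scox} grants that $f$ and $h$ are $3$rd-order continuously differentiable, the implicit function theorem guarantees that $\gamma(\tau)$ is in fact twice continuously differentiable, so the second derivative below is well-defined.

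Applying the product rule together with the identity $\frac{d}{d\tau}H(\tau)^{-1}=-H(\tau)^{-1}\dot H(\tau)H(\tau)^{-1}$ gives
\[
\ddot\gamma(\tau)=H(\tau)^{-1}\dot H(\tau)H(\tau)^{-1}g(\tau)-H(\tau)^{-1}\dot g(\tau),
\]
where, by the chain rule,
\[
\dot H(\tau)=\nabla^3_\gamma F(\gamma(\tau),S)[\dot\gamma(\tau)]+\nabla^2_\gamma h(\gamma(\tau),z_j)+\tau\cdot\nabla^3_\gamma h(\gamma(\tau),z_j)[\dot\gamma(\tau)],\qquad \dot g(\tau)=\nabla^2_\gamma h(\gamma(\tau),z_j)\,\dot\gamma(\tau).
\]
It then suffices to control the four building blocks $\|H(\tau)^{-1}\|_2$, $\|g(\tau)\|_2$, $\|\dot H(\tau)\|_2$, and $\|\dot g(\tau)\|_2$.

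The order bookkeeping is the heart of the argument. For $H(\tau)^{-1}$, note that $H(\tau)$ is exactly the Hessian of the strongly convex function $F_{-z_j,\tau}$ (Lemma \ref{lem:scox_eps}); because $f$ contributes $c_f$ and each of the $n-1$ retained data terms contributes $c_h(z_i)\geq\min_{z\in\mathcal{Z}}c_h(z)>0$, a positive minimum since $c_h$ is continuous on the compact set $\mathcal{Z}$ (Assumption \ref{ass:compact}), the smallest eigenvalue of $H(\tau)$ is $\Omega(n)$ and hence $\|H(\tau)^{-1}\|_2\leq O(1/n)$. For $g(\tau)$ and $\nabla^2_\gamma h(\gamma(\tau),z_j)$, continuity on the compact domain $\Gamma\times\mathcal{Z}$ yields uniform bounds, so both are $O(1)$; combined with $\|\dot\gamma(\tau)\|_2\leq O(1/n)$ from Theorem \ref{thm:1st_norm}, this gives $\|\dot g(\tau)\|_2\leq O(1/n)$. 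The delicate term is $\dot H(\tau)$: the third-derivative tensor $\nabla^3_\gamma F(\gamma(\tau),S)=\sum_{i}\nabla^3_\gamma h(\gamma(\tau),z_i)+\nabla^3_\gamma f(\gamma(\tau))$ grows like $O(n)$, but it is contracted against $\dot\gamma(\tau)=O(1/n)$, so this contribution is $O(1)$; the remaining two summands are $O(1)$ and $O(1/n)$, giving $\|\dot H(\tau)\|_2\leq O(1)$ overall.

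Substituting these orders into the expression for $\ddot\gamma(\tau)$ finishes the proof: the first term is bounded by $\|H(\tau)^{-1}\|_2^2\,\|\dot H(\tau)\|_2\,\|g(\tau)\|_2\leq O(1/n^2)$, and the second by $\|H(\tau)^{-1}\|_2\,\|\dot g(\tau)\|_2\leq O(1/n^2)$, so $\|\ddot\gamma(\tau)\|_2\leq O(1/n^2)$ uniformly in $\tau\in[-1,0]$. The main obstacle is precisely the near-cancellation inside $\dot H(\tau)$: one must recognize that although both $H(\tau)$ and $\nabla^3_\gamma F$ scale with $n$, the crucial extra factor $1/n$ in $\ddot\gamma$ relative to $\dot\gamma$ comes from pairing the $O(n)$ third derivative with the $O(1/n)$ first derivative, which is what upgrades the $O(1/n)$ bound of Theorem \ref{thm:1st_norm} to the claimed $O(1/n^2)$.
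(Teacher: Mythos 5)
Your proposal is correct and takes essentially the same route as the paper: the paper differentiates the stationarity equation $\nabla_\gamma F(\gamma,S)+\tau\nabla_\gamma h(\gamma,z_j)=0$ a second time and solves for $\frac{\partial^2\hat\gamma^{-z_j}_S(\tau)}{\partial\tau^2}$, which, after substituting $H(\tau)^{-1}g(\tau)=-\dot\gamma(\tau)$, is exactly your formula $\ddot\gamma=H^{-1}\dot H H^{-1}g-H^{-1}\dot g$, with the paper's quadratic-form terms $A(\tau)$ and $B(\tau,z)$ corresponding to your contractions $\nabla^3_\gamma f[\dot\gamma]\dot\gamma$ and $\nabla^3_\gamma h[\dot\gamma]\dot\gamma$. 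Your order bookkeeping (inverse Hessian $O(1/n)$, the $O(n)$ third derivative of $F$ paired against $\dot\gamma=O(1/n)$, and $\dot g=O(1/n)$) reproduces precisely the paper's bounds, so the two arguments coincide up to presentation.
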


The proof routine of Theorem \ref{thm:2nd_norm} is similar to Theorem \ref{thm:1st_norm}, while
the calculation is more difficult.
We leave the details in Appendix \ref{app:proof_bif_framework}.

\begin{corollary}
\label{cor:bif_rm_z}
Suppose Assumptions \ref{ass:compact} and \ref{ass:cont_diff_scox} hold. Suppose $\gamma_S$ and $\gamma_{S-\{z_j\}}$ are the global minimizers of the energy functions $F(\gamma,S)$ and $F(\gamma,S-\{z_j\})$, respectively.
Then, we have that
\begin{gather*}
\gamma_{S-\{z_j\}} = \gamma_S + \nabla^{-2}_\gamma F(\gamma_S,S) \cdot \nabla_\gamma h(\gamma_S,z_j)^T + O\left(\frac{1}{n^2}\right).
\end{gather*}
\end{corollary}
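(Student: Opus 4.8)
The plan is to read the claimed expansion directly off the machinery already assembled in Theorems~\ref{thm:imp_fun}, \ref{thm:1st_norm}, and~\ref{thm:2nd_norm}, rather than to redo any derivative computation from scratch. First I would record the identification $\gamma_{S-\{z_j\}} = \hat\gamma^{-z_j}_S(-1)$. By Theorem~\ref{thm:imp_fun}, $\hat\gamma^{-z_j}_S(-1)$ is the unique global minimizer of $F_{-z_j,-1}(\gamma,S) = F(\gamma,S-\{z_j\})$; since the corollary's hypothesis takes $\gamma_{S-\{z_j\}}$ to be the global minimizer of $F(\gamma,S-\{z_j\})$, and strong convexity (Lemma~\ref{lem:scox_eps}) makes that minimizer unique, the two coincide. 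Likewise $\hat\gamma^{-z_j}_S(0)=\gamma_S$ by construction of the implicit mapping.

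Next I would evaluate the first-derivative formula of Theorem~\ref{thm:1st_norm} at $\tau=0$. Setting $\tau=0$ deletes the $\tau\cdot\nabla^2_\gamma h$ contribution to the Hessian and, using $\hat\gamma^{-z_j}_S(0)=\gamma_S$, leaves
\begin{gather*}
\frac{\partial\hat\gamma^{-z_j}_S(0)}{\partial\tau} = -\nabla^{-2}_\gamma F(\gamma_S,S)\cdot\nabla_\gamma h(\gamma_S,z_j)^T.
\end{gather*}
Substituting this into the second-order Taylor expansion of $\hat\gamma^{-z_j}_S(-1)$ about $\tau=0$ displayed just above Theorem~\ref{thm:1st_norm} gives
\begin{gather*}
\gamma_{S-\{z_j\}} = \gamma_S + \nabla^{-2}_\gamma F(\gamma_S,S)\cdot\nabla_\gamma h(\gamma_S,z_j)^T + \frac{1}{2}\frac{\partial^2\hat\gamma^{-z_j}_S(\xi)}{\partial\tau^2}
\end{gather*}
for some $\xi\in[-1,0]$, the final term being the Cauchy remainder. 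Note that the two sign reversals (from the $(-1)$ coefficient in the Taylor term and from the minus in the derivative formula) combine to produce the plus sign claimed in the statement.

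Finally I would control the remainder using Theorem~\ref{thm:2nd_norm}, which bounds $\|\partial^2\hat\gamma^{-z_j}_S(\xi)/\partial\tau^2\|_2$ by $O(1/n^2)$ uniformly in $\xi\in[-1,0]$; hence $\frac{1}{2}\,\partial^2\hat\gamma^{-z_j}_S(\xi)/\partial\tau^2 = O(1/n^2)$ and the claim follows. The proof is therefore essentially a bookkeeping assembly of prior results, and I do not anticipate any genuine obstacle; the substantive work was done in Theorems~\ref{thm:1st_norm} and~\ref{thm:2nd_norm}. The only point requiring a little care is the legitimacy of the Taylor expansion with Cauchy remainder, i.e.\ the second-order differentiability of $\tau\mapsto\hat\gamma^{-z_j}_S(\tau)$ on $[-1,0]$; I would justify this by applying the implicit function theorem exactly as in Theorem~\ref{thm:imp_fun} to the $3$rd-order continuously differentiable energy whose Hessian is invertible by strong convexity, which yields a mapping that is at least twice continuously differentiable, so both the expansion and its Cauchy form are valid.
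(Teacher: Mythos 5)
Your proposal is correct and follows essentially the same route the paper intends: identify $\gamma_{S-\{z_j\}}$ with $\hat\gamma^{-z_j}_S(-1)$ via the uniqueness from Theorem~\ref{thm:imp_fun}, evaluate the derivative formula of Theorem~\ref{thm:1st_norm} at $\tau=0$, substitute into the displayed second-order Taylor expansion with Cauchy remainder, and absorb the remainder using Theorem~\ref{thm:2nd_norm}. If anything, you are slightly more careful than the paper, which simply \emph{assumes} the second-order differentiability of $\tau\mapsto\hat\gamma^{-z_j}_S(\tau)$, whereas you justify it via the implicit function theorem applied to the $3$rd-order continuously differentiable energy.
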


Eventually, we consider removing a subset $S^\prime$ from $S$. Follow the previous derivations, we prove that the approximation error of approaching $\gamma_{S-S^\prime}$ based on $\gamma_S$ is also not larger than order $O(1/n^2)$, as stated below:

\begin{corollary}
\label{cor:bif_rm_S_prime}
Suppose Assumptions \ref{ass:compact} and \ref{ass:cont_diff_scox} hold. Suppose $\gamma_S$ and $\gamma_{S-S^\prime}$ are the global minimizers of the energy functions $F(\gamma,S)$ and $F(\gamma,S- S^\prime)$, respectively.
Then, we have that
\begin{gather*}
\gamma_{S-S^\prime} = \gamma_S + \nabla^{-2}_\gamma F(\gamma_S,S) \cdot \sum_{z_j \in S^\prime} \nabla_\gamma h(\gamma_S,z_j)^T +O\left(\frac{1}{n^2}\right).
\end{gather*}
\end{corollary}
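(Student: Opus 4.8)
The plan is to repeat the single-datum argument behind Corollary~\ref{cor:bif_rm_z} almost verbatim, interpolating the entire block $\sum_{z \in S'} h(\gamma, z)$ at once rather than a single term $h(\gamma, z_j)$. Concretely, I would introduce the perturbed energy
\[
F_{-S',\tau}(\gamma, S) = F(\gamma, S) + \tau \sum_{z \in S'} h(\gamma, z), \qquad \tau \in [-1, 0],
\]
together with its stationarity condition $\nabla_\gamma F(\gamma, S) + \tau \sum_{z \in S'} \nabla_\gamma h(\gamma, z) = 0$. Setting $\tau = 0$ recovers the optimality condition for $\gamma_S$, while $\tau = -1$ gives that for $\gamma_{S-S'}$ (since $F_{-S',-1}(\gamma,S) = F(\gamma,S-S')$), so this single equation connects the two global minimizers exactly as eq.~(\ref{equ:prob_diff}) does in the single-datum case.

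First I would establish the strong-convexity analogue of Lemma~\ref{lem:scox_eps}. Rewriting $F_{-S',\tau}(\gamma, S) = F(\gamma, S-S') + (1+\tau)\sum_{z \in S'} h(\gamma, z)$ and noting that $F(\gamma, S-S')$ inherits $c_f$-strong convexity from $f$ while $(1+\tau) \geq 0$ keeps the remaining block a non-negatively weighted sum of strongly convex functions, $F_{-S',\tau}$ is strongly convex in $\gamma$ for every $\tau \in [-1, 0]$. This immediately yields the subset analogue of Theorem~\ref{thm:imp_fun}: a unique, continuous implicit map $\hat\gamma^{-S'}_S : [-1, 0] \to \Gamma$ with $\hat\gamma^{-S'}_S(0) = \gamma_S$ and $\hat\gamma^{-S'}_S(-1) = \gamma_{S-S'}$, via the same invertibility-plus-implicit-function-theorem argument.

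Next I would differentiate the stationarity condition to obtain, in analogy with Theorem~\ref{thm:1st_norm},
\[
\frac{\partial \hat\gamma^{-S'}_S(\tau)}{\partial \tau} = -\left(\nabla^2_\gamma F(\hat\gamma^{-S'}_S(\tau), S) + \tau \sum_{z \in S'} \nabla^2_\gamma h(\hat\gamma^{-S'}_S(\tau), z)\right)^{-1} \sum_{z \in S'} \nabla_\gamma h(\hat\gamma^{-S'}_S(\tau), z)^T,
\]
whose value at $\tau = 0$ is precisely the leading correction $-\nabla^{-2}_\gamma F(\gamma_S, S) \sum_{z \in S'} \nabla_\gamma h(\gamma_S, z)^T$ in the claimed expansion. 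The companion bound $\|\partial^2 \hat\gamma^{-S'}_S(\tau)/\partial\tau^2\|_2 \leq O(1/n^2)$ then follows by rerunning the computation behind Theorem~\ref{thm:2nd_norm} with the gradient block in place of a single gradient, and a second-order Taylor expansion of $\hat\gamma^{-S'}_S(-1)$ about $\tau = 0$ with Cauchy remainder delivers the stated identity.

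The main obstacle is tracking the dependence on $|S'|$ in the two norm bounds. The perturbed Hessian stays positive definite with smallest eigenvalue at least that of $\nabla^2_\gamma F(\gamma, S-S')$, which is $\Omega(n)$ provided $n - |S'| = \Omega(n)$; hence its inverse still has operator norm $O(1/n)$. The gradient block $\sum_{z \in S'} \nabla_\gamma h$, however, carries a factor $|S'|$, so the honest first- and second-order bounds are really $O(|S'|/n)$ and $O(|S'|^2/n^2)$, and the $O(1/n^2)$ statement is to be read in the regime where $|S'|$ is held fixed as $n \to \infty$. Uniform control of $c_h(z)$ from below and of $\nabla_\gamma h$ from above on the compact set $\Gamma \times \mathcal{Z}$ (Assumption~\ref{ass:compact}) is exactly what keeps every hidden constant finite and independent of $n$.
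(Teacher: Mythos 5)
Your proposal is correct and takes essentially the same route as the paper, which proves this corollary precisely by ``following the previous derivations'': interpolating the whole block $\sum_{z\in S'}h(\gamma,z)$ with a single parameter $\tau\in[-1,0]$ and rerunning Lemma~\ref{lem:scox_eps}, Theorem~\ref{thm:imp_fun}, and Theorems~\ref{thm:1st_norm}--\ref{thm:2nd_norm} with the summed gradient and Hessian blocks in place of the single-datum terms. Your closing observation --- that the honest bounds scale as $O(|S'|/n)$ and $O(|S'|^2/n^2)$, so the stated $O(1/n^2)$ rate should be read with $|S'|$ held fixed (or at least $n-|S'|=\Omega(n)$ for the Hessian lower bound) --- is a precision the paper leaves implicit, not a deviation from its argument.
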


\section{Forgetting Algorithms for Bayesian Inference}
\label{sec:bif_algorithm}
In this section, we develop provably certified forgetting algorithms for variational inference and MCMC under the BIF framework.

\subsection{Variational Inference Forgetting}
\label{sec:variational_if}

As introduced in Section \ref{sec:preliminary}, variational inference aims to minimize the negative ELBO function, which can be expanded as follows,
\begin{align*}
&-\text{ELBO}(\lambda,S) \nonumber \\
&\quad = -\E_{q_\lambda} \log p(\theta, S) + \E_{q_\lambda} \log q_\lambda(\theta) \nonumber \\
&\quad = -\E_{q_\lambda} \log \left( p(\theta) \prod_{i=1}^n p(z_i|\theta) \right) + \E_{q_\lambda} \log q_\lambda(\theta) \nonumber \\
&\quad = \sum_{i=1}^n -\E_{q_\lambda}\log p(z_i|\theta) + \mathrm{KL}(q_\lambda(\theta)\|p(\theta)).
\end{align*}
Notice that the structure of the above equation is similar as that of the energy function (eq. (\ref{equ:prob_obj_S})). Thus, we employ the negative ELBO function as the energy function.

Based on the energy function, we define the following variational inference influence function to characterize the influence of single datum on the learned model.

\begin{definition}[Variational inference influence function]
For any example $z_j \in S$, its variational inference influence function is defined to be
\begin{gather*}
    \mathcal{I}_{\mathrm{VI}} (z_j) :=
    - \nabla^{-2}_\lambda \mathrm{ELBO}(\hat\lambda_S,S) \cdot \nabla_{\lambda}^T \E_{q_{\hat\lambda_S}} \log p(z_j|\theta).
\end{gather*}
\end{definition}

We then design the {\it variational inference forgetting} algorithm, to remove the influences of a single datum from the learned variational parameter $\hat\lambda_S$,
\begin{gather*}
    \hat\lambda^{-z_j}_S = \mathcal{A}_{\mathrm{VI}}(\hat\lambda_S, z_j) = \hat\lambda_S - \mathcal{I}_{\mathrm{VI}}(z_j).
\end{gather*}

We prove that variational inference forgetting can provably remove the influences of datums from the learned model.
\begin{theorem}
\label{thm:vi_bif}
Let $\gamma:=\lambda$, $\Gamma:=\Lambda$, $h(\gamma,z):=-\E_{q_\lambda}\log p(z|\theta)$ and $f(\gamma):=\mathrm{KL}(q_\lambda(\theta)\|p(\theta))$. Suppose Assumptions \ref{ass:compact} and \ref{ass:cont_diff_scox} hold. Then, we have that
\begin{gather*}
    \hat\lambda_{S-\{z_j\}}
    = \hat\lambda_S - \mathcal{I}_{\mathrm{VI}}(z_j) + O\left(\frac{1}{n^2}\right)
    \approx \hat\lambda_S - \mathcal{I}_{\mathrm{VI}}(z_j),
    \label{equ:variational_forget}
\end{gather*}
where $\hat\lambda_S$ and $\hat\lambda_{S-\{z_j\}}$ are the global minimizers of $-\mathrm{ELBO}(\lambda,S)$ and $-\mathrm{ELBO}(\lambda,S-\{z_j\})$, respectively.
\end{theorem}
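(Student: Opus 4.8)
The plan is to recognize the negative ELBO as a concrete instance of the abstract energy function $F(\gamma,S)$ from eq. (\ref{equ:prob_obj_S}) and then invoke Corollary \ref{cor:bif_rm_z} directly, with the bulk of the work being careful bookkeeping of the signs that relate the variational-inference quantities to the generic $F$, $h$, and $f$.

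First I would confirm the structural decomposition. Under the stated identifications $h(\gamma,z) := -\E_{q_\lambda}\log p(z|\theta)$ and $f(\gamma):=\mathrm{KL}(q_\lambda(\theta)\|p(\theta))$, the expansion of the negative ELBO given at the start of Section \ref{sec:variational_if} reads exactly $-\text{ELBO}(\lambda,S) = \sum_{i=1}^n h(\gamma,z_i) + f(\gamma)$, which is the form required in eq. (\ref{equ:prob_obj_S}). Hence setting $F(\gamma,S) := -\text{ELBO}(\lambda,S)$ places us inside the BIF framework, and since Assumptions \ref{ass:compact} and \ref{ass:cont_diff_scox} are assumed to hold for this $f$ and $h$, the hypotheses of Corollary \ref{cor:bif_rm_z} are met. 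I would also note, for completeness, that the global-minimizer status of $\hat\lambda_S$ and $\hat\lambda_{S-\{z_j\}}$ required to apply the corollary follows from the strong convexity of $F$ established en route to Theorem \ref{thm:imp_fun}, so no separate optimization argument is needed here.

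Next I would translate the influence function $\mathcal{I}_{\mathrm{VI}}(z_j)$ into the language of $F$ and $h$. Because $F = -\text{ELBO}$, the Hessian satisfies $\nabla^2_\lambda\text{ELBO}(\hat\lambda_S,S) = -\nabla^2_\gamma F(\gamma_S,S)$, so that $\nabla^{-2}_\lambda\text{ELBO}(\hat\lambda_S,S) = -\nabla^{-2}_\gamma F(\gamma_S,S)$; and because $\E_{q_{\hat\lambda_S}}\log p(z_j|\theta) = -h(\gamma_S,z_j)$, its gradient is $-\nabla_\gamma h(\gamma_S,z_j)$. Substituting both into the definition of $\mathcal{I}_{\mathrm{VI}}(z_j)$, the two extra minus signs cancel and the surviving outer minus sign yields $\mathcal{I}_{\mathrm{VI}}(z_j) = -\nabla^{-2}_\gamma F(\gamma_S,S)\cdot\nabla_\gamma h(\gamma_S,z_j)^T$. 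Comparing with Corollary \ref{cor:bif_rm_z}, the first-order term there is precisely $-\mathcal{I}_{\mathrm{VI}}(z_j)$, so $\gamma_{S-\{z_j\}} = \gamma_S - \mathcal{I}_{\mathrm{VI}}(z_j) + O(1/n^2)$, which is the claim after rewriting $\gamma$ as $\lambda$.

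The only genuine subtlety I anticipate is the sign bookkeeping just described: the influence function is phrased via $\text{ELBO}$ and $\log p(z_j|\theta)$, both carrying an implicit sign flip relative to $F$ and $h$, and one must verify that these two flips combine with the leading minus in the definition of $\mathcal{I}_{\mathrm{VI}}$ to reproduce exactly the $+\nabla^{-2}_\gamma F\cdot\nabla_\gamma h^T$ term of the corollary. Everything else is a mechanical substitution, since the $O(1/n^2)$ remainder and the existence/uniqueness of the minimizers are inherited wholesale from Corollary \ref{cor:bif_rm_z} and Theorems \ref{thm:imp_fun}--\ref{thm:2nd_norm}.
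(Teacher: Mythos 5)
Your proposal is correct and matches the paper's proof, which consists of exactly the one-line remark that the result ``is straightforward from Corollary \ref{cor:bif_rm_z}''; your sign bookkeeping showing that $\mathcal{I}_{\mathrm{VI}}(z_j) = -\nabla^{-2}_\gamma F(\gamma_S,S)\cdot\nabla_\gamma h(\gamma_S,z_j)^T$ is precisely the (correct) verification the paper leaves implicit.
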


\begin{proof}
It is straightforward from Corollary \ref{cor:bif_rm_z}.
\end{proof}

\begin{remark}
When Assumptions \ref{ass:compact} and \ref{ass:cont_diff_scox} do not hold, performing variational inference forgetting approaches some critical point of $-\mathrm{ELBO}(\lambda,S-\{z_j\})$ based on $\hat\lambda_S$.
\end{remark}

Beyond a single datum's removal, Corollary \ref{cor:bif_rm_S_prime} further shows that the influence of a sample set $S^\prime$ can also be characterized by the variational inference influence function $\mathcal{I_{\mathrm{VI}}}(S^\prime)$, as follows,
\begin{gather*}
    \mathcal{I}_{\mathrm{VI}}(S^\prime) = \sum_{z_j\in S^\prime} \mathcal{I}_{\mathrm{VI}}(z_j).
\end{gather*}
This guarantees that one can remove a group of datums at one time. It can significantly speed up and simplify {the forgetting process} of large amounts of datums.

We next study the $\varepsilon$-certified knowledge removal guarantee for variational inference forgetting.
Here, we take the mean-field Gaussian family as an example.
Proofs for other varitional families are similar.

\begin{theorem}[$\varepsilon$-certified knowledge removal of mean-field Gaussian variational distribution]
\label{thm:vi_certified}
Let $\mathcal{Q}$ be a mean-field Gaussian family where the variance of every variational distributions are bounded,
{\it i.e.}, $\exists M_1, M_2 \in \R$ such that for any $q_\lambda = \mathcal{N}(\mu,\sigma^2 I) \in \mathcal{Q}$, we have $0 < M_1 \leq \sigma_i \leq M_2$ holds for $i = 1, \cdots, d$.
Let $\hat\lambda_S$ and $\hat\lambda_{S-\{z_j\}}$ be the variational parameters that learned on sample sets $S$ and $S-\{z_j\}$, respectively. Suppose $\hat\lambda^{-z_j}_S = \mathcal{A}_{\mathrm{VI}}(\hat\lambda_S,z_j)$ is the processed variational parameter.
Then, $\mathcal{A}_{\mathrm{VI}}(\hat\lambda_S,z_j)$ performs $\varepsilon_{\hat\lambda_S,z_j}$-certified knowledge removal, where
\begin{gather*}
\varepsilon_{\hat\lambda_S,z_j} = \frac{1}{2 M_1^2}\left(2(M_1+M_2)\|\hat\lambda^{-z_j}_S - \hat\lambda_{S-\{z_j\}}\|_1 + \|\hat\lambda^{-z_j}_S - \hat\lambda_{S-\{z_j\}}\|_2^2 \right).
\end{gather*}
\end{theorem}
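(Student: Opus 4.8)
The plan is to reduce the certified-removal guarantee to a direct computation of the Kullback--Leibler divergence between two mean-field Gaussians and then bound that divergence by the stated expression. Since both the processed model $\hat p^{-z_j}_S = q_{\hat\lambda^{-z_j}_S}$ and the retrained model $\hat p_{S-\{z_j\}} = q_{\hat\lambda_{S-\{z_j\}}}$ lie in the mean-field Gaussian family $\mathcal{Q}$, the quantity $\mathrm{KL}(\hat p^{-z_j}_S,\hat p_{S-\{z_j\}})$ has a closed form. Writing $\hat\lambda^{-z_j}_S = (\mu^{(0)}_1,\dots,\mu^{(0)}_d,\sigma^{(0)}_1,\dots,\sigma^{(0)}_d)$ and $\hat\lambda_{S-\{z_j\}} = (\mu^{(1)}_1,\dots,\mu^{(1)}_d,\sigma^{(1)}_1,\dots,\sigma^{(1)}_d)$, and using that the covariance is diagonal, the divergence factorizes over coordinates as
\begin{gather*}
\mathrm{KL}(\hat p^{-z_j}_S,\hat p_{S-\{z_j\}}) = \frac{1}{2}\sum_{i=1}^d\left[\log\frac{(\sigma^{(1)}_i)^2}{(\sigma^{(0)}_i)^2} - 1 + \frac{(\sigma^{(0)}_i)^2}{(\sigma^{(1)}_i)^2} + \frac{(\mu^{(0)}_i - \mu^{(1)}_i)^2}{(\sigma^{(1)}_i)^2}\right].
\end{gather*}
It then suffices to control the per-coordinate mean part and variance part separately and sum.

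For the mean part I would use only $\sigma^{(1)}_i \geq M_1$ to get $\frac{(\mu^{(0)}_i - \mu^{(1)}_i)^2}{(\sigma^{(1)}_i)^2} \leq \frac{(\mu^{(0)}_i-\mu^{(1)}_i)^2}{M_1^2}$; summing over $i$ and noting that the mean coordinates form a subvector of $\hat\lambda^{-z_j}_S - \hat\lambda_{S-\{z_j\}}$ yields a contribution of at most $\frac{1}{2M_1^2}\|\hat\lambda^{-z_j}_S - \hat\lambda_{S-\{z_j\}}\|_2^2$, which is exactly the quadratic term in the claimed bound.

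The crux is the variance part. Abbreviating $a = \sigma^{(0)}_i$ and $b = \sigma^{(1)}_i$, both lying in $[M_1,M_2]$, I must establish the per-coordinate inequality
\begin{gather*}
\log\frac{b^2}{a^2} - 1 + \frac{a^2}{b^2} \leq \frac{2(M_1+M_2)}{M_1^2}\,|a-b|.
\end{gather*}
The delicate point is that the left-hand side is nonnegative and behaves quadratically near $a = b$, so a linear upper bound is possible only because the variances are confined to a compact interval. I would split into the logarithmic and ratio terms: for the logarithm, $\log\frac{b^2}{a^2} = 2\log(b/a) \leq \frac{2|a-b|}{M_1}$ via $\log(1+x) \leq x$ together with $a \geq M_1$ (the bound holds trivially when $b < a$, since the logarithm is then negative); for the ratio, factoring $a^2 - b^2 = (a-b)(a+b)$ with $a+b \leq 2M_2$ and $b \geq M_1$ gives $\frac{a^2}{b^2} - 1 \leq \frac{2M_2}{M_1^2}|a-b|$. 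Adding the two and using $\frac{1}{M_1} + \frac{M_2}{M_1^2} = \frac{M_1+M_2}{M_1^2}$ yields the desired inequality.

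Finally, I would multiply the variance bound by $\tfrac12$, sum over $i$, and bound $\sum_i |\sigma^{(0)}_i - \sigma^{(1)}_i| \leq \|\hat\lambda^{-z_j}_S - \hat\lambda_{S-\{z_j\}}\|_1$ (the variance coordinates again forming a subvector), giving a variance contribution of at most $\frac{M_1+M_2}{M_1^2}\|\hat\lambda^{-z_j}_S - \hat\lambda_{S-\{z_j\}}\|_1$. Combining the two parts produces $\mathrm{KL} \leq \frac{1}{2M_1^2}\big(2(M_1+M_2)\|\hat\lambda^{-z_j}_S - \hat\lambda_{S-\{z_j\}}\|_1 + \|\hat\lambda^{-z_j}_S - \hat\lambda_{S-\{z_j\}}\|_2^2\big) = \varepsilon_{\hat\lambda_S,z_j}$, which is precisely the $\varepsilon$-certified knowledge removal claim. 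The main obstacle is the variance-part inequality, where converting the intrinsically quadratic discrepancy into a linear bound exploits the compactness gap $M_2 - M_1$ and requires care with the sign of the logarithmic term.
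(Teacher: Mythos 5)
Your proposal is correct and follows essentially the same route as the paper's own proof: both compute the closed-form, coordinate-wise KL divergence between the two mean-field Gaussians, bound the mean contribution by $\frac{1}{2M_1^2}\|\hat\lambda^{-z_j}_S - \hat\lambda_{S-\{z_j\}}\|_2^2$ using $\sigma^{(1)}_i \geq M_1$, and linearize the variance contribution via $\log(1+x)\leq x$ together with the factorization $a^2-b^2=(a-b)(a+b)$ and the bounds $a+b\leq 2M_2$, $b\geq M_1$, yielding the $\frac{M_1+M_2}{M_1^2}\|\cdot\|_1$ term. Your explicit treatment of the sign cases for the logarithmic and ratio terms is a slightly more careful write-up of exactly the estimates the paper uses implicitly.
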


The proof of Theorem \ref{thm:vi_certified} is omitted here and presented in Appendix \ref{app:proof_bif_certified}.
When all the conditions in Theorem \ref{thm:vi_bif} hold, we have that
$\|\hat\lambda^{-z_j} - \hat\lambda_{S-\{z_j\}}\|_1\leq O(1/n^2)$
and
$\|\hat\lambda^{-z_j}_S - \hat\lambda_{S-\{z_j\}}\|^2_2\leq O(1/n^4)$.
Thus, $\varepsilon_{\hat\lambda_S,z_j} \leq O(1/n^2)$.
Therefore, $\mathcal{A}_{\mathrm{VI}}$ performs $O(1/n^2)$-certified knowledge removal for mean-field Gaussian variational distribution $q_{\hat\lambda_S}$.

\subsection{MCMC Forgetting Algorithm}
\label{sec:mcmc_if}

In MCMC, there is no explicit objective function.
In this work, we transport $p(\theta|S)$ to approximate $p(\theta|S^\prime)$. Specifically, we want to find a ``drift'' $\hat\Delta^{-S^\prime}_S$ named {\it drifting influence} to minimize the following KL divergence,
\begin{gather}
    \hat\Delta^{-S^\prime}_S = \arg\min_{\Delta}\mathrm{KL}(p(\theta|S) \| p(\theta+\Delta|S- S^\prime)).
    \label{equ:mcmc_formulate}
\end{gather}

The KL term in eq. (\ref{equ:mcmc_formulate}) can be expanded as follows,
\begin{align}
&\mathrm{KL}(p(\theta|S) \| p(\theta+\Delta|S- S^\prime)) \nonumber \\
&\quad = \E_{p(\theta|S)}\log p(\theta|S) - \E_{p(\theta|S)}\log p(\theta+\Delta|S- S^\prime) \nonumber \\
&\quad = -\E_{p(\theta|S)}\log\left( \frac{p(\theta+\Delta) \prod_{z\in S} p(z|\theta+\Delta)}{p(S- S^\prime) \prod_{z \in S^\prime} p(z|\theta+\Delta)} \right) + \E_{p(\theta|S)}\log p(\theta|S) \nonumber \\
&\quad = \sum_{i=1}^n -\E_{p(\theta|S)}\log p(z_i|\theta+\Delta) -\E_{p(\theta|S)}\log p(\theta+\Delta) \nonumber \\
&\quad \quad - \sum_{z\in S^\prime} -\E_{p(\theta|S)}\log p(z|\theta+\Delta) + \mathrm{Constant}.
\end{align}
Despite the constant term, the remaining has the similar structure with $F(\gamma,S- S^\prime)$ (eq. (\ref{equ:prob_obj_rm_set})).
Thus, we define the energy function for MCMC by replacing $h(\gamma,z)$ and $f(\gamma)$ with $-\E_{p(\theta|S)}\log p(z|\theta+\Delta)$ and $-\E_{p(\theta|S)}\log p(\theta+\Delta)$, respectively.

Based on the energy function, we define an MCMC influence function to characterize the {drifting influences} induced by a single datum as follows.
\begin{definition}[MCMC influence function]
\label{def:mcmc_if}
For any example $z_j \in S$, its MCMC influence function is defined to be
\begin{align}
    &\mathcal{I}_{\mathrm{MCMC}}(z_j)
    := - \left(\E_{p(\theta|S)}\nabla_\theta^2\log p(\theta,S)\right)^{-1} \left(\E_{p(\theta|S)} \nabla_\theta \log p(z_j|\theta)\right)^T.
    \label{equ:mcmc_if}
\end{align}
\end{definition}

Then, the {\it MCMC forgetting} algorithm is as follows,
\begin{gather*}
    p^{-z_j}_S(\theta) = \mathcal{A}_{\mathrm{MCMC}}(p(\theta|S),z_j) = p(\theta+\mathcal{I}_{\mathrm{MCMC}}(z_j)|S).
\end{gather*}

In practice, we do not have the posterior $p(\theta|S)$, but only some samples drawn from $p(\theta|S)$.
Thus, we are not able to transform the posterior $p(\theta|S)$ to the processed distribution $p^{-z_j}_S(\theta)$.
However, the probability of drawing a sample $\theta_t$ from $p(\theta|S)$ equals that of drawing a sample $\theta_t-\mathcal{I}_{\mathrm{MCMC}}(z_j)$ from $p^{-z_j}_S(\theta)$.
Therefore, when performing MCMC forgetting,
we will replace any drawn sample $\theta_t$ by a new sample $\theta_t-\mathcal{I}_{\mathrm{MCMC}}(z_j)$.


We prove that the MCMC forgetting algorithm can provably remove the {drifting influence} induced by a single datum from the learned model.
\begin{theorem}
\label{thm:mcmc_if}
Let $\gamma:=\Delta$, $\Gamma:=\mathrm{supp}(\Delta)$, $h(\gamma,z):= -\E_{p(\theta|S)}\log p(z|\theta+\Delta)$ and $f(\gamma):= -\E_{p(\theta|S)}\log p(\theta+\Delta)$. Suppose that Assumptions \ref{ass:compact} and \ref{ass:cont_diff_scox} hold. 
Then, we have that
\begin{align*}
    \hat\Delta^{-z_j}_S
    &= -\mathcal{I}_{\mathrm{MCMC}}(z_j) + O\left(\frac{1}{n^2}\right)
    \approx -\mathcal{I}_{\mathrm{MCMC}}(z_j),
\end{align*}
where $\hat\Delta^{-z_j}_S$ is the global minimizer of $\mathrm{KL}(p(\theta|S) \| p(\theta+\Delta|S-\{z_j\}))$.
\end{theorem}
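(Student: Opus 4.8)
The plan is to obtain the claim as a direct specialization of Corollary \ref{cor:bif_rm_z}. Under the stated identifications $\gamma:=\Delta$, $h(\gamma,z):=-\E_{p(\theta|S)}\log p(z|\theta+\Delta)$ and $f(\gamma):=-\E_{p(\theta|S)}\log p(\theta+\Delta)$, the energy reads $F(\gamma,S)=-\E_{p(\theta|S)}\log p(\theta+\Delta,S)$, and the expansion of the KL term in eq. (\ref{equ:mcmc_formulate}) shows that, up to a $\Delta$-independent constant, $F(\gamma,S-\{z_j\})=\mathrm{KL}(p(\theta|S)\|p(\theta+\Delta|S-\{z_j\}))$; hence its global minimizer $\gamma_{S-\{z_j\}}$ is exactly the drift $\hat\Delta^{-z_j}_S$. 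Thus the whole theorem amounts to evaluating the right-hand side of Corollary \ref{cor:bif_rm_z} at the correct base point and recognizing the result as $-\mathcal{I}_{\mathrm{MCMC}}(z_j)$.

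First I would pin down the base point, showing that the global minimizer $\gamma_S$ of $F(\gamma,S)$ is the zero drift $\Delta=0$. Up to a constant, $F(\gamma,S)=\mathrm{KL}(p(\theta|S)\|p(\theta+\Delta|S))$, which is nonnegative and vanishes at $\Delta=0$, so $\Delta=0$ is a global minimizer; uniqueness follows from the strong convexity of $F$ established in Lemma \ref{lem:scox_eps} and Theorem \ref{thm:imp_fun}. A cleaner equivalent check is to verify $\nabla_\gamma F(0,S)=0$ directly: since $\nabla_\theta\log p(\theta,S)=\nabla_\theta\log p(\theta|S)$, this reduces to the expected-score identity $\E_{p(\theta|S)}\nabla_\theta\log p(\theta|S)=0$, obtained by differentiating $\int p(\theta|S)\,\mathrm{d}\theta=1$ under the integral sign. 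Either way, $\gamma_S=0$.

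Next I would compute the two derivatives entering Corollary \ref{cor:bif_rm_z} at $\Delta=0$. Interchanging expectation and differentiation and using $\nabla_\Delta(\cdot)|_{\Delta=0}=\nabla_\theta(\cdot)$, one gets $\nabla_\gamma h(0,z_j)=-\E_{p(\theta|S)}\nabla_\theta\log p(z_j|\theta)$ and $\nabla^2_\gamma F(0,S)=-\E_{p(\theta|S)}\nabla^2_\theta\log p(\theta,S)$. Substituting $\gamma_S=0$ into Corollary \ref{cor:bif_rm_z} and using $(-A)^{-1}=-A^{-1}$ to cancel the two minus signs yields
\begin{align*}
\hat\Delta^{-z_j}_S &= \left(\E_{p(\theta|S)}\nabla^2_\theta\log p(\theta,S)\right)^{-1}\left(\E_{p(\theta|S)}\nabla_\theta\log p(z_j|\theta)\right)^T + O\left(\frac{1}{n^2}\right),
\end{align*}
which is precisely $-\mathcal{I}_{\mathrm{MCMC}}(z_j)+O(1/n^2)$ by Definition \ref{def:mcmc_if}, and the leading term is the advertised approximation.

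The step I expect to demand the most care is the regularity bookkeeping rather than the algebra: justifying the two interchanges of expectation and differentiation (so that the expected-score identity and the gradient/Hessian formulas hold with expectations taken inside), and confirming the invertibility of $\E_{p(\theta|S)}\nabla^2_\theta\log p(\theta,S)$. Both are furnished by the hypotheses already in play---the $C^3$ smoothness and strong convexity of Assumption \ref{ass:cont_diff_scox} together with the compactness of $\Gamma,\mathcal{Z}$ in Assumption \ref{ass:compact} provide the domination needed for differentiating under the integral, while the strong convexity certified in Lemma \ref{lem:scox_eps} (at $\tau=0$) makes the Hessian at $\gamma_S$ invertible. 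Once these points are secured the theorem follows immediately from Corollary \ref{cor:bif_rm_z}.
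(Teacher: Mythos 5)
Your proposal is correct and follows essentially the same route as the paper's proof: both specialize Corollary \ref{cor:bif_rm_z} at the base point $\gamma_S = 0$ (the global minimizer of $\mathrm{KL}(p(\theta|S)\|p(\theta+\Delta|S))$), interchange expectation with differentiation so that $\nabla_\Delta(\cdot)|_{\Delta=0}=\nabla_\theta(\cdot)$, and cancel the two minus signs to recognize $-\mathcal{I}_{\mathrm{MCMC}}(z_j)$. Your write-up is in fact somewhat more careful than the paper's, which asserts $\gamma_S=0$ without the KL-nonnegativity or expected-score justification and leaves the interchange-of-limits and invertibility bookkeeping implicit.
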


\begin{remark}
When Assumptions \ref{ass:compact} and \ref{ass:cont_diff_scox} do not hold, performing MCMC forgetting can be seen as finding a critical point $\hat\Delta^{-z_j}_S$ of $\mathrm{KL}(p(\theta-\Delta|S) \| p(\theta|S-\{z_j\}))$.
\end{remark}

\begin{proof}
When all the conditions hold, the global minimizer of $\mathrm{KL}(p(\theta|S)\|p(\theta+\Delta|S))$ is exactly $0$.
Meanwhile, $\hat\Delta^{-z_j}_S$ is the global minimizer of $\mathrm{KL}(p(\theta|S) \| p(\theta+\Delta|S-\{z_j\}))$.
Combining Corollary \ref{cor:bif_rm_z}, we have that
\begin{align}
\hat\Delta^{-z_j}_S
=& \hat\Delta^{-z_j}_S - 0 \nonumber \\
=&\left(\nabla^2_\Delta \left[ -\E_{p(\theta|S)}\log p(\theta+\Delta,S) \right]_{\Delta=0} \right)^{-1}  \nonumber \\
&\cdot \left(\nabla_\Delta \left[ -\E_{p(\theta|S)}  \log p(z_j|\theta+\Delta) \right]_{\Delta=0} \right)^T + O\left(\frac{1}{n^2}\right) \nonumber \\
=&\left(\E_{p(\theta|S)}\nabla^2_\theta \log p(\theta,S) \right)^{-1}
\cdot \left(\E_{p(\theta|S)} \nabla_\theta  \log p(z_j|\theta) \right)^T + O\left(\frac{1}{n^2}\right) \nonumber \\
=& -\mathcal{I}_{\mathrm{MCMC}}(z_j) + O\left(\frac{1}{n^2}\right) \nonumber \\
\approx& -\mathcal{I}_{\mathrm{MCMC}}(z_j), \nonumber
\end{align}

The proof is completed.
\end{proof}

Similarly, by applying Corollary \ref{cor:bif_rm_S_prime}, the MCMC influence function of a sample set $S^\prime$ is as follows,
\begin{gather*}
    \mathcal{I}_{\mathrm{MCMC}}(S^\prime) = \sum_{z_j\in S^\prime} \mathcal{I}_{\mathrm{MCMC}}(z_j).
\end{gather*}
It guarantees that one can adopt MCMC forgetting algorithm to remove a group of datums at one time.
This improves the efficiency of removing large amounts of datums.

The MCMC forgetting algorithm also applies to SGMCMC, because SGMCMC draws samples from some target posterior $p(\theta|S)$ \citep{chen2014stochastic, welling2011bayesian, ma2015complete}, the same as MCMC.

We then give the $\varepsilon$-certified knowledge removal guarantee for MCMC forgetting.

\begin{theorem}
\label{thm:mcmc_infty_kl}
Suppose that
\begin{gather*}
p(\theta|S)=\mathcal{N}(\theta_1, (n J(\theta_1))^{-1}), \\
p(\theta|S-\{z_j\})=\mathcal{N}(\theta_2, ((n-1)J(\theta_2))^{-1}).
\end{gather*}
Let $p^{-z_j}_S(\theta)=\mathcal{A}_{\mathrm{MCMC}}(p(\theta|S),z_j)=p(\theta+\mathcal{I}_{\mathrm{MCMC}}(z)|S)$ be the processed model.
Then, $\mathcal{A}_{\mathrm{MCMC}}(p(\theta|S),z_j)$ performs $O(\varepsilon_{\theta_1,z_j})$-certified knowledge removal, where
\begin{gather*}
\varepsilon_{\theta_1,z_j} = (n-1) (\theta_1^\prime-\theta_2)^T J(\theta_2)(\theta_1^\prime -\theta_2)
+\mathrm{tr}\left(J^{-1}(\theta_1)(J(\theta_2)-J(\theta_1))\right) +\log\frac{|J(\theta_1)|}{|J(\theta_2)|}
\end{gather*}
and $\theta_1^\prime = \theta_1-\mathcal{I}_{\mathrm{MCMC}}(z_j)$.
\end{theorem}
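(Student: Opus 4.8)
The plan is to reduce the claim to the closed-form Kullback--Leibler divergence between two multivariate Gaussians and then match the resulting terms against $\varepsilon_{\theta_1,z_j}$. First I would make the processed distribution explicit. Since MCMC forgetting replaces every drawn sample $\theta_t\sim p(\theta|S)$ by $\theta_t-\mathcal{I}_{\mathrm{MCMC}}(z_j)$, and $p(\theta|S)=\mathcal{N}(\theta_1,(nJ(\theta_1))^{-1})$ is Gaussian by hypothesis, the processed model is again Gaussian with shifted mean and unchanged covariance, i.e.\ $p^{-z_j}_S=\mathcal{N}(\theta_1',(nJ(\theta_1))^{-1})$ with $\theta_1'=\theta_1-\mathcal{I}_{\mathrm{MCMC}}(z_j)$. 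By the definition of $\varepsilon$-certified knowledge removal it then suffices to bound $\mathrm{KL}(p^{-z_j}_S\|p(\theta|S-\{z_j\}))$, a divergence between the two given Gaussians $\mathcal{N}(\theta_1',(nJ(\theta_1))^{-1})$ and $\mathcal{N}(\theta_2,((n-1)J(\theta_2))^{-1})$.

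Next I would substitute these parameters into the standard identity
\begin{gather*}
\mathrm{KL}(\mathcal{N}(\mu_0,\Sigma_0)\|\mathcal{N}(\mu_1,\Sigma_1)) = \frac{1}{2}\left[ \log\frac{|\Sigma_1|}{|\Sigma_0|} - d + \mathrm{tr}(\Sigma_1^{-1}\Sigma_0) + (\mu_1-\mu_0)^T\Sigma_1^{-1}(\mu_1-\mu_0) \right],
\end{gather*}
taking $\mu_0=\theta_1'$, $\Sigma_0=(nJ(\theta_1))^{-1}$, $\mu_1=\theta_2$, $\Sigma_1=((n-1)J(\theta_2))^{-1}$, and $d=\dim\theta$. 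Using $\Sigma_1^{-1}=(n-1)J(\theta_2)$ and $\Sigma_0=\frac{1}{n}J^{-1}(\theta_1)$, the Mahalanobis term becomes $(n-1)(\theta_1'-\theta_2)^T J(\theta_2)(\theta_1'-\theta_2)$, the trace term becomes $\frac{n-1}{n}\mathrm{tr}(J(\theta_2)J^{-1}(\theta_1))$, and the log-determinant term splits as $\log\frac{|J(\theta_1)|}{|J(\theta_2)|}+d\log\frac{n}{n-1}$. Collecting these, invoking the cyclic property of the trace, and writing $\mathrm{tr}(J^{-1}(\theta_1)(J(\theta_2)-J(\theta_1)))=\mathrm{tr}(J(\theta_2)J^{-1}(\theta_1))-d$, I would arrive at
\begin{gather*}
2\,\mathrm{KL}(p^{-z_j}_S\|p(\theta|S-\{z_j\})) = \varepsilon_{\theta_1,z_j} + \left[ d\log\frac{n}{n-1} - \frac{1}{n}\mathrm{tr}(J(\theta_2)J^{-1}(\theta_1)) \right].
\end{gather*}

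Finally I would control the residual bracket and the leading factor of one half. Expanding $d\log\frac{n}{n-1}=\frac{d}{n}+O(1/n^2)$ and, since $\theta_1-\theta_2=O(1/n)$ forces $J(\theta_2)J^{-1}(\theta_1)=I+O(1/n)$ by smoothness and boundedness of $J$ on the compact support (Assumption \ref{ass:compact}), $\frac{1}{n}\mathrm{tr}(J(\theta_2)J^{-1}(\theta_1))=\frac{d}{n}+O(1/n^2)$, the two leading $\frac{d}{n}$ contributions cancel and the bracket is $O(1/n^2)$. Each summand of $\varepsilon_{\theta_1,z_j}$ is of order $1/n$ (the Mahalanobis term is $(n-1)\cdot O(1/n^2)$ because $\theta_1'-\theta_2=O(1/n)$, and the trace and log-determinant gaps are $O(1/n)$ by smoothness of $J$), so $\varepsilon_{\theta_1,z_j}=O(1/n)$ dominates the $O(1/n^2)$ residual, whence $\mathrm{KL}(p^{-z_j}_S\|p(\theta|S-\{z_j\}))=\tfrac{1}{2}\varepsilon_{\theta_1,z_j}+O(1/n^2)=O(\varepsilon_{\theta_1,z_j})$, which is exactly $O(\varepsilon_{\theta_1,z_j})$-certified knowledge removal.

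The main obstacle I anticipate is not the Gaussian KL algebra, which is mechanical, but the order bookkeeping in the last step: one must argue that $\theta_1$, $\theta_1'$, and $\theta_2$ all coincide up to $O(1/n)$, so that $\varepsilon_{\theta_1,z_j}$ is genuinely of order $1/n$ and the leftover $d\log\frac{n}{n-1}-\frac{1}{n}\mathrm{tr}(J(\theta_2)J^{-1}(\theta_1))$, together with the factor $\tfrac{1}{2}$, is safely absorbed into the $O(\varepsilon_{\theta_1,z_j})$ claim. This relies on $\mathcal{I}_{\mathrm{MCMC}}(z_j)=O(1/n)$, which follows from the bound $\|\partial\hat\gamma^{-z_j}_S/\partial\tau\|_2\leq O(1/n)$ of Theorem \ref{thm:1st_norm}, and on the continuity and boundedness of $J$ guaranteed by the compactness Assumption \ref{ass:compact}.
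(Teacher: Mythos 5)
Your proposal is correct and follows essentially the same route as the paper: both reduce the claim to the closed-form KL divergence between the two Gaussians $\mathcal{N}(\theta_1',(nJ(\theta_1))^{-1})$ and $\mathcal{N}(\theta_2,((n-1)J(\theta_2))^{-1})$ and match the resulting Mahalanobis, trace, and log-determinant terms against $\varepsilon_{\theta_1,z_j}$. If anything, your bookkeeping is slightly more explicit than the paper's, which folds the residual $d\log\frac{n}{n-1}-\frac{1}{n}\mathrm{tr}\left(J(\theta_2)J^{-1}(\theta_1)\right)$ directly into a single $O(\varepsilon_{\theta_1,z_j})$ assertion (and, incidentally, your splitting of the log-determinant as $\log\frac{|J(\theta_1)|}{|J(\theta_2)|}+d\log\frac{n}{n-1}$ is the correct form of a term the paper miswrites as a multiplicative factor $\left(\frac{n}{n-1}\right)^d$).
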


%

Theorem \ref{thm:mcmc_infty_kl} assumes that the posteriors $p(\theta|S)$ and $p(\theta|S-\{z_j\})$ are Gaussian.
This assumption is from the Bayesian asymptotic theory \citep{gelman2013bayesian, le2012asymptotic}.
Suppose the training set $S$ is drawn from distribution $f_1(z)$, while $S-\{z_j\}$ is drawn from distribution $f_2(z)$.
Then, under some mild assumptions, when the training sample size $n$ is sufficiently large, the posteriors $p(\theta|S)$ and $p(\theta|S-\{z_j\})$ approach Gaussian distributions as below,
\begin{gather*}
    p(\theta|S)\sim \mathcal{N}(\theta_1,(n J(\theta_1))^{-1}), \\
    p(\theta|S-\{z_j\})\sim \mathcal{N}(\theta_2,((n-1)J(\theta_2))^{-1}),
\end{gather*}
where $\theta_i = \arg\min_{\theta} \mathrm{KL}(f_i(z)\|p(z|\theta))$, $i=\{1,2\}$, $J(\theta) = \E_{p(z|\theta)} \left[- \left. \frac{\partial^2}{\partial \theta^2}\log p(z|\theta) \right| \theta \right]$ is the Fisher information.


The detailed proof of Theorem \ref{thm:mcmc_infty_kl} is omitted here and given in appendix \ref{app:proof_bif_certified}.

Combining the conditions of Theorem \ref{thm:mcmc_if}, we have that $\|\theta^\prime_1-\theta_2\|_2 \leq O(1/n^2)$.
Thus, as $n\to\infty$, $\varepsilon_{\theta_1,z}$ will eventually converge to $\mathrm{tr}\left(J^{-1}(\theta_1)(J(\theta_2)-J(\theta_1))\right) +\log\frac{|J(\theta_1)|}{|J(\theta_2)|}$.

\subsection{Efficient Implementation}
\label{sec:implement}


A major computing burden in both variational inference forgetting and MCMC forgetting algorithms is calculating the product of $H^{-1}v$,
where $H$ is the Hessian matrix of some vector-valued function $f(x)$ and $v$ is a constant vector.
the calculation above would have a considerably high computational cost.
We follow {Agarwal \it et al.} \citealt{agarwal2017second} and {Koh and Liang} \citealt{koh2017understanding} to apply a divide-and-conquer strategy to address the issue. {This strategy relies on calculating the Hessian-vector product $Hv$.}

\textbf{Hessian-vector product (HVP).}
We first discuss how to efficiently calculate $Hv$.
The calculation of $Hv$ can be decomposed into two steps: (1) calculate $\frac{\partial f(x)}{\partial x}$ and then (2) calculate $\frac{\partial}{\partial x}\left( \frac{\partial f(x)}{\partial x} \cdot v \right)$.
It is worth noting that $\frac{\partial f(x)}{\partial x} \in \mathbb R^{1 \times d}$ and $v \in \mathbb R^{d \times 1}$, where $d > 0$ is the dimension of data. Thus, $\left(\frac{\partial f(x)}{\partial x} \cdot v\right)$ is a scalar value. Calculating its gradient $\frac{\partial}{\partial x}\left( \frac{\partial f(x)}{\partial x} \cdot v \right)$ has a very low computational cost on platform PyTorch \citep{paszke2017automatic} or TensorFlow \citep{tensorflow2015-whitepaper}.

\textbf{Calculating $H^{-1}v$.}
When the norm $\|H\| \leq 1$, the matrix $H^{-1}$ can be expanded by the Taylor's series as $ H^{-1}=\sum_{i=0}^\infty (I-H)^i$. Define that $H_j^{-1} = \sum_{i=0}^j (I-H)^i$. Then, we have the following recursive equation,
\begin{equation*}
H_j^{-1}v = v + (I-H)H_{j-1}^{-1}v.
\end{equation*}
{Agarwal \it et al.} \citealt{agarwal2017second} prove that when $j \rightarrow \infty$, we have $\E[H_j^{-1}] \rightarrow H^{-1}$.
Therefore, we employ $H^{-1}_j v$ to approximate $H^{-1}v$.

Moreover, to secure the condition $\|H\| \leq 1$ stands, we scale $H$ to $cH$ by a scale $c \in \R^{+}$, such that $\|cH\| \leq 1$. Then, we approximate $(cH)^{-1}$. Eventually, we have that $H^{-1}=c(cH)^{-1}$. We can plug it to the applicable equations above.

\section{Generalization Analysis}
\label{sec:generalization}
In this section, we study the generalization ability of the models that processed by forgetting algorithms.
We derive generalization bounds for a mean-field Gaussian variational model and a specified Gaussian MCMC model, Based on PAC-Bayes framework.
{All the proofs in this section are presented in Appendix \ref{app:generalization}.}

Generalization ability is important to machine learning algorithms, which refers to the ability to make accurate predictions on unseen data. A standard measurement of the generalization ability is the generalization bound, {\it i.e.}, the upper bound of the difference between expected risk and empirical risk \citep{musavi1994generalization, vapnik2013nature, mohri2018foundations}. An algorithm with a small generalization bound is expected to generalize well.
Existing generalization bound can be roughly divided into three categories:
(1) generalization bounds based on the hypothesis complexity, such as VC dimension \citep{blumer1989learnability, vapnik2006estimation}, Rademacher complexity \citep{koltchinskii2000rademacher, koltchinskii2001rademacher, bartlett2002rademacher}, and covering number \citep{dudley1967sizes, haussler1995sphere}, which suggest implementations control the hypothesis complexity to help model generalize well;
(2) generalization bounds based on the algorithmic stability \citep{rogers1978finite, bousquet2002stability}, which follow the intuition that an algorithm with good generalization ability is robust to the interference of single data points; (3) generalization bounds established under the PAC-Bayesian framework \citep{mcallester1999pac, mcallester1998some}; and (4) generalization guarantees from differential privacy \citep{dwork2015preserving, oneto2017differential, he2020tighter}.
The excellent generalization ability of the over-parameterized model, including Bayesian neural network and others in deep learning, is somehow beyond the explanation of the conventional learning theory. Establishing theoretical foundations has been attracted wide attention \citep{e2020towards, he2020recent}.

Let $Q$ be a probabilistic model.
Suppose $S$ is the training sample set.
Then, the {\it expected risk} $\mathcal{R}(Q)$ and {\it empirical risk} $\mathcal{\hat R}(Q,S)$ of $Q$ are defined to be
\begin{gather*}
    \mathcal{R}(Q) = \mathop{\E}_{h \sim Q} \mathop{\E}_{z} \ell(h, z), \ \ \ \
    \mathcal{\hat R}(Q,S) = \mathop{\E}_{h \sim Q} \frac{1}{n}\sum_{i=1}^n\ell(h, z_i),
\end{gather*}
where $h$ is a hypothesis drawn from $Q$, and $\ell$ is the loss function ranging in $[0,1]$.
The difference of expected risk and empirical risk is the generalization error. Its magnitude characterizes the generalizability of the algorithm.

We then prove a generalization bound for mean-field Gaussian variational distributions.
\begin{theorem}
\label{thm:gen_if}
Suppose all the conditions in Theorems \ref{thm:vi_bif} and \ref{thm:vi_certified} hold.
Let $q_{\lambda}=\mathcal{N}(\mu,\sigma^2 I)$ denotes the mean-field Gaussian distribution learned on the training set $S$.
Let $\lambda^- = \mathcal{A}_{\mathrm{VI}}(\lambda,z_j)$ denotes the processed variational distribution parameter, where $z_j \in S$.
Also, let $\Delta_{\lambda} = \mathcal{I}_{\mathrm{VI}}(z_j) = (\Delta_\mu,\Delta_\sigma)$ denotes the variational inference influence function.
Then, for any real $\delta \in (0,1)$, with probability at least $1-\delta$, the following inequality holds:
\begin{gather}
    \mathcal{R}(q_{\lambda^-})
    \leq \mathcal{\hat R}(q_{\lambda^-},S)
         + \sqrt{\frac{C_{\lambda,\Delta_\lambda} + 2\log\frac{1}{\delta}+2\log n -d + 4}{4n-2}},
    \label{equ:gaussian_vi_bound}
\end{gather}
where
\begin{gather*}
C_{\lambda,\Delta_{\lambda}}=\|\Delta_{\lambda}\|^2 + 2 \|\lambda\| \cdot \|\Delta_{\lambda}\| + \|\lambda\|^2
- 2 \sum_{k=1}^d\log\left(\sigma_k-\Delta_{\sigma_k}\right)
\leq O(1),
\end{gather*}
and $\|\Delta_{\lambda}\| \leq O\left(\frac{1}{n}\right)$.
\end{theorem}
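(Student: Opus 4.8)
The plan is to derive the stated bound from a standard McAllester-type PAC-Bayesian inequality, specialized to a fixed Gaussian prior and the processed distribution as posterior, and then reduce the KL term to an explicit Gaussian--Gaussian divergence. First I would fix the \emph{data-independent} prior $P = \mathcal{N}(0,I)$ on the hypothesis space and invoke the PAC-Bayesian bound in the form that, with probability at least $1-\delta$ over $S$ and \emph{simultaneously for every} posterior $Q$,
\[
\mathcal{R}(Q) \leq \hat{\mathcal{R}}(Q,S) + \sqrt{\frac{\mathrm{KL}(Q\|P) + \log\tfrac{1}{\delta} + \log n + 2}{2n-1}}.
\]
Because this holds uniformly in $Q$, it legitimately applies to the data-dependent choice $Q = q_{\lambda^-}$, where $\lambda^- = \lambda - \Delta_\lambda$ and $\Delta_\lambda = \mathcal{I}_{\mathrm{VI}}(z_j) = (\Delta_\mu,\Delta_\sigma)$.

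The second step evaluates $\mathrm{KL}(q_{\lambda^-}\|P)$ in closed form. Writing $\mu^- = \mu - \Delta_\mu$ and $\sigma^-_k = \sigma_k - \Delta_{\sigma_k}$, the standard formula for the divergence between two diagonal Gaussians gives
\[
2\,\mathrm{KL}(q_{\lambda^-}\|P) = \|\mu^-\|^2 + \sum_{k=1}^d (\sigma^-_k)^2 - 2\sum_{k=1}^d \log\sigma^-_k - d.
\]
I would then recognize the first two sums as $\|\lambda^-\|^2$ (since $\lambda^- = (\mu^-,\sigma^-)$), and bound the quadratic part by Cauchy--Schwarz on the cross term, $\|\lambda^-\|^2 = \|\lambda - \Delta_\lambda\|^2 \leq \|\lambda\|^2 + 2\|\lambda\|\,\|\Delta_\lambda\| + \|\Delta_\lambda\|^2$. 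This yields exactly $2\,\mathrm{KL}(q_{\lambda^-}\|P) \leq C_{\lambda,\Delta_\lambda} - d$. Substituting into the PAC-Bayesian inequality and clearing the factor of $2$ (so the denominator becomes $4n-2 = 2(2n-1)$ and the numerator becomes $C_{\lambda,\Delta_\lambda} + 2\log\tfrac1\delta + 2\log n - d + 4$) reproduces the stated bound (\ref{equ:gaussian_vi_bound}) verbatim.

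The final step certifies the two asymptotic rates. The estimate $\|\Delta_\lambda\| \leq O(1/n)$ follows directly from Theorem \ref{thm:1st_norm}: under the identifications of Theorem \ref{thm:vi_bif}, the influence $\mathcal{I}_{\mathrm{VI}}(z_j)$ equals $\partial\hat\gamma^{-z_j}_S(0)/\partial\tau$, whose norm is $O(1/n)$. For $C_{\lambda,\Delta_\lambda} \leq O(1)$ I would use compactness of $\Gamma$ (Assumption \ref{ass:compact}) to get $\|\lambda\| = O(1)$, so that both $\|\Delta_\lambda\|^2 = O(1/n^2)$ and the cross term $2\|\lambda\|\,\|\Delta_\lambda\| = O(1/n)$ vanish, while the variance constraint $M_1 \leq \sigma^-_k \leq M_2$ inherited from Theorem \ref{thm:vi_certified} keeps each $\log\sigma^-_k$ bounded, giving $-2\sum_k\log\sigma^-_k = O(1)$.

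The main obstacle is not any single deep step but verifying that the scalings line up precisely: I would have to confirm that the Cauchy--Schwarz relaxation of the cross term converts the exact KL into the \emph{named} constant $C_{\lambda,\Delta_\lambda}$, and — more delicately — that $\Delta_\sigma$ is small enough (via $\|\Delta_\lambda\| = O(1/n)$) that $q_{\lambda^-}$ still lies in the family $\mathcal{Q}$, so that $M_1 \leq \sigma^-_k \leq M_2$ genuinely holds and the logarithmic term stays finite rather than blowing up as some $\sigma^-_k \to 0$. The rest is bookkeeping on the PAC-Bayes constants.
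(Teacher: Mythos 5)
Your proposal is correct and follows essentially the same route as the paper's proof: the fixed prior $P=\mathcal{N}(0,I)$ plugged into McAllester's uniform-over-$Q$ PAC-Bayes bound (Lemma \ref{lem:pac_bayes_gen}), the closed-form diagonal-Gaussian KL relaxed via the cross term $\|\lambda-\Delta_\lambda\|^2 \leq \|\lambda\|^2 + 2\|\lambda\|\,\|\Delta_\lambda\| + \|\Delta_\lambda\|^2$ to produce $C_{\lambda,\Delta_\lambda}$, and Theorem \ref{thm:1st_norm} for $\|\Delta_\lambda\|\leq O(1/n)$. The delicate point you flag is handled in the paper exactly as you suggest, via $\sigma_k - \Delta_{\sigma_k} \geq M_1 - \|\Delta_\lambda\|$ with $\|\Delta_\lambda\| \leq O(1/n)$ (one does not need $q_{\lambda^-}\in\mathcal{Q}$, only that the shifted standard deviations stay bounded away from zero), so your argument is complete.
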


\begin{remark}
This generalization bound is of order $O(\sqrt{(\log n) / n})$.
\end{remark}

%

\begin{corollary}
Variational inference forgetting increases the generalization bound by a value not larger than $O(1/n)$.
\end{corollary}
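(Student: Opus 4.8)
The plan is to read the ``generalization bound'' as the entire right-hand side of the inequality in Theorem \ref{thm:gen_if}, and to compare this right-hand side evaluated at the processed parameter $\lambda^-$ against the same PAC-Bayes bound evaluated at the original parameter $\lambda$. The bound for the unprocessed model is obtained by applying the identical argument with $\Delta_\lambda = 0$, which replaces $C_{\lambda,\Delta_\lambda}$ by $C_{\lambda,0} = \|\lambda\|^2 - 2\sum_{k=1}^d \log \sigma_k$ and replaces $\mathcal{\hat R}(q_{\lambda^-},S)$ by $\mathcal{\hat R}(q_{\lambda},S)$. Writing the two right-hand sides as $B(\lambda^-)$ and $B(\lambda)$, I would split their difference into an empirical-risk gap $\mathcal{\hat R}(q_{\lambda^-},S) - \mathcal{\hat R}(q_\lambda,S)$ and a complexity-term gap coming from the square root, and bound each by $O(1/n)$.

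First I would dispose of the complexity-term gap, which is the easier piece. The numerator under the square root differs only by $C_{\lambda,\Delta_\lambda} - C_{\lambda,0}$. Expanding this difference gives $\|\Delta_\lambda\|^2 + 2\|\lambda\|\,\|\Delta_\lambda\| - 2\sum_{k=1}^d\bigl(\log(\sigma_k - \Delta_{\sigma_k}) - \log\sigma_k\bigr)$. Using $\|\Delta_\lambda\| \leq O(1/n)$ (from Theorem \ref{thm:gen_if}), the boundedness of $\|\lambda\|$ granted by Assumption \ref{ass:compact}, and the variance lower bound $\sigma_k \geq M_1 > 0$ from Theorem \ref{thm:vi_certified}, a first-order expansion of each logarithm yields $C_{\lambda,\Delta_\lambda} - C_{\lambda,0} = O(1/n)$. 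Since the shared part of the numerator is $\Theta(\log n)$ (dominated by the $2\log n$ term) and the denominator is $4n-2$, the inequality $\sqrt{a+b}-\sqrt{a} \leq b/(2\sqrt{a})$ shows this gap is of order $O\bigl(1/(n^{3/2}\sqrt{\log n})\bigr)$, hence negligible relative to $O(1/n)$.

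Next I would bound the empirical-risk gap, which carries the dominant contribution. Because $\lambda^- = \lambda - \Delta_\lambda$ with $\|\Delta_\lambda\| \leq O(1/n)$ and the loss $\ell$ ranges in $[0,1]$, the change in each per-example expected loss is controlled by total variation: for every $z$, $|\E_{q_{\lambda^-}}\ell(h,z) - \E_{q_\lambda}\ell(h,z)| \leq \mathrm{TV}(q_{\lambda^-}, q_\lambda)$. For two mean-field Gaussians whose means and standard deviations are perturbed by $\Delta_\lambda$ and whose variances are bounded below by $M_1$, the total variation distance is $O(\|\Delta_\lambda\|) = O(1/n)$. Averaging over the $n$ training points preserves this bound, so $|\mathcal{\hat R}(q_{\lambda^-},S) - \mathcal{\hat R}(q_\lambda,S)| \leq O(1/n)$.

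Combining the two estimates gives $B(\lambda^-) - B(\lambda) = O(1/n)$, which is the claim. The hard part will be the empirical-risk gap: one must convert the $O(1/n)$ perturbation of the variational parameters into an $O(1/n)$ perturbation of the expected loss, and the clean route is through the total-variation distance between the two Gaussians. This step genuinely needs the variance lower bound $M_1 > 0$ supplied by Theorem \ref{thm:vi_certified}, since without it a small shift $\Delta_\lambda$ in the parameters could nonetheless produce a large total-variation distance as the distributions degenerate, breaking the $O(1/n)$ control.
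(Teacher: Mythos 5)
Your proof is correct, but it proves a slightly different --- and more demanding --- reading of the statement than the paper does. The paper defines the generalization bound as the upper bound on the gap between expected and empirical risk, i.e., only the square-root complexity term in Theorem \ref{thm:gen_if}; its proof is accordingly a single step: the numerator increases by $C_{\lambda,\Delta_\lambda}-C_{\lambda,0} = \|\Delta_\lambda\|^2 + 2\|\lambda\|\cdot\|\Delta_\lambda\| - 2\sum_{k=1}^d \log\bigl((\sigma_k-\Delta_{\sigma_k})/\sigma_k\bigr) = O(1/n)$, and by subadditivity of the square root ($\sqrt{a+b}\le\sqrt{a}+\sqrt{b}$) the complexity term therefore increases by at most $\sqrt{O(1/n)/(4n-2)} = O(1/n)$. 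Your mean-value estimate $\sqrt{a+b}-\sqrt{a}\le b/(2\sqrt{a})$ is sharper for this piece, giving $O\bigl(1/(n^{3/2}\sqrt{\log n})\bigr)$, though the paper's cruder bound already suffices for the claimed rate. The empirical-risk gap --- which you correctly flag as the hard part under your reading, and handle via the total-variation distance between the two mean-field Gaussians --- is simply absent from the paper's proof, because under the paper's definition the empirical risk is not part of the generalization bound. Your total-variation step is sound: it genuinely needs the variance floor $\sigma_k \ge M_1$ (and, for large $n$, $\sigma_k - \Delta_{\sigma_k} \ge M_1 - O(1/n) > 0$), and it works because the Gaussian TV distance is Lipschitz in $(\mu,\sigma)$ when variances are bounded below; note that the lazier route of Pinsker applied to the paper's KL estimate from Theorem \ref{thm:vi_certified} would only give $O(1/\sqrt{n})$ here, so your direct parametric bound is the right tool. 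In exchange for the extra work, your route buys a stronger conclusion --- the full PAC-Bayes upper bound on the expected risk, empirical term included, moves by at most $O(1/n)$ --- while the paper's reading buys a two-line proof.
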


\begin{proof}
Variational inference forgetting introduces the term $\|\Delta_{\lambda}\|$ into the generalization bound.
Thus, the generalization bound is increased by the following value:
\begin{align*}
& O\left( \sqrt{ \frac{\|\Delta_{\lambda}\|^2 + 2 \|\lambda\| \cdot \|\Delta_{\lambda}\| - 2 \sum_{k=1}^d\log\left( \frac{\sigma_k-\Delta_{\sigma_k}}{\sigma_k} \right)}{4n-2} } \right) \\
&\quad \leq O\left( \sqrt{ \frac{O(1/n^2) + O(1/n) + O(1/n)}{4n-2} } \right)
= O\left( \frac{1}{n} \right).
\end{align*}

The proof is completed.
\end{proof}

This corollary secures that variational inference forgetting would not compromise the generalizability.


In Section \ref{sec:mcmc_if}, we have shown that when the training sample size is sufficiently large, the posterior distribution $p(\theta|S)$ is asymptotically Gaussian.
Here, we again assume that $p(\theta|S) = \mathcal{N}(\theta_1,(nJ(\theta_1))^{-1})$, where $J(\theta)$ is the fisher information matrix.
Then, we obtain a generalization bound as follows.
\begin{theorem}
\label{thm:gen_mcmc}
Suppose all the conditions in Theorem \ref{thm:mcmc_if} hold.
Suppose that $p(\theta|S) = \mathcal{N}(\theta_1,(n J(\theta_1))^{-1})$.
Let $p^-=\mathcal{A}_{\mathrm{MCMC}}(p(\theta|S),z_j)$ denotes the processed distribution, where $z_j \in S$.
Let $\Delta_\theta = \mathcal{I}_{\mathrm{MCMC}}(z_j)$ denotes the MCMC influence function.
Then, for any $\delta \in (0,1)$, with probability at least $1-\delta$, the following inequality holds:
\begin{align}
    \mathcal{R}(p^-)
    \leq \mathcal{\hat R}(p^-,S)
         + \sqrt{\frac{C_{p,\Delta_{\theta_1}} + 2\log\frac{1}{\delta}+(d+2)\log n -d+ 4}{4n-2}},
    \label{equ:gaussian_mcmc_bound}
\end{align}
where
\begin{align*}
C_{p,\Delta_{\theta_1}}=\|\Delta_{\theta_1}\|^2 + 2\|\theta_1\| \cdot \|\Delta_{\theta_1}\|
+ \|\theta_1\|^2 + \frac{1}{n}\mathrm{tr}(J^{-1}(\theta_1)) + \log|J(\theta_1)|
= O(1),
\end{align*}
and $\|\Delta_{\theta_1}\| \leq O\left(\frac{1}{n}\right)$.
\end{theorem}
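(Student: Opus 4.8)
The plan is to apply a McAllester-type PAC-Bayesian bound with a fixed, data-independent Gaussian prior and then evaluate the resulting KL term in closed form, exploiting the fact that both the processed posterior and the prior are Gaussian. Concretely, I would use the bound stating that for any data-independent prior $P$ and any (possibly data-dependent) posterior $Q$, with probability at least $1-\delta$,
\begin{gather*}
\mathcal{R}(Q) \leq \mathcal{\hat R}(Q,S) + \sqrt{\frac{\mathrm{KL}(Q\|P) + \log\frac{1}{\delta} + \log n + 2}{2n-1}},
\end{gather*}
and take the prior to be the standard normal $P = \mathcal{N}(0, I)$, which is legitimate because it does not depend on $S$ while the processed distribution is allowed to.

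First I would identify the processed distribution explicitly. Since $p^- = \mathcal{A}_{\mathrm{MCMC}}(p(\theta|S),z_j) = p(\theta + \Delta_\theta \,|\, S)$ with $p(\theta|S) = \mathcal{N}(\theta_1, (nJ(\theta_1))^{-1})$ and $\Delta_\theta = \mathcal{I}_{\mathrm{MCMC}}(z_j)$, shifting the argument shows that $p^-$ is itself Gaussian, namely $p^- = \mathcal{N}(\theta_1 - \Delta_\theta, (nJ(\theta_1))^{-1})$. I would then compute $\mathrm{KL}(p^-\|P)$ via the closed-form KL divergence between two Gaussians, obtaining
\begin{gather*}
2\,\mathrm{KL}(p^-\|P) = \frac{1}{n}\mathrm{tr}(J^{-1}(\theta_1)) + \|\theta_1 - \Delta_\theta\|^2 - d + \log|nJ(\theta_1)|.
\end{gather*}
Expanding $\|\theta_1 - \Delta_\theta\|^2 = \|\theta_1\|^2 - 2\theta_1^T\Delta_\theta + \|\Delta_\theta\|^2$ and bounding the cross term by Cauchy--Schwarz, $-2\theta_1^T\Delta_\theta \leq 2\|\theta_1\|\,\|\Delta_\theta\|$, together with the split $\log|nJ(\theta_1)| = d\log n + \log|J(\theta_1)|$, yields $2\,\mathrm{KL}(p^-\|P) \leq C_{p,\Delta_{\theta_1}} - d + d\log n$. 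Substituting this into the PAC-Bayes bound after clearing the factor of two in numerator and denominator reproduces the claimed inequality, with denominator $4n-2$ and numerator $C_{p,\Delta_{\theta_1}} + 2\log\frac1\delta + (d+2)\log n - d + 4$.

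Finally I would verify the two order estimates. For $\|\Delta_{\theta_1}\| \leq O(1/n)$, I would observe that under the substitution of Theorem \ref{thm:mcmc_if} the influence $\Delta_\theta = \mathcal{I}_{\mathrm{MCMC}}(z_j)$ coincides with $\partial\hat\gamma^{-z_j}_S(0)/\partial\tau$, so the estimate follows directly from Theorem \ref{thm:1st_norm}. For $C_{p,\Delta_{\theta_1}} = O(1)$, the $\Delta$-dependent terms are of order $O(1/n)$ and $O(1/n^2)$ and hence vanish, while $\|\theta_1\|$, $\mathrm{tr}(J^{-1}(\theta_1))$, and $\log|J(\theta_1)|$ are bounded because $\theta_1$ lies in the compact region $\Gamma$ (Assumption \ref{ass:compact}) and $J$ is continuous and positive definite there.

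I expect the main obstacle to be the bookkeeping required to match the exact constant $C_{p,\Delta_{\theta_1}}$, and in particular confirming that the Fisher-information terms $\mathrm{tr}(J^{-1}(\theta_1))$ and $\log|J(\theta_1)|$ genuinely stay $O(1)$. This hinges on establishing uniform upper and lower bounds on the eigenvalues of $J(\theta_1)$ across the compact parameter region rather than any single pointwise estimate; without such uniform control the $\log$-determinant and trace-of-inverse contributions could degrade the claimed $O(1)$ order.
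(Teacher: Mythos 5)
Your proposal is correct and follows essentially the same route as the paper's own proof: the McAllester bound with prior $\mathcal{N}(0,I)$, the closed-form Gaussian KL divergence $\mathrm{KL}(p^-\|P)$ with $p^- = \mathcal{N}(\theta_1 - \Delta_{\theta_1}, (nJ(\theta_1))^{-1})$, the Cauchy--Schwarz bound on the cross term, and the order estimates via Theorem \ref{thm:1st_norm}. Your closing remark about needing uniform eigenvalue control of $J(\theta_1)$ on the compact region to justify $C_{p,\Delta_{\theta_1}} = O(1)$ is a fair observation, and if anything slightly more careful than the paper, which asserts this boundedness without elaboration.
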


\begin{remark}
This generalization bound is of order $O(\sqrt{(\log n) / n})$.
\end{remark}

%
%
%

\begin{corollary}
MCMC forgetting increases the generalization bound by a value not larger than $O(1/n)$.
\end{corollary}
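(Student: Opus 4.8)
The plan is to mirror the proof of the corresponding corollary for variational inference forgetting, using the generalization bound of Theorem \ref{thm:gen_mcmc} (eq. (\ref{equ:gaussian_mcmc_bound})) as the starting point. The key observation is that the only terms in that bound which are introduced by MCMC forgetting are those involving the MCMC influence function $\Delta_{\theta_1} = \mathcal{I}_{\mathrm{MCMC}}(z_j)$. Inspecting the constant $C_{p,\Delta_{\theta_1}}$, these are precisely $\|\Delta_{\theta_1}\|^2 + 2\|\theta_1\| \cdot \|\Delta_{\theta_1}\|$; the remaining pieces $\|\theta_1\|^2 + \frac{1}{n}\mathrm{tr}(J^{-1}(\theta_1)) + \log|J(\theta_1)|$, together with the logarithmic and constant terms in the numerator, are independent of $\Delta_{\theta_1}$ and therefore belong to the baseline bound obtained without forgetting (i.e.\ with $\Delta_{\theta_1}=0$).

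First I would isolate the increase as the difference between the bound with $\Delta_{\theta_1}$ and the bound with $\Delta_{\theta_1}=0$, both sharing the same denominator $4n-2$. Applying the elementary subadditivity inequality $\sqrt{a+b} - \sqrt{a} \leq \sqrt{b}$ for $a,b\geq 0$, the increase is upper bounded by
$$O\left( \sqrt{ \frac{\|\Delta_{\theta_1}\|^2 + 2\|\theta_1\| \cdot \|\Delta_{\theta_1}\|}{4n-2} } \right).$$
Next I would substitute the magnitude estimates supplied by Theorem \ref{thm:gen_mcmc}: since $\|\Delta_{\theta_1}\| \leq O(1/n)$ and $\|\theta_1\| = O(1)$, we have $\|\Delta_{\theta_1}\|^2 \leq O(1/n^2)$ and $2\|\theta_1\| \cdot \|\Delta_{\theta_1}\| \leq O(1/n)$. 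Hence the numerator is $O(1/n)$, and after dividing by $4n-2$ and taking the square root the increase becomes
$$O\left( \sqrt{ \frac{O(1/n)}{4n-2} } \right) = O\left( \sqrt{\frac{1}{n^2}} \right) = O\left(\frac{1}{n}\right),$$
which is the claimed bound.

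This calculation is essentially routine and structurally identical to the variational inference case; the MCMC setting is in fact slightly simpler, because the Fisher-information terms $\frac{1}{n}\mathrm{tr}(J^{-1}(\theta_1)) + \log|J(\theta_1)|$ do not depend on the influence function and hence produce no analogue of the logarithmic term that appears in the variational bound. The only point requiring any care is verifying that $\|\theta_1\|$ is genuinely $O(1)$ rather than growing with $n$, which is guaranteed by the compactness of the parameter support under Assumption \ref{ass:compact}. With that in hand, the dominant contribution to the increase comes from the cross term $2\|\theta_1\|\cdot\|\Delta_{\theta_1}\|$, which fixes the final rate at $O(1/n)$ and confirms that MCMC forgetting does not compromise generalizability.
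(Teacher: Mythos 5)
Your proposal is correct and follows essentially the same route as the paper's proof: isolating the $\Delta_{\theta_1}$-dependent terms $\|\Delta_{\theta_1}\|^2 + 2\|\theta_1\|\cdot\|\Delta_{\theta_1}\|$ in $C_{p,\Delta_{\theta_1}}$, and substituting $\|\Delta_{\theta_1}\| \leq O(1/n)$ to obtain the $O(1/n)$ increase. Your explicit appeal to the subadditivity inequality $\sqrt{a+b}-\sqrt{a}\leq\sqrt{b}$ merely spells out a step the paper leaves implicit, so no substantive difference exists.
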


\begin{proof}
MCMC forgetting introduces the term $\|\Delta_{\theta_1}\|$ into the generalization bound.
Thus, the generalization bound is increased by the following value:
\begin{gather*}
O\left( \sqrt{ \frac{\|\Delta_{\theta_1}\|^2 + 2\|\theta_1\| \cdot \|\Delta_{\theta_1}\|}{4n-2} } \right) \leq O\left( \sqrt{\frac{O(1/n^2) + O(1/n)}{4n-2}} \right) = O\left( \frac{1}{n} \right).
\end{gather*}

The proof is completed.
\end{proof}

This corollary secures that MCMC forgetting would not compromise the generalizability.


\section{Experiments}
\label{sec:experiments}

We apply our forgetting algorithm to a Gaussian mixture model and a Bayesian neural network.
In every scenario, we employ variational inference and two SGMCMC methods, SGLD and SGHMC.
The empirical results are in full agreement with our methods.
All the experiments are conducted on a computer with a GPU of NVIDIA\textsuperscript{\textregistered} GeForce\textsuperscript{\textregistered} RTX 2080 Ti, a CPU of Intel\textsuperscript{\textregistered} Core\textsuperscript{\texttrademark} i9-9900 @ 3.10GHz, and 32GB memory.
To secure reproducibility, the source code package is available at \url{https://github.com/fshp971/BIF}.

\subsection{Experiments for Gaussian Mixture Model}
\label{exp:gmm}

We first conduct experiments with GMM on Synthetic data to evaluate our methods.

\subsubsection{Implementation Details}

The implementation details are given below.

\textbf{Synthetic dataset.}
We generate a dataset $S$ of size $2,000$ for evaluating our algorithms. Every datum is two-dimensional and is possibly from $K$ classes. In this experiment, we set $K$ as $4$. The raw data is visualized in the left of fig. \ref{fig:gmm_raw}, in which four different colors represent four different classes.

{\textbf{Gaussian mixture model (GMM).}
GMMs are usually employed to inference the cluster centers.
GMM assumes data is drawn from $K$ Gaussian distributions centered at $\mu_1, \cdots, \mu_K$, respectively.
The hierarchical structure of GMM is as follows: (1) draw a clustering center from the uniform distribution over $\{\mu_{c_1}, \ldots, \mu_{c_K}\}$; and (2) sample $z_i$ from a Gaussian distribution centering at $\mu_{c_i}$.
\begin{align*}
    \mu_k &\sim \mathcal{N}(0, \sigma^2 I), \\
    c_i &\sim \mathrm{categorical}\left(\frac{1}{K}, \cdots, \frac{1}{K}\right), \\
    Z_i &\sim \mathcal{N}(\mu_{c_i}, I),
\end{align*}
where $1\leq k \leq K$, $1 \leq i \leq n$, $\mu_k \in \R^d$, $c_i \in \{1,\cdots,K\}$, $Z_i \in \R^d$, and the hyperparameter $\sigma \in \R$ is the prior standard deviation. 
We set $\sigma$ as $1$ in our experiments.
}

Applying SGLD and SGHMC to GMM is straightforward. Besides, for variational inference, we utilize the following mean-field variational family \citep{blei2017variational},
\begin{align*}
    q(\bm \mu, \bm c) &= \prod_{k=1}^K q(\mu_k) \prod_{i=1}^n q(c_i|\mu), \\
    \mu_k &\sim \mathcal{N}(m_k, s_k^2), \\
    q(c_i=k|\mu)
    &\propto \exp \left( \E_{q_{-c_i}} \log p(x_i|\mu, c_i) \right) \nonumber \\
    &= \exp \sum_j \left( x_{ij} m_{kj} - \frac{m_{kj}^2 + s_{kj}^2}{2} \right),
    \label{equ:gmm_svi}
\end{align*}
where $x_i, m_k, s_k \in \R^d$, and $\lambda=(m_1,\cdots,m_K, s_1, \cdots, s_K)$ is the variational distribution parameter. Thus, the ELBO function is calculated as follows,
\begin{align*}
\mathrm{ELBO}(\lambda,S)
&= -\sum_{k,j} \frac{m_{kj}^2+s_{kj}^2}{2\sigma^2} + \sum_{k,j} \log s_{kj}
- \sum_{i,k,j} \varphi_{ik} \frac{- 2 x_{ij} m_{kj} + m_{kj}^2 + s_{kj}^2}{2} \nonumber \\
& \ \ \ \ -\sum_{i,k} \varphi_{ik} \log \varphi_{ik}
 - \frac{1}{2}\sum_{i,j} x_{ij}^2 + \mathrm{Constant},
\end{align*}
where $1\leq i \leq n$, $1\leq k \leq K$, $1\leq j \leq d$, and
\begin{gather*}
\varphi_{ik} = q(c_i=k|\mu) \propto \exp \sum_j \left( x_{ij} m_{kj} - \frac{m_{kj}^2 + s_{kj}^2}{2} \right).
\end{gather*}


\begin{figure}[!t]
    \begin{subfigure}{0.49\linewidth}
        \center
        \includegraphics[width=0.49\linewidth]{./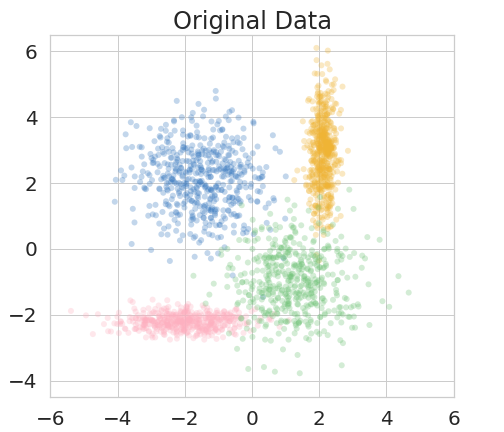}
        \includegraphics[width=0.49\linewidth]{./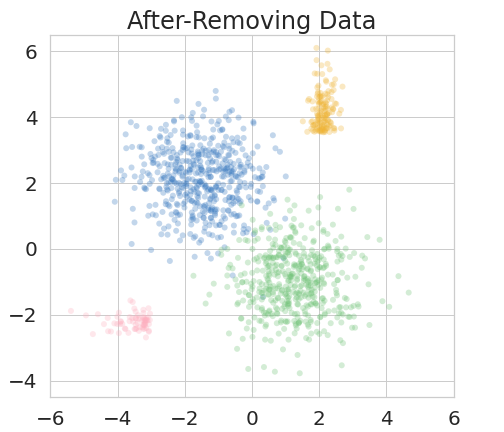}
        \caption{Visualization of dataset $S$.}
        \label{fig:gmm_raw}
    \end{subfigure}
    \begin{subfigure}{0.49\linewidth}
        \center
        \includegraphics[width=0.49\linewidth]{./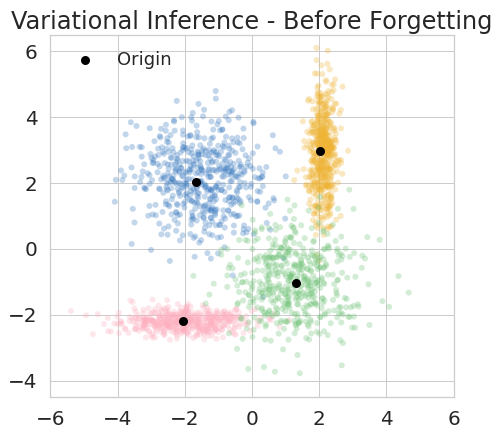}
        \includegraphics[width=0.49\linewidth]{./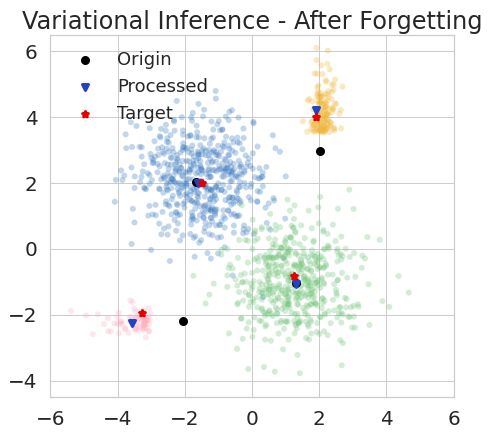}
        \caption{Results of variational inference.}
        \label{fig:gmm_svi}
    \end{subfigure}

    \begin{subfigure}{0.49\linewidth}
        \center
        \includegraphics[width=0.49\linewidth]{./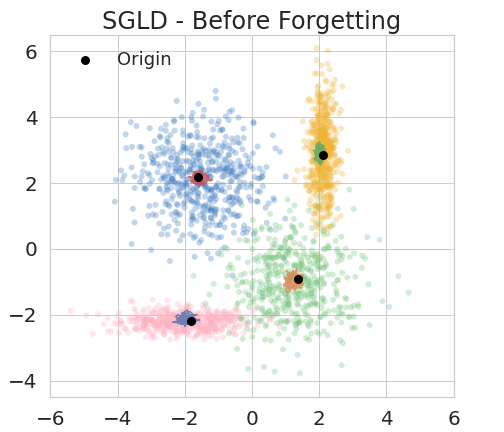}
        \includegraphics[width=0.49\linewidth]{./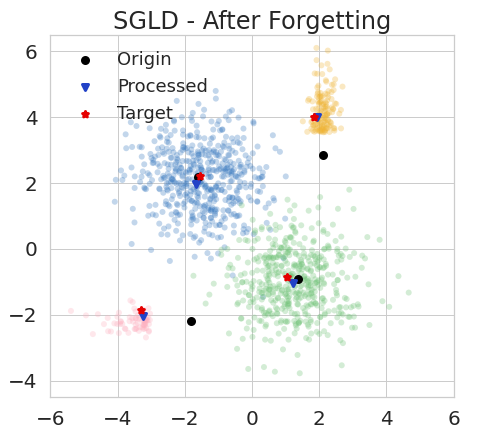}
        \caption{Results of SGLD.}
        \label{fig:gmm_sgld}
    \end{subfigure}
    \begin{subfigure}{0.49\linewidth}
        \center
        \includegraphics[width=0.49\linewidth]{./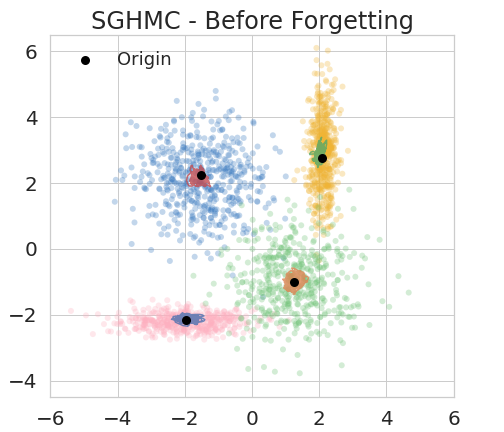}
        \includegraphics[width=0.49\linewidth]{./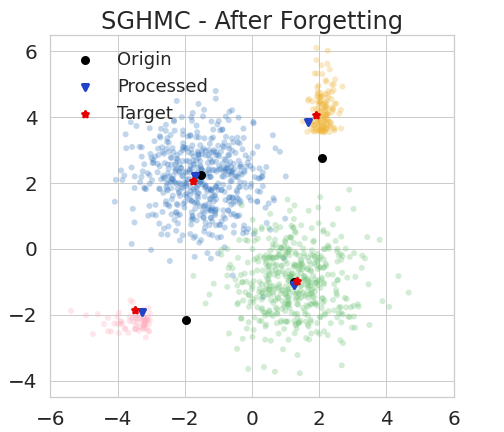}
        \caption{Results of SGHMC.}
        \label{fig:gmm_sghmc}
    \end{subfigure}
    \caption{
        Visualized results of GMM experiments. There are four groups of points in the synthetic dataset, where different groups are in different colors.
        The clustering results that trained on the complete dataset are shown in the left parts of figs. \ref{fig:gmm_svi}, \ref{fig:gmm_sgld}, \ref{fig:gmm_sghmc}.
        The results of the origin centers, processed centers, and target centers are drawn in black points, blue triangles, and red asterisks in the right parts of figs. \ref{fig:gmm_svi}, \ref{fig:gmm_sgld}, \ref{fig:gmm_sghmc}, where the origin centers are obtained via training on the complete set, the processed centers are obtained after BIF, and the target centers are obtained via training on only the remaining set.}
    \label{fig:gmm}
\end{figure}

\textbf{Experiment design.} We employ variational inference, SGLD, and SGHMC to inference GMM on the synthetic datums. Then, we remove $400$ points from each of the pink parts and yellow parts, around $40\%$ of the whole dataset at all, by the proposed BIF algorithms. The remaining datums are shown in the right of fig. \ref{fig:gmm_raw}. We also trained models on only the remaining set with the same Bayesian inference settings in order to show the targets of the forgetting task. The experiments have two main phrases: 

{
(1) {\it Training phase}. Every GMM is trained for $2,000$ iterations. The batch size is set as $64$. For variational inference, the learning rate is fixed to $2/n$, where $n$ is the training sample set size. For SGLD, the learning rate schedule is set as $4 \cdot t^{-0.15} / n$, where $t$ is the training iteration step. For SGHMC, the learning rate schedule is set as $2 \cdot t^{-0.15} / n$, and the initial $\alpha$ factor is set as $0.4$.


(2) {\it Forgetting phase}.
We remove a batch of $4$ datums each time.
When calculating the inversed-Hessian-vector product $H^{-1}v$ in the influence functions (see Section \ref{sec:implement}), the recursive calculation number $j$ is set as $32$, and the scaling factor $c$ is set as $1/n^\prime$, where $n^\prime$ is the number of the current remaining training datums. Notice that $n^\prime$ will gradually decrease as the forgetting process continuing.
Moreover, for SGLD and SGHMC, we employ Monte Carlo method to calculate the expectations in MCMC influence functions.
Specifically, we repeatedly sample model parameter $\theta$ for $5$ times, calculate the matrix or vector in MCMC influence functions, and average the results to approach the expectations.

}

\subsubsection{Results Analysis}

The empirical results are presented in figs. \ref{fig:gmm_svi}, \ref{fig:gmm_sgld}, \ref{fig:gmm_sghmc} respectively. In figures for variational inference, we draw the learned clustering centers as black points. In every figure for SGLD and SGHMC, (1) we draw the point drawn in the terminated iteration as a black point; and (2) we draw the points drawn in the last $500$ iterations in blue, orange, red, or green. The results obtained on the remaining parts (target) are also shown in these figures.
This visualization shows that after forgetting, the learned models are close to the target models,
which demonstrate that our forgetting algorithms can selectively remove specified datums while protecting others intact.

\subsection{Experiments for Bayesian Neural Network}
\label{sec:exp_bnn}

We then conduct experiments with Bayesian neural networks on real data.

\subsubsection{Implementation Details}

The implementation details are given below.

\textbf{Dataset.}
We employ Fashion-MNIST \citep{xiao2017/online} dataset in our experiments. It consists of $28 \times 28$ gray-scale images from $10$ different classes, where each class consists of $6,000$ training examples and $1,000$ test examples. For the data argumentation, we first resize each image to $32 \times 32$ and then normalize each pixel value to $[-0.5,0.5]$ before feeding them into BNN.
For the forgetting experiments, we divide the training set of Fashion-MNIST into two parts, the removed part $S_f$ and the remaining part $S_r$. Apparently, $S_f \cup S_r = S$. For the selection of the removed training set $S_f$, we randomly choose $1,000$, $2,000$, $3,000$, $4,000$, $5,000$ and $6,000$ examples from the class ``T-shirt'' ({\it i.e.}, examples that labeled with number $0$) in the training set to form $S_f$. For the brevity, we denote the test set by $S_{\text{test}}$.

\textbf{Bayesian neural network (BNN).}
BNNs employ Bayesian inference to inference the posterior of the neural networks parameters.
Two major Bayesian inference methods employed wherein are variational inference and SGMCMC.

For variational inference, one usually utilizes the mean-field Gaussian variational family \citep{blundell2015weight, kingma2015variational} to train BNNs. 
The {\it Bayes by Backprop} technique \citep{blundell2015weight} is utilized to calculate the derivative of ELBO function.
Specifically, for the random variable $\theta$ subject to the mean-field variational distribution $q_\lambda=\mathcal{N}(\mu,\sigma^2 I)$, we have that $(\theta-\mu)/\sigma \sim \mathcal{N}(0,I)$. Let $\varepsilon=(\theta-\mu)/\sigma$, then the derivative of the ELBO function can be varied as follows,
\begin{align*}
& \nabla_\lambda \mathrm{ELBO}(\lambda,S) \\
& \quad = \nabla_\lambda \E_{q_\lambda} \left(\log p(\theta,S)-q_\lambda(\theta) \right) \\
& \quad = \nabla_\lambda \E_{\varepsilon} \left(\log p(\theta,S)-q_\lambda(\theta) \right) \\
& \quad = \E_{\varepsilon} \nabla_\lambda \left(\log p(\theta,S)-q_\lambda(\theta) \right),
\end{align*}
Hence, we can calculate the derivative $\nabla_\lambda\mathrm{ELBO}(\lambda,S)$ in two steps: (1) repeatedly sample $\varepsilon$ from $\mathcal{N}(0,I)$ and calculate $\nabla_\lambda \left(\log p(\theta,S)-q_\lambda(\theta) \right)$ based on $\theta=\mu + \varepsilon \cdot \sigma$; and (2) average the obtained derivatives to approximate $\nabla_\lambda\mathrm{ELBO}(\lambda,S)$. Moreover, we also adopt the {\it local reparameterization trick} \citep{kingma2015variational} to further eliminate the covariances between the gradients of examples in a batch.

{
Bayesian LeNet-5 \citep{lecun1998gradient} is employed in our experiments, which consists of two convolutional layers and three fully-connected layers.
We follow {Liu \it et al.} \citet{liu2018adv} to use an isotropic Gaussian distribution $\mathcal{N}(0,\sigma^2 I)$ as the prior of BNN, where the standard deviation $\sigma$ is set as $0.15$.}


\textbf{Experiment design.} We employ variational inference, SGLD, and SGHMC to train the BNN on the complete training set $S$. Then, we remove the subset $S_f$ by the proposed BIF algorithms. We also trained models on only the remaining set $S_r$ in order to show the targets of the forgetting task. The experiments have two main phrases:

{
(1) {\it Training phase}. Every BNN is trained for $10,000$ iterations. The batch size is set as $128$. For variational inference, the learning rate is initialized as $0.5/n$, where $n$ is the training set size, and decay by $0.1$ every $4,000$ iterations.
Additionally, the sampling times to perform Bayes by Backprop procedure is set as $5$.
For SGLD, the step-size schedule is set as $0.5 \cdot t^{-0.5} / n$, where $t$ is the training iteration step. For SGHMC, the step-size schedule is set as $0.5 \cdot t^{-0.5} / n$, and the initial $\alpha$ factor is set as $0.4$.}


{
(2) {\it Forgetting phase}.
We remove a batch of $64$ datums each time.
When calculating the inversed-Hessian-vector product $H^{-1}v$ in influence functions (see Section \ref{sec:implement}), the recursive calculation number $j$ is set as $64$.
For variational inference, the scaling factor $c$ is set as $0.1/n^\prime$, where $n^\prime$ is the number of the current remaining datums.
Notice that $n^\prime$ will gradually decrease as the forgetting process continuing.
For SGLD and SGHMC, the scaling factors $c$ are set as $0.005/n^\prime$ and $0.05/n^\prime$, respectively.
Besides, we employ the Monte Carlo method to calculate the expectations in MCMC influence functions.
Specifically, we repeatedly sample model parameter $\theta$ for $5$ times, 
calculate the matrix and vector in MCMC influence function based on these sampled parameters,
and average the results to approach the desired expectation.}


\begin{figure}[!t]
    \begin{subfigure}{\linewidth}
        \includegraphics[width=0.31\linewidth]{./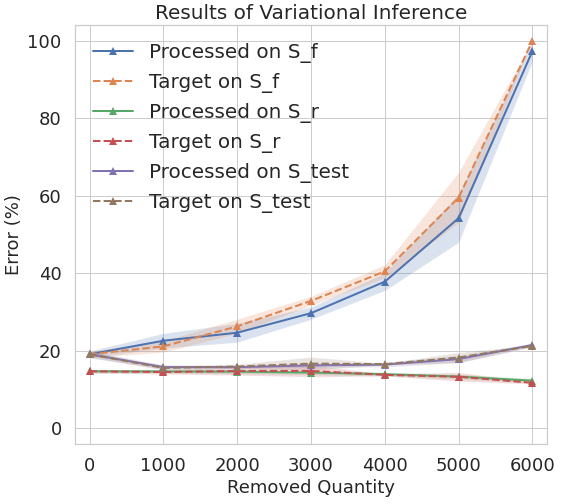}
        \hspace{0.5em}
        \includegraphics[width=0.31\linewidth]{./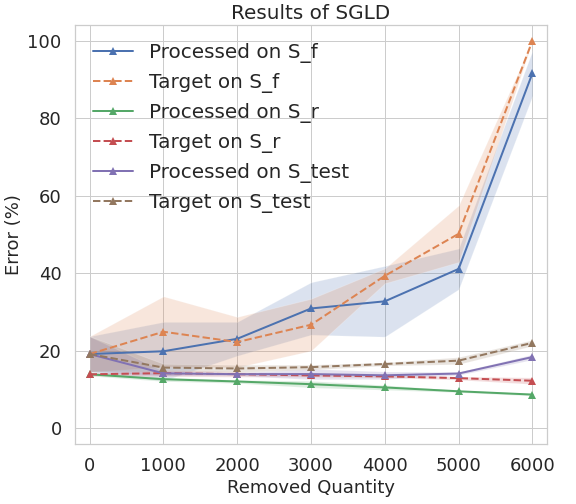}
        \hspace{0.5em}
        \includegraphics[width=0.31\linewidth]{./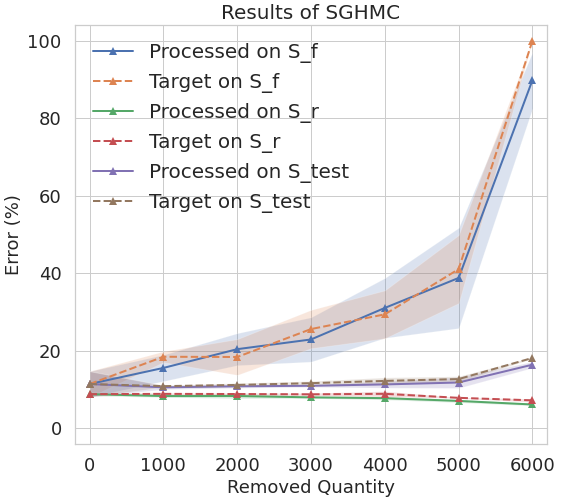}
    \end{subfigure}
    \caption{
        Curves of classification error to the number of the removed training datums. 
        The results of variational inference, SGLD, and SGHMC are presented from left to right, respectively. For each setting, three classification error curves on the forgetted set $S_f$, remained set $S_r$, and test set $S_{\text{test}}$ are plotted. The models are trained for $5$ times with different random seeds. The darker lines show the average over seeds and the shaded area shows the standard deviations. }
    \label{fig:exp_bnn}
\end{figure}

\begin{table}[t]
\centering
\caption{
Time of the training phase and the forgetting phase in the experiments for BNNs.
The acceleration rate is calculated by dividing ``training time'' with ``forgetting time''.
Our forgetting algorithms is significantly faster than re-training from scratch.}
 \begin{tabular}{c c c c}
\toprule
\multirow{2}{2.5cm}{} & \multirow{2}{3cm}{\centering Variational Inference} & \multirow{2}{3cm}{\centering SGLD} & \multirow{2}{3cm}{\centering SGHMC} \\ \\
\midrule
Training Time & $337.77$s & $39.36$s & $39.90$s \\
Removal Time & $4.97$s & $4.04$s & $4.04$s \\
\midrule
Acceleration Rate & $67.90$ & $9.75$ & $9.88$ \\
\bottomrule
\end{tabular}
\label{tab:fashion_time}
\end{table}

\subsubsection{Results Analysis}

For each of the obtained processed models and target models, we evaluate its classification errors on the sets $S_f$, $S_r$ and $S_{\text{test}}$, respectively. The results are collected and plotted in fig. \ref{fig:exp_bnn}.
We also collect and present the training and forgetting times in Table \ref{tab:fashion_time}.

From fig. \ref{fig:exp_bnn} and Table \ref{tab:fashion_time}, we have the following two observations:
(1) {the processed models} are similar to the target models in terms of the classification errors on all the three sample sets $S_f$, $S_r$, and $S_{\text{test}}$;
and (2) our forgetting algorithms are significantly faster than simply training models from scratch.
These phenomena demonstrate that our forgetting algorithms can effectively and efficiently remove influences of datums from BNNs without hurting other remaining information.

\section{Conclusion}
\label{sec:conclusion}
The right to be forgotten imposes a considerable compliance burden on AI companies. A company may need to delete the whole model learned from massive resources due to a request to delete a single datum. To address this problem, this work designs a {\it Bayesian inference forgetting} (BIF) that removes the influence of some specific datums on the learned model without completely deleting the whole model.
The BIF framework is established on an energy-based Bayesian inference influence function, which characterizes the influence of requested data on the learned models. We prove that BIF has an $\varepsilon$-certified knowledge removal guarantee, which is a new term on characterizing the forgetting performance. 
Under the BIF framework, forgetting algorithms are developed for two canonical Bayesian inference algorithms, variational inference and Markov chain Monte Carlo.
Theoretical analysis provides guarantees on the generalizability of the proposed methods: performing the proposed BIF has little effect on them.
Comprehensive experiments demonstrate that the proposed methods can remove the influence of specified datums without compromising the knowledge learned on the remained datums.
The source code package is available at \url{https://github.com/fshp971/BIF}.


\bibliography{BIF}

\appendix



\section{Proofs in Section \ref{sec:main_bif}}
\label{app:proof_bif_framework}

This section provides the missing proofs in Section \ref{sec:main_bif}.

\begin{proof}[Proof of Theorem \ref{thm:1st_norm}]
We first calculate $\frac{\partial\hat\gamma^{-z_j}_S(\tau)}{\partial\tau}$ based on eq. (\ref{equ:prob_diff}), {\it i.e.}, the following equation,
\begin{gather*}
    \nabla_{\gamma}F(\gamma,S) + \tau \cdot \nabla_{\gamma}h(\gamma,z_j) = 0.
\end{gather*}
Calculate the derivatives of the both sides of the above equation with respect to $\tau$, we have that
\begin{gather}
    \nabla^2_\gamma F(\hat\gamma^{-z_j}_S(\tau),S)\cdot\frac{\partial\hat\gamma^{-z_j}_S(\tau)}{\partial\tau} + \nabla_\gamma h(\hat\gamma^{-z_j}_S(\tau),z_j)^T + \tau \cdot \nabla^2_\gamma h(\hat\gamma^{-z_j}_S(\tau),z_j) \cdot \frac{\partial\hat\gamma^{-z_j}_S(\tau)}{\partial\tau} = 0.
    \label{thm:1st_norm:equ:diff_equ}
\end{gather}
By lemma \ref{lem:scox_eps}, $F_{-z_j,\tau}(\gamma,S)$ is strongly convex with respect to $\gamma$. Thus, the following Hessian matrix
\begin{gather*}
    \nabla^2_\gamma F_{-z_j,\tau}(\hat\gamma^{-z_j}_S(\tau),S)
    =\nabla^2_\gamma F(\hat\gamma^{-z_j}_S(\tau),S)
    +\tau\cdot\nabla^2_\gamma h(\hat\gamma^{-z_j}_S(\tau),z_j)
\end{gather*}
is positive definite, hence invertible. Combining with eq. (\ref{thm:1st_norm:equ:diff_equ}), we have that
\begin{align}
    \frac{\partial\hat\gamma^{-z_j}_S(\tau)}{\partial\tau}
    =& -\left(\nabla^2_\gamma F(\hat\gamma^{-z_j}_S(\tau),S) + \tau\cdot\nabla^2_\gamma h(\hat\gamma^{-z_j}_S(\tau),z_j)\right)^{-1}
    \cdot \nabla_\gamma h(\hat\gamma^{-z_j}_S(\tau),z_j)^T.
    \label{thm:1st_norm:equ:diff}
\end{align}

We then upper bound the norm of $\frac{\partial\hat\gamma^{-z_j}_S(\tau)}{\partial\tau}$. Based on eq. (\ref{thm:1st_norm:equ:diff}), we have that
\begin{align}
    \left\| \frac{\partial\hat\gamma^{-z_j}_S(\tau)}{\partial\tau} \right\|_2
    \leq& \left\| \left(\frac{1}{n}\nabla^2_\gamma F(\hat\gamma^{-z_j}_S(\tau),S) + \frac{\tau}{n}\nabla^2_\gamma h(\hat\gamma^{-z_j}_S(\tau),z_j)\right)^{-1} \right\|_2
    \cdot \left\| \frac{1}{n}\nabla_\gamma h(\hat\gamma^{-z_j}_S(\tau),z_j) \right\|_2.
    \label{thm:1st_norm:equ:upp_bd}
\end{align}

We first consider the first term of the right-hand side of eq. (\ref{thm:1st_norm:equ:upp_bd}).
By assumption \ref{ass:cont_diff_scox}, $f(\gamma)$ is $c_f$-strongly convex and $h(\gamma,z)$ is $c_h(z)$-strongly convex, where $c_f$ is a positive real number, $c_h(z)$ is a positive continuous real function on $\mathcal{Z}$. Thus we have that
\begin{align*}
    &\frac{1}{n}\nabla^2_\gamma F(\hat\gamma^{-z_j}_S(\tau),S) + \frac{\tau}{n}\nabla^2_\gamma h(\hat\gamma^{-z_j}_S(\tau),z_j) \\
    =& \frac{1}{n}\sum_{i=1}^n \nabla^2_\gamma h(\hat\gamma^{-z_j}_S(\tau),z_i) + \frac{1}{n}\nabla^2_\gamma f(\hat\gamma^{-z_j}_S(\tau)) + \frac{\tau}{n}\nabla^2_\gamma h(\hat\gamma^{-z_j}_S(\tau),z_j) \\
    =& \frac{1}{n}\sum_{z\in S-\{z_j\}} \nabla^2_\gamma h(\hat\gamma^{-z_j}_S(\tau),z) + \frac{1+\tau}{n} \nabla^2_\gamma h(\hat\gamma^{-z_j}_S(\tau),z_j) + \frac{1}{n} \nabla^2_\gamma f(\hat\gamma^{-z_j}_S(\tau)) \\
    \succeq& \left( \frac{1}{n} \sum_{z\in S-\{z_j\}} c_h(z) + \frac{1+\tau}{n}c_h(z_j) + \frac{c_f}{n} \right) I \\
    \succeq& \left( \frac{1}{n} \sum_{z\in S-\{z_j\}} c_h(z) \right) I.
\end{align*}
Applying Assumption \ref{ass:compact}, we have that $c_h(z)$ is continuous on the compact set $\mathcal{Z}$.
This suggests that there exists a real constant $\hat c_h>0$ such that $\forall z\in\mathcal{Z}$, $c_h(z)\geq \hat c_h$. Therefore, we further have that
\begin{gather*}
    \frac{1}{n}\nabla^2_\gamma F(\hat\gamma^{-z_j}_S(\tau),S) + \frac{\tau}{n}\nabla^2_\gamma h(\hat\gamma^{-z_j}_S(\tau),z_j)
    \succeq \left( \frac{1}{n} \sum_{z\in S-\{z_j\}} c_h(z) \right) I
    \succeq \left( \frac{n-1}{n} \hat c_h \right) I.
\end{gather*}
Let $\lambda_{\min}$ denotes the smallest eigenvalue of the matrix
$\left(\frac{1}{n}\nabla^2_\gamma F(\hat\gamma^{-z_j}_S(\tau),S) + \frac{\tau}{n}\nabla^2_\gamma h(\hat\gamma^{-z_j}_S(\tau),z_j)\right)$.
Then, the above inequality implies that
$\lambda_{\min} \geq \frac{n-1}{n} \hat c_h$.
Hence, we have the following,
\begin{gather}
\left\| \left(\frac{1}{n}\nabla^2_\gamma F(\hat\gamma_S(\tau),S) + \frac{\tau}{n}\nabla^2_\gamma h(\hat\gamma_S(\tau),z_j)\right)^{-1} \right\|_2
= \frac{1}{\lambda_{\min}}
\leq \frac{n}{(n-1) \hat c_h} = O(1).
\label{thm:1st_norm:equ:upp_term_1}
\end{gather}

We then upper bound the second term of the right-hand side of eq. (\ref{thm:1st_norm:equ:upp_bd}). 
Applying Assumptions \ref{ass:cont_diff_scox} and \ref{ass:compact}, we have that $\nabla^2_\gamma h(\gamma,z)$ is continuous on compact support $\Gamma\times\mathcal{Z}$.
This demonstrates that both $\nabla^2_\gamma h(\gamma,z)$ and $\left\|\nabla^2_\gamma h(\gamma,z)\right\|_2$ are bounded.
Therefore, as $n\to\infty$, we have
\begin{gather}
    \left\|\frac{1}{n}\sum_{i=1}^n \nabla^2_\gamma h(\hat\gamma_S(\tau),z_i)\right\|_2
    = \frac{1}{n} \left\|\sum_{i=1}^n \nabla^2_\gamma h(\hat\gamma_S(\tau),z_i)\right\|_2
    \leq O\left(\frac{1}{n}\right).
\label{thm:1st_norm:equ:upp_term_2}
\end{gather}

Finally, inserting eqs. (\ref{thm:1st_norm:equ:upp_term_1}) (\ref{thm:1st_norm:equ:upp_term_2}) into eq. (\ref{thm:1st_norm:equ:upp_bd}), we eventually have that
\begin{gather*}
    \left\| \frac{\partial\hat\gamma_S(\tau)}{\partial\tau} \right\|_2
    \leq O(1) \cdot O\left(\frac{1}{n}\right) = O\left(\frac{1}{n}\right).
\end{gather*}

The proof is completed.
\end{proof}

\begin{proof}[Proof of Theorem \ref{thm:2nd_norm}]
We first calculate $\frac{\partial^2\gamma^{-z_j}_S(\tau)}{\partial\tau^2}$ based on eq. (\ref{equ:prob_diff}). Similar to the proof of Theorem \ref{thm:1st_norm}, we calculate the second-order derivatives of the both sides of eq. (\ref{equ:prob_diff}) with respect to $\tau$ and have that
\begin{align*}
    &\sum_{i=1}^n B(\tau,z_i) + A(\tau)
    + \left(\sum_{i=1}^n \nabla^2_\gamma h(\hat\gamma^{-z_j}_S(\tau),z_i) + \nabla^2_\gamma f(\hat\gamma^{-z_j}_S(\tau))\right)\cdot\frac{\partial^2 \hat\gamma^{-z_j}_S(\tau)}{\partial\tau^2} \\
    &+ 2 \cdot \nabla^2_\gamma h(\hat\gamma^{-z_j}_S(\tau),z_j) \cdot \frac{\partial\hat\gamma^{-z_j}_S(\tau)}{\partial\tau}
    + \tau \cdot B(\tau,z_j)
    + \tau \cdot \nabla^2_\gamma h(\hat\gamma^{-z_j}_S(\tau),z_j) \cdot\frac{\partial^2 \hat\gamma^{-z_j}_S(\tau)}{\partial\tau^2}  = 0,
\end{align*}
which means
\begin{align}
    \frac{\partial^2 \hat\gamma^{-z_j}_S(\tau)}{\partial\tau^2}
    =& -\left(\frac{1}{n}\sum_{i=1}^n \nabla^2_\gamma F(\hat\gamma^{-z_j}_S(\tau),S) + \frac{\tau}{n}\nabla^2_\gamma h(\hat\gamma^{-z_j}_S(\tau),z_j)\right)^{-1} \nonumber \\
    &\cdot \left(\frac{1}{n}\sum_{i=1}^n B(\tau,z_i) + \frac{\tau}{n} B(\tau,z_j) + \frac{1}{n} A(\tau) + \frac{2}{n} \nabla^2_\gamma h(\hat\gamma^{-z_j}_S(\tau),z_j) \cdot \frac{\partial\hat\gamma^{-z_j}_S(\tau)}{\partial\tau} \right),
    \label{thm:2nd_norm:equ:eq_1}
\end{align}
where the invertibility of
$\left(\frac{1}{n}\sum_{i=1}^n \nabla^2_\gamma F(\hat\gamma^{-z_j}_S(\tau),S) + \frac{\tau}{n}\nabla^2_\gamma h(\hat\gamma^{-z_j}_S(\tau),z_j)\right)$
is guaranteed by Lemma \ref{lem:scox_eps}, $A(\tau), B(\tau,z) \in \R^{K\times 1}$, and for $i = 1, \cdots, K$, we have the following,
\begin{gather}
    A(\tau)_i = {\frac{\partial\hat\gamma^{-z_j}_S(\tau)}{\partial\tau}}^T \cdot \nabla^2_\gamma \left( \frac{\partial f(\hat\gamma^{-z_j}_S(\tau))}{\partial \gamma_i} \right) \cdot \frac{\partial\hat\gamma^{-z_j}_S(\tau)}{\partial\tau}, \label{thm:2nd_norm:equ:A_elem} \\
    B(\tau,z)_i = {\frac{\partial\hat\gamma^{-z_j}_S(\tau)}{\partial\tau}}^T \cdot \nabla^2_\gamma \left( \frac{\partial h(\hat\gamma^{-z_j}_S(\tau),z)}{\partial \gamma_i} \right) \cdot \frac{\partial\hat\gamma^{-z_j}_S(\tau)}{\partial\tau}.
\end{gather}

We then upper bound the norm of
$\frac{\partial^2 \hat\gamma^{-z_j}_S(\tau)}{\partial\tau^2}$.
Based on eq. (\ref{thm:2nd_norm:equ:eq_1}), we have that
\begin{align}
    &\left\| \frac{\partial^2 \hat\gamma^{-z_j}_S(\tau)}{\partial\tau^2} \right\|_2 \nonumber \\
    & \leq \left\| \left(\frac{1}{n}\sum_{i=1}^n \nabla^2_\gamma F(\hat\gamma^{-z_j}_S(\tau),S) + \frac{\tau}{n}\nabla^2_\gamma h(\hat\gamma^{-z_j}_S(\tau),z_j)\right)^{-1} \right\|_2 \nonumber \\
    & \ \ \ \ \cdot \frac{1}{n} \left(\sum_{i=1}^n \|B(\tau,z_i)\|_2 + \tau \cdot \|B(\tau,z_j)\|_2 + \|A(\tau)\|_2
    + 2 \cdot \left\| \nabla^2_\gamma h(\hat\gamma^{-z_j}_S(\tau),z_j)\cdot \frac{\partial\hat\gamma^{-z_j}_S(\tau)}{\partial\tau} \right\|_2 \right) \label{thm:2nd_norm:equ:upp_bd_mid} \\
    & \leq O\left( \frac{1}{n} \left(\sum_{i=1}^n \|B(\tau,z_i)\|_2 + \tau \cdot \|B(\tau,z_j)\|_2 + \|A(\tau)\|_2
    + 2 \cdot \left\| \nabla^2_\gamma h(\hat\gamma^{-z_j}_S(\tau),z_j)\cdot \frac{\partial\hat\gamma^{-z_j}_S(\tau)}{\partial\tau} \right\|_2 \right) \right).
    \label{thm:2nd_norm:equ:upp_bd}
\end{align}
where eq. (\ref{thm:2nd_norm:equ:upp_bd}) is obtained by inserting eq. (\ref{thm:1st_norm:equ:upp_term_1}) (in the proof of Theorem \ref{thm:1st_norm}) into eq. (\ref{thm:2nd_norm:equ:upp_bd_mid}).
Thus, the remaining is to upper bound the norms of $A(\tau)$, $B(\tau,z)$ and $\nabla^2_\gamma h(\hat\gamma^{-z_j}_S(\tau),z_j) \cdot \frac{\partial\hat\gamma^{-z_j}_S(\tau)}{\partial\tau}$.

We first upper bound $A(\tau)$. Applying Theorem \ref{thm:1st_norm}, we have that
$\left\|\frac{\partial\hat\gamma^{-z_j}_S(\tau)}{\partial\tau}\right\|_2 \leq O\left(\frac{1}{n}\right)$.
Applying Assumptions \ref{ass:cont_diff_scox} and \ref{ass:compact}, we have that $\nabla^2_\gamma \left( \frac{\partial f(\hat\gamma^{-z_j}_S(\tau))}{\partial \gamma_i} \right)$ is bounded on its support $\Gamma$.
As a result, the norm of $\left\| \nabla^2_\gamma \left( \frac{\partial f(\hat\gamma^{-z_j}_S(\tau))}{\partial \gamma_i} \right) \right\|_2$ is also bounded.
Therefore, we have that
\begin{gather}
    \|A(\tau)\|_2
    \leq \sum_{i=1}^K \left\| \nabla^2_\gamma \left( \frac{\partial f(\hat\gamma^{-z_j}_S(\tau))}{\partial \gamma_i} \right) \right\|_2 \cdot \left\| \frac{\partial\hat\gamma^{-z_j}_S(\tau)}{\partial\tau} \right\|_2^2
    \leq \sum_{i=1}^K O(1) \cdot O\left(\frac{1}{n^2}\right)
    = O\left(\frac{1}{n^2}\right).
    \label{thm:2nd_norm:equ:upp_term_A}
\end{gather}

For $B(\tau,z)$, we similarly have that
\begin{gather}
    \|B(\tau,z)\|_2 \leq O\left(\frac{1}{n^2}\right).
    \label{thm:2nd_norm:equ:upp_term_B}
\end{gather}

To upper bound the norm of $\nabla^2_\gamma h(\hat\gamma^{-z_j}_S(\tau),z_j) \cdot \frac{\partial\hat\gamma^{-z_j}_S(\tau)}{\partial\tau}$, we apply Theorem {\ref{thm:1st_norm}}, Assumptions \ref{ass:cont_diff_scox} and \ref{ass:compact}, and have that
\begin{align}
    & \left\| \nabla^2_\gamma h(\hat\gamma^{-z_j}_S(\tau),z_j) \cdot \frac{\partial\hat\gamma^{-z_j}_S(\tau)}{\partial\tau} \right\|_2 \nonumber \\
    &\quad \leq \left\| \nabla^2_\gamma h(\hat\gamma^{-z_j}_S(\tau),z_j) \right\|_2 \cdot \left\| \frac{\partial\hat\gamma^{-z_j}_S(\tau)}{\partial\tau} \right\|_2 \leq O(1) \cdot O\left(\frac{1}{n}\right)
    = O\left(\frac{1}{n}\right).
    \label{thm:2nd_norm:equ:upp_term_3}
\end{align}

Inserting eqs. (\ref{thm:2nd_norm:equ:upp_term_A}), (\ref{thm:2nd_norm:equ:upp_term_B}) and (\ref{thm:2nd_norm:equ:upp_term_3}), into eq. (\ref{thm:2nd_norm:equ:upp_bd}), we eventually have that
\begin{gather*}
    \left\| \frac{\partial^2 \hat\gamma^{-z_j}_S(\tau)}{\partial\tau^2} \right\|_2
    \leq O\left(\frac{1}{n} \left(
        \sum_{i=1}^n O\left(\frac{1}{n^2}\right) + \tau \cdot O\left(\frac{1}{n^2}\right) + O\left(\frac{1}{n^2}\right) + 2 \cdot O\left(\frac{1}{n}\right)
    \right) \right)
    = O\left(\frac{1}{n^2}\right).
\end{gather*}

The proof is completed.
\end{proof}

\section{Proofs in Section \ref{sec:bif_algorithm}}
\label{app:proof_bif_certified}

This section provides the detailed proofs in Section \ref{sec:bif_algorithm}.

\begin{proof}[Proof of Theorem \ref{thm:vi_certified}]
Let $\hat\lambda_S$, $\hat\lambda_{S-\{z_j\}}$ be the variational parameters learned on $S$ and $S-\{z_j\}$, respectively. Let $\hat\lambda^{-z_j}_S = \mathcal{A}_{\mathrm{VI}}(\hat\lambda_S,z_j) = \hat\lambda_S - \mathcal{I}_{\mathrm{VI}}(z_j)$ be the processed variational parameter. Then, to establish a certified knowledge removal guarantee for $\mathcal{A}_{\mathrm{VI}}(\hat\lambda_S,z_j)$, we need to bound the KL divergence $\mathrm{KL}(p_{\hat\lambda^{-z_j}}(\theta) \| p_{\hat\lambda_{S-\{z_j\}}}(\theta))$.

For the brevity, we denote that
\begin{gather*}
    p_{\hat\lambda^{-z_j}_S} = \mathcal{N}(\mu_1, \sigma_1^2 I),
\end{gather*}
where $\hat\lambda^{-z_j}_S=(\mu_{11},\cdots,\mu_{1d},\sigma_{11},\cdots,\sigma_{1d})$, $0 < M_1 \leq \sigma_{11},\cdots,\sigma_{1d} \leq M_2$, and
\begin{gather*}
    p_{\hat\lambda_{S-\{z_j\}}} = \mathcal{N}(\mu_2, \sigma_2^2 I),
\end{gather*}
where $\hat\lambda_{S-\{z_j\}}=(\mu_{21},\cdots,\mu_{2d},\sigma_{21},\cdots,\sigma_{2d})$, $0 < M_1 \leq \sigma_{21},\cdots,\sigma_{2d} \leq M_2$.

We then have the following (we assume that $\Theta=\R^d$),
\begin{align}
    \mathrm{KL}&(p_{\hat\lambda^{-z_j}_S}(\theta)\|p_{\hat\lambda_{S-\{z_j\}}}(\theta)) \nonumber \\
    &= \int_{\Theta} \log\left(\frac{p_{\hat\lambda^{-z_j}_S}(\theta)}{p_{\hat\lambda_{S-\{z_j\}}}(\theta)}\right) p_{\hat\lambda^{-z_j}_S}(\theta) \mathrm{d}\theta \nonumber \\
    &= -\frac{1}{2}\sum_{k=1}^d \int \left(\frac{(\theta_k-\mu_{1k})^2}{\sigma_{1k}^2} - \frac{(\theta_k-{\mu_{2k}})^2}{\sigma_{2k}^2}\right)p_{\hat\lambda^{-z_j}_S}(\theta_k)\mathrm{d}\theta_k - \sum_{k=1}^d \log \frac{\sigma_{1k}}{\sigma_{2k}} \nonumber \\
    &= -\frac{1}{2}\sum_{k=1}^d \left(1 - \frac{\sigma_{1k}^2}{\sigma_{2k}^2} - \frac{(\mu_{1k}-\mu_{2k})^2}{\sigma_{2k}^2} \right) - \sum_{k=1}^d \log \frac{\sigma_{1k}}{\sigma_{2k}} \nonumber \\
    &= \frac{1}{2}\sum_{k=1}^d \left(\frac{\sigma_{1k}^2-\sigma_{2k}^2}{\sigma_{2k}^2} + \frac{(\mu_{1k}-\mu_{2k})^2}{\sigma_{2k}^2} + 2\log\left( 1 + \frac{\sigma_{2k}-\sigma_{1k}}{\sigma_{1k}} \right) \right) \nonumber \\
    &\leq \frac{1}{2}\sum_{k=1}^d \left(\frac{|\sigma_{1k}+\sigma_{2k}|\cdot|\sigma_{1k}-\sigma_{2k}|}{\sigma_{2k}^2} + \frac{(\mu_{1k}-\mu_{2k})^2}{\sigma_{2k}^2} + 2\cdot\frac{|\sigma_{1k}-\sigma_{2k}|}{\sigma_{1k}} \right) \nonumber \\
    &\leq \frac{1}{2}\sum_{k=1}^d \left(\frac{2M_2|\sigma_{1k}-\sigma_{2k}|}{M_1^2} + \frac{(\mu_{1k}-\mu_{2k})^2}{M_1^2} + \frac{2|\sigma_{1k}-\sigma_{2k}|}{M_1} \right) \nonumber \\
    &= \frac{1}{2 M_1^2}\sum_{k=1}^d \left(2(M_1+M_2)|\sigma_{1k}-\sigma_{2k}| + (\mu_{1k}-\mu_{2k})^2 \right) \nonumber \\
    &\leq \frac{1}{2 M_1^2}\left(2(M_1+M_2)\|\hat\lambda^{-z_j}_S - \hat\lambda_{S-\{z_j\}}\|_1 + \|\hat\lambda^{-z_j}_S - \hat\lambda_{S-\{z_j\}}\|_2^2 \right) = \varepsilon_{\hat\lambda_S,z_j}. \label{thm:equ:vi_certified_1}
\end{align}
Therefore, $\mathcal{A}_{\mathrm{VI}}(\hat\lambda_S,z_j)$ performs $\varepsilon_{\hat\lambda_S,z_j}$-certified knowledge removal.

The proof is completed.
\end{proof}


\begin{proof}[Proof of Theorem \ref{thm:mcmc_infty_kl}]
Let the processed model be $p^{-z_j}(\theta) = \mathcal{A}_{\mathrm{MCMC}}(p(\theta|S),z_j)$. For the brevity, we denote that
\begin{gather*}
    p_1(\theta) = p^{-z_j}(\theta) = \mathcal{N}(\theta_1^\prime, (n J(\theta_1))^{-1}),
\end{gather*}
where $\theta_1^\prime = \theta_1-\mathcal{I}_{\mathrm{MCMC}}(z_j)$, and
\begin{gather*}
    p_2(\theta) = p(\theta|S-\{z_j\}) = \mathcal{N}(\theta_2, ((n-1) J(\theta_2))^{-1}).
\end{gather*}
Then, to establish the certified removal guarantee, we need to calculate the KL divergence $\mathrm{KL}(p_1(\theta)\|p_2(\theta))$.


Since
\begin{gather*}
    p_1(\theta)=\frac{1}{\sqrt{(2\pi)^d |nJ(\theta_1)|^{-1}}} \exp\left(-\frac{1}{2} (\theta-\theta_1^\prime)^T (nJ(\theta_1))(\theta-\theta_1^\prime) \right), \\
    p_2(\theta)=\frac{1}{\sqrt{(2\pi)^d |(n-1)J(\theta_2)|^{-1}}} \exp\left(-\frac{1}{2} (\theta-\theta_2)^T ((n-1)J(\theta_2))(\theta-\theta_2) \right),
\end{gather*}
we have that (we assume that $\Theta = \R^d$)
\begin{align*}
    \mathrm{KL}&(p_1(\theta) \| p_2(\theta)) \\
    =& \int_{\Theta} \log \left( \frac{p_1(\theta)}{p_2(\theta)} \right) p_1(\theta) \mathrm{d} \theta \\
    =& -\frac{1}{2} \int_{\Theta}\left( (\theta-\theta_1^\prime)^T (nJ(\theta_1))(\theta-\theta_1^\prime) - (\theta-\theta_2)^T ((n-1)J(\theta_2))(\theta-\theta_2)\right) p_1(\theta) \mathrm{d}\theta \\
    &- \frac{1}{2}\log\frac{|nJ(\theta_1)|^{-1}}{|(n-1)J(\theta_2)|^{-1}} \\
    =& -\frac{1}{2} \left(\mathrm{tr}(nJ(\theta_1)(nJ(\theta_1))^{-1}) - \mathrm{tr}((n-1)J(\theta_2)(nJ(\theta_1))^{-1}) \right. \\
    & \hspace{3em} \left. - (\theta_1^\prime-\theta_2)^T ((n-1)J(\theta_2))(\theta_1^\prime-\theta_2)\right)
    +\frac{1}{2}\left(\frac{n}{n-1}\right)^d \log\frac{|J(\theta_1)|}{|J(\theta_2)|} \\
    =& \frac{1}{2} \left((n-1) (\theta_1^\prime-\theta_2)^T J(\theta_2)(\theta_1^\prime-\theta_2) +\frac{n-1}{n}\mathrm{tr}(J(\theta_2)J^{-1}(\theta_1)) -d 
    +\left(\frac{n}{n-1}\right)^d \log\frac{|J(\theta_1)|}{|J(\theta_2)|} \right) \\
    =& O\left( (n-1) (\theta_1^\prime-\theta_2)^T J(\theta_2)(\theta_1^\prime-\theta_2) +\mathrm{tr}\left(J^{-1}(\theta_1)(J(\theta_2)-J(\theta_1))\right) +\log\frac{|J(\theta_1)|}{|J(\theta_2)|} \right) \\
    =& O(\varepsilon_{\theta_1,z_j}).
\end{align*}
which means that $\mathrm{KL}(p^{-z_j}(\theta)\|p(\theta|S-\{z_j\}))=O(\varepsilon_{\theta_1,z_j})$. Therefore, $\mathcal{A}_{\mathrm{MCMC}}(p(\theta|S),z_j)$ performs $O(\varepsilon_{\theta_1,z_j})$-certified knowledge removal.

The proof is completed.
\end{proof}

\section{Proofs in Section \ref{sec:generalization}}
\label{app:generalization}
This section provides the detailed proofs in Section \ref{sec:generalization}.

We derive generalization bounds for BIF under the PAC-Bayesian framework \citep{mcallester1998some,mcallester1999pac,mcallester2003pac}. The framework can provide guarantees for randomized predictors (e.g. Bayesian predictors).

Specifically, let $Q$ a distribution on the parameter space $\Theta$, $P$ denotes the prior distribution over the parameter space $\Theta$. Then, the expected risks $\mathcal{R}(Q)$ is bounded in terms of the empirical risk $\mathcal{\hat R}(Q,S)$ and KL-divergence $\mathrm{KL}(Q\|P)$ by the following result from PAC-Bayes.

\begin{lemma}[cf. \citet{mcallester2003pac}, Theorem 1]
\label{lem:pac_bayes_gen}
For any real $\delta \in (0,1)$, with probability at least $1-\delta$, we have the following inequality for all distributions $Q$:
\begin{gather}
    \mathcal{R}(Q) \leq \mathcal{\hat R}(Q,S) + \sqrt{\frac{\mathrm{KL}(Q\|P)+\log \frac{1}{\delta} + \log n + 2}{2n-1}}. \label{lem:equ:pac_bayes_gen}
\end{gather}
\end{lemma}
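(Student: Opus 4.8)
The statement is a restatement of the classical PAC--Bayes inequality, so the plan is to reproduce its standard derivation while tracking the constants carefully enough to land on the stated form with denominator $2n-1$ and additive $\log n + 2$. The three ingredients are a change-of-measure step, a sharp exponential-moment estimate for a fixed, data-independent prior $P$, and an inversion through Pinsker's inequality.

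First I would fix $P$ (independent of $S$) and, for each hypothesis $h$, regard $\mathcal{\hat R}(h,S)=\frac1n\sum_{i=1}^n\ell(h,z_i)$ as an average of $n$ i.i.d.\ variables in $[0,1]$ with mean $\mathcal{R}(h)$. The core estimate is a concentration bound for the binary relative entropy $\mathrm{kl}(\mathcal{\hat R}(h,S)\,\|\,\mathcal{R}(h))$: for bounded losses one has $\E_S\exp\!\big(c_n\,\mathrm{kl}(\mathcal{\hat R}(h,S)\,\|\,\mathcal{R}(h))\big)\le \xi_n$ for an exponent $c_n$ of order $n$ and a mild polynomial factor $\xi_n$ of order $\sqrt n$. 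Because $P$ does not depend on $S$, Fubini gives $\E_S\,\E_{h\sim P}\exp(\cdots)\le \xi_n$.

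The transfer from $P$ to an arbitrary posterior $Q$ is the Donsker--Varadhan (change-of-measure) inequality $\E_{h\sim Q}[g(h)]\le \mathrm{KL}(Q\|P)+\log\E_{h\sim P}[e^{g(h)}]$. Taking $g(h)=c_n\,\mathrm{kl}(\mathcal{\hat R}(h,S)\,\|\,\mathcal{R}(h))$ and applying Markov's inequality to the nonnegative quantity $\E_{h\sim P}[e^{g(h)}]$ yields, with probability at least $1-\delta$,
\begin{gather*}
c_n\,\E_{h\sim Q}\,\mathrm{kl}\big(\mathcal{\hat R}(h,S)\,\|\,\mathcal{R}(h)\big)\le \mathrm{KL}(Q\|P)+\log(\xi_n/\delta).
\end{gather*}
Jensen's inequality (joint convexity of $\mathrm{kl}$) pulls the posterior average inside, producing $\mathrm{kl}(\mathcal{\hat R}(Q,S)\,\|\,\mathcal{R}(Q))$ on the left, and Pinsker's inequality $\mathrm{kl}(a\|b)\ge 2(b-a)^2$ then replaces it by $2(\mathcal{R}(Q)-\mathcal{\hat R}(Q,S))^2$. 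Solving the resulting quadratic for $\mathcal{R}(Q)$ gives the square-root form, and a suitable choice of $c_n$ fixes the $2n-1$ in the denominator.

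The change of measure, Markov, Jensen, and the Pinsker inversion are all routine. The only place where real work hides---and hence the main obstacle---is the exponential-moment estimate with the correct exponent $c_n$ and polynomial factor $\xi_n$, since this is precisely what determines the $2n-1$ in the denominator and the $\log n + 2$ in the numerator; I would obtain it from the standard Chernoff/Hoeffding bound on the moment generating function of $\mathrm{kl}$ for sums of bounded i.i.d.\ variables, or simply invoke it from \citet{mcallester2003pac}, since the lemma is already stated as a citation.
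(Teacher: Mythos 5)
The paper contains no proof of this lemma: it is imported verbatim from \citet{mcallester2003pac} (Theorem~1) and used as a black box in Appendix~\ref{app:generalization}, so there is no internal argument to compare against. Your reconstruction is the standard proof of that theorem, and its skeleton is correct: a data-independent prior so that Fubini applies, the Donsker--Varadhan change of measure, Markov's inequality applied to the nonnegative random variable $\E_{h\sim P}[e^{g(h)}]$ (correctly placed \emph{before} quantifying over $Q$, which is what makes the bound uniform in $Q$), Jensen via joint convexity of $\mathrm{kl}$, and the Pinsker-type inequality $\mathrm{kl}(a\|b)\geq 2(a-b)^2$. The only substantive issue is the one you flag yourself: the constants. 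Running your $\mathrm{kl}$ route with Maurer's moment bound $\E_S\,e^{n\,\mathrm{kl}(\mathcal{\hat R}(h,S)\|\mathcal{R}(h))}\leq 2\sqrt{n}$ gives denominator $2n$ and numerator $\mathrm{KL}(Q\|P)+\log\frac{1}{\delta}+\frac{1}{2}\log n+\log 2$, which is slightly tighter than the stated bound and hence implies it (since $2n\geq 2n-1$ and $\frac{1}{2}\log n+\log 2\leq\log n+2$); alternatively, taking your exponent $c_n=n-\frac{1}{2}$ is legitimate because $\mathrm{kl}\geq 0$ makes the moment bound monotone in the exponent, and this yields $2n-1$ directly.

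The exact constants $\log n+2$ fall out most cleanly if, as in McAllester's own derivation, the exponential moment is taken of the squared deviation rather than of $\mathrm{kl}$: integrating the Hoeffding tail $\P\left(|\mathcal{\hat R}(h,S)-\mathcal{R}(h)|>\sqrt{\varepsilon}\right)\leq 2e^{-2n\varepsilon}$ gives
\begin{gather*}
\E_S\,e^{(2n-1)(\mathcal{\hat R}(h,S)-\mathcal{R}(h))^2}
\leq 1+\int_0^\infty (2n-1)\,e^{(2n-1)\varepsilon}\cdot 2e^{-2n\varepsilon}\,\mathrm{d}\varepsilon
= 4n-1 \leq e^2 n,
\end{gather*}
so after change of measure, Markov, and Jensen (convexity of the square replacing your Pinsker step) one obtains $(2n-1)\left(\mathcal{R}(Q)-\mathcal{\hat R}(Q,S)\right)^2\leq \mathrm{KL}(Q\|P)+\log\frac{1}{\delta}+\log n+2$ with probability $1-\delta$, which is exactly the statement. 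In short: your plan closes, and its only hand-wavy sentence (``a suitable choice of $c_n$ fixes the $2n-1$'') is repaired by either of the two observations above.
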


Based on Lemma \ref{lem:pac_bayes_gen}, we prove the generalization bounds in Theorem \ref{thm:gen_if} and Theorem \ref{thm:gen_mcmc}.

\begin{proof}[proof of Theorem \ref{thm:gen_if}]
Let the prior distribution $P$ be the standard Gaussian distribution $\mathcal{N}(0,I)$, $Q$ be the distribution that obtained after conducting variational inference BIF. Suppose the density functions of $P$, $Q$ are $p(\theta)$, $q(\theta)$ respectively, then we have
\begin{gather*}
    p(\theta)=\frac{1}{\sqrt{(2\pi)^d}} \exp\left(-\frac{\|\theta\|^2}{2}\right), \\
    q(\theta)=\frac{1}{\sqrt{(2\pi)^d} \prod_{k=1}^d \sigma_k^\prime} \exp\left(-\sum_{k=1}^d  \frac{(\theta_k-{\mu_k^\prime})^2}{2{\sigma_k^\prime}^2} \right),
\end{gather*}
where $\mu_k^\prime = \mu_k-\Delta_{\mu_k}$ and $\sigma_k^\prime = \sigma_k-\Delta_{\sigma_k}$.

Therefore, we can calculate the KL-divergence $\mathrm{KL}(Q\|P)$ as follows (where we assume that $\Theta=\R^d$),
\begin{align}
    \mathrm{KL}&(Q\|P) \nonumber \\
    &=\int_\Theta \log\left(\frac{q(\theta)}{p(\theta)}\right) q(\theta) \mathrm{d}\theta \nonumber \\
    &= -\frac{1}{2}\sum_{k=1}^d\int_{\Theta}\left(\frac{(\theta_k-\mu_k^\prime)^2}{{\sigma_k^\prime}^2} - \theta_k^2 \right) q(\theta)\mathrm{d}\theta -\sum_{k=1}^d\log{\sigma_k^\prime} \nonumber \\
    &= -\frac{1}{2}\sum_{k=1}^d\left(1-{\mu_k^\prime}^2-{\sigma_k^\prime}^2\right) - \sum_{k=1}^d\log{\sigma_k^\prime} \nonumber \\
    &= \frac{1}{2}\left( \|\lambda - \Delta_\lambda\|^2 - d - 2\sum_{k=1}^d\log{\sigma_k^\prime} \right) \nonumber \\
    &\leq \frac{1}{2}\left( \|\Delta_\lambda\|^2 + 2 \|\lambda\| \cdot \|\Delta_\lambda\| + \|\lambda\|^2 - 2 \sum_{k=1}^d\log\left(\sigma_k-\Delta_{\sigma_k}\right) - d \right) \nonumber \\
    &= \frac{1}{2} \left( C_{\lambda, \Delta_\lambda} - d \right),
    \label{thm:equ:vi_kl}
\end{align}
where
\begin{gather*}
C_{\lambda,\Delta_\lambda}=\|\Delta_\lambda\|^2 + 2 \|\lambda\| \cdot \|\Delta_\lambda\| + \|\lambda\|^2 - 2 \sum_{k=1}^d\log\left(\sigma_k-\Delta_{\sigma_k}\right).
\end{gather*}
Eq. (\ref{thm:equ:vi_kl}) gives an upper bound of the KL-divergence between the obtained distribution after BIF and the prior distribution. Inserting eq. (\ref{thm:equ:vi_kl}) into eq. (\ref{lem:equ:pac_bayes_gen}) in Lemma \ref{lem:pac_bayes_gen}, we obtain the PAC-Bayesian generalization bound.

Furthermore, since all the conditions in Theorem \ref{thm:vi_bif} hold, we can apply Theorem \ref{thm:1st_norm} and have that $|\Delta_{\sigma_k}| \leq \|\Delta_{\lambda}\| \leq O(1/n)$. Besides, by applying the conditions in Theorem \ref{thm:vi_certified}, we have that for any $1\leq k \leq d$, $\sigma_k \geq M_1 > 0$, where $M_1$ is a constant. Therefore, we have that
\begin{align*}
C_{\lambda,\Delta_\lambda}
\leq& \|\Delta_\lambda\|^2 + 2 \|\lambda\| \cdot \|\Delta_\lambda\| + \|\lambda\|^2 - 2d\log\left(M_1 - \|\Delta_\lambda\| \right) \\
=& O\left( \log\left(M_1 - \|\Delta_\lambda\|\right) \right) \\
\leq& O(1).
\end{align*}

The proof is completed.
\end{proof}

\begin{proof}[proof of Theorem \ref{thm:gen_mcmc}]
Let the prior distribution $P$ be the standard Gaussian distribution $\mathcal{N}(0,I)$, $Q$ be the processed distribution that obtained after conducting MCMC BIF. Suppose the density functions of $P$, $Q$ are $p(\theta)$, $q(\theta)$ respectively, then we have
\begin{gather*}
    p(\theta)=\frac{1}{\sqrt{(2\pi)^d}} \exp\left(-\frac{\|\theta\|^2}{2}\right), \\
    q(\theta)=\frac{1}{\sqrt{(2\pi)^d |nJ(\theta_1)|^{-1}}} \exp\left(-\frac{1}{2} (\theta-\theta_1^\prime)^T (nJ(\theta_1))(\theta-\theta_1^\prime) \right),
\end{gather*}
where $\theta_1^\prime = \theta_1 - \Delta_{\theta_1}$.

Thus, we can calculate the KL-divergence $\mathrm{KL}(Q\|P)$ as follows (where we assume that $\Theta=\R^d$),
\begin{align}
    \mathrm{KL}&(Q\|P) \nonumber \\
    &=\int_\Theta \log\left(\frac{q(\theta)}{p(\theta)}\right) q(\theta) \mathrm{d}\theta \nonumber \\
    &= -\frac{1}{2}\left(\int_{\Theta}\left((\theta-\theta_1^\prime)^T(nJ(\theta_1))(\theta-\theta_1^\prime)-\|\theta\|^2\right)q(\theta)\mathrm{d}\theta -\log\left|nJ(\theta_1)\right| \right) \nonumber \\
    &= -\frac{1}{2}\left(\mathrm{tr}(I) - \mathrm{tr}\left((nJ(\theta_1))^{-1}\right) - \|\theta_1^\prime\|^2 - \log \left|nJ(\theta_1)\right| \right) \nonumber \\
    &= \frac{1}{2}\left(\|\theta_1^\prime\|^2 + \frac{1}{n}\mathrm{tr}(J^{-1}(\theta_1)) - d + \log\left(n^d |J(\theta_1)| \right) \right) \nonumber \\
    &\leq \frac{1}{2}\left(\|\Delta_{\theta_1}\|^2 + 2\|\theta_1\| \cdot \|\Delta_{\theta_1}\| +  \|\theta_1\|^2 + \frac{1}{n}\mathrm{tr}(J^{-1}(\theta_1)) + \log|J(\theta_1)| + d(\log n - 1)  \right) \nonumber \\
    &= \frac{1}{2} \left( C_{p,\Delta_{\theta_1}} + d(\log n - 1) \right),
    \label{thm:equ:mcmc_kl}
\end{align}
where
\begin{gather*}
C_{p,\Delta_{\theta_1}}=\|\Delta_{\theta_1}\|^2 + 2\|\theta_1\| \cdot \|\Delta_{\theta_1}\| + \|\theta_1\|^2 + \frac{1}{n}\mathrm{tr}(J^{-1}(\theta_1)) + \log|J(\theta_1)|.
\end{gather*}
Eq. (\ref{thm:equ:mcmc_kl}) bounds the KL-divergence between the processed distribution after conducting BIF and the prior distribution. Inserting eq. (\ref{thm:equ:mcmc_kl}) into eq. (\ref{lem:equ:pac_bayes_gen}) in Lemma \ref{lem:pac_bayes_gen}, we then obtain the PAC-Bayesian generalization bound.

Eventually, since all the conditions in Theorem \ref{thm:mcmc_if} hold, we can apply Theorem \ref{thm:1st_norm} and have that $\|\Delta_{\theta_1}\| \leq O(1/n)$. Therefore, we have the following,
\begin{gather*}
C_{p,\Delta_{\theta_1}}= O\left( \|\theta_1\|^2 + \frac{1}{n}\mathrm{tr}(J^{-1}(\theta_1)) + \log|J(\theta_1)| \right) = O(1).
\end{gather*}

The proof is completed.
\end{proof}

\end{document}